\title{Contextual Online Decision Making with Infinite-Dimensional Functional Regression}
\author[1,4]{Haichen Hu\thanks{{Email: \texttt{huhc@mit.edu}}}} 
\author[2]{Rui Ai\thanks{{Email:\texttt{ruiai@mit.edu}}}} 
\author[3]{Stephen Bates\thanks{{Email:\texttt{stephenbates@mit.edu}}}}
\author[2,4]{David Simchi-Levi\thanks{{Email:\texttt{dslevi@mit.edu}}}}
\affil[1]{Center for Computational Science and Engineering, MIT} 
\affil[2]{Institute for Data, Systems, and Society, MIT} 
\affil[3]{Department of EECS, MIT}
\affil[4]{Department of Civil and Environmental Engineering, MIT}
\date{}
\begin{document}
\maketitle
\begin{abstract}
    Contextual sequential decision-making problems play a crucial role in machine learning, encompassing a wide range of downstream applications such as bandits, sequential hypothesis testing and online risk control. These applications often require different statistical measures, including expectation, variance and quantiles.
    In this paper, we provide a universal admissible algorithm framework for dealing with all kinds of contextual online decision-making problems that directly learns the whole underlying unknown distribution instead of focusing on individual statistics. This is much more difficult because the dimension of the regression is uncountably infinite, and any existing linear contextual bandits algorithm will result in infinite regret. To overcome this issue, we propose an efficient infinite-dimensional functional regression oracle for contextual cumulative distribution functions (CDFs), where each data point is modeled as a combination of context-dependent CDF basis functions.
Our analysis reveals that the decay rate of the eigenvalue sequence of the design integral operator governs the regression error rate and, consequently, the utility regret rate. Specifically, when the eigenvalue sequence exhibits a polynomial decay of order $\frac{1}{\gamma}\ge 1$, the utility regret is bounded by $\tilde{\mathcal{O}}\Big(T^{\frac{3\gamma+2}{2(\gamma+2)}}\Big)$. By setting $\gamma=0$, this recovers the existing optimal regret rate for contextual bandits with finite-dimensional regression and is optimal under a stronger exponential decay assumption. Additionally, we provide a numerical method to compute the eigenvalue sequence of the integral operator, enabling the practical implementation of our framework.
\end{abstract}

\section{Introduction}\label{sec:intro}
Contextual sequential online experimentation has been overwhelmingly important for online platforms, healthcare companies, and other businesses~\citep{tewari2017ads,saha2020improved,beygelzimer2011contextual,avadhanula2022bandits}.
The tasks that a decision maker (DM) might face range widely, and different tasks lead to various adaptive algorithm designs. For example, for maximizing the total reward, there are all kinds of contextual bandit algorithms~\citep{chu2011contextual}. For minimizing the total hypothesis testing error, we have online hypothesis testing algorithms~\citep{wei2007passive}. To rectify a pre-trained machine learning model, many online calibration algorithms have been developed~\citep{fasiolo2021fast}.

Some common structures appear to be obscured behind these examples. At each round, the DM first receives some context, such as past consumption records or symptoms in a healthcare setting. Then, the DM chooses an action to apply according to this context and his objective. For example, in online hypothesis testing, the actions could be ``reject'' and ``accept'', while in contextual bandits, the action is to choose and pull an arm. Commonly, there is an underlying distribution $P_{x,a}$ associated with every context $x$ and action $a$. The DM makes decisions based on the estimation of its various statistics. In the context of bandits, the primary statistic of interest is the expectation, whereas in risk control, attention is directed toward variance~\citep{li2022novel}. Please refer to \citet{ayala2024risk,bouneffouf2016contextual,sun2016risk,li2019sequential,han2021truthful,kong2024peer} for more examples.

The essential difficulty behind all these online decision-making examples is the uncertainty of the outcomes under different actions. In regret minimization, the decision maker is not sure about the mean reward, while in risk-aware bandits, he is uncertain about both the mean and variance. Moreover, in online quantile calibration, the key concern turns to be the unknown quantile function. See ~\citet{gupta2021online,bastani2022practical} for example.

Note that no matter what kind of statistic we aim to know, as long as we learn the distribution $P_{x,a}$, the decision maker knows exactly how to make the decision. Therefore, it raises a question,

\begin{center}
\textit{
Is it possible to consider a more general (robust and adaptive) online decision-making setting based on infinite-dimensional functional regression?
}
\end{center}
Specifically, we are looking for a general solution with infinite-dimensional functional regression of cumulative distribution functions (CDFs).

Our results are presented in the following structure. In \Cref{sec:model}, we introduce our general decision-making model based on functional regression and some math tools. In \Cref{sec:functional_regression}, we put forward an efficient functional regression algorithm and establish its oracle inequality. Finally, in \Cref{sec:alg}, we provide our sequential decision-making algorithm and its theoretical guarantee.
\subsection{Related Works}
Our work is intimately related to the lines of work on contextual bandits, operation learning and functional regression.

{\bf Contextual Bandits}
Contextual bandits have been widely studied in both academia and industry, including applications such as recommendation systems~\citep{tewari2017ads} and healthcare~\citep{zhou2023spoiled}. Please refer to~\citet{zhou2015survey,bouneffouf2020survey} for comprehensive surveys.
However, most theoretical results require assumptions about the dimensionality, such as the context having a finite cardinality~\citep{agarwal2014taming} or the existence of a low-dimensional representation, as in linear contextual bandits~\citep{chu2011contextual}.
Our paper is the first to address the problem of contextual decision-making based on infinite-dimensional functional regression. We establish universal regret bounds for various tasks under eigenvalue decay assumptions and further derive general sublinear regrets without them.

{\bf Operator Learning.} 
Operator learning~\citep{mollenhauer2022learning,kovachki2024operator,adcock2024learning} studies the case where the input and output are elements in Banach spaces such as some function spaces. However, in our paper, although the objective is the distribution function, we are unable to directly observe the function itself. Instead, we can only observe one single data point sampled from this distribution. In other words, we do not have full feedback but only bandit feedback, which makes our problem much more difficult than general operator learning~\citep{foster2018practical,foster2020beyond}.

{\bf Functional Regression.}
Functional data analysis focuses on analyzing data where each observation is a function defined over a continuous domain, typically sampled discretely from a population. \citet{morris2015functional,ramsay2002applied} provide comprehensive introductions to functional regression. 
One subdomain, Kernel learning, for example, learning in reproducing kernel Hilbert space (RKHS) has been a hot topic in recent years~\citep{yeh2023sample,hou2023sparse}. 
Nonetheless, the problem we focus on is quite different from theirs, as datasets in kernel learning contain input and output points lying in some kernelized spaces, while we do not have access to data of required integral kernel $\cL^{x,a}$ lying in an infinite-dimensional space $L^2(\Omega,\nu)$ directly. ~\citet{zhang2022functional,azizzadeneshelisparse} studies functional regression for contextual CDFs in finite dimensions. For infinite dimensions, under some special conditions, ~\citet{zhang2022functional} also proposes a method based on some regularization term which scales with respect to the number of data points.

\subsection{Our Contributions}
We summarize our contribution into the following three main cornerstones.

{\bf A unified framework: Arbitrary contextual online decision-making problem.}
We propose a unified framework for contextual online decision-making capable of addressing a wide range of tasks. Within our framework, various problems, such as online hypothesis testing, bandits, and risk control, can be addressed by substituting the integral operator module, while ensuring sublinear regret universally  across all tasks. 
Notably, this framework facilitates the translation of any given functional and CDF basis family into an efficient algorithm for minimizing utility regret.
Furthermore, it streamlines the design of algorithms for real-world applications. By replacing the operator as needed, our framework adaptively achieves the desired objectives. 

{\bf An Efficient regression oracle: Infinite-dimensional functional regression.} 
Our algorithm is not constrained by dimensionality, surpassing existing approaches that often require assumptions on embedding space~\citep{zhu2022contextual} and function approximation~\citep{jin2020provably}. This advancement provides a new perspective for analyzing and interpreting the performance of large-scale models. Additionally, our algorithm only requires 
$\cO(\log T)$ calls to functional regression, resulting in low computational complexity. This efficiency makes it more suitable for large-scale deep learning models and approximation algorithms.

{\bf A new insight: Linking regret and eigendecay rate.}
We provide a rigorous characterization of the relationship between regret and the eigendecay rate of the operator, depicted by a single parameter $\gamma$. Specifically, we derive a regret bound of $\tilde{\cO}\rbr{T^{\frac{3\gamma+2}{2(\gamma+2)}}}$, offering new insights into the complexity of learning as a function of the operator's spectral decay. For finite-dimensional problems, where $\gamma=0$, we can recover the optimal regret bound $\cO(\sqrt{T})$ in the existing literature ~\citep{chu2011contextual} up to logarithmic factors. On the other hand, in scenarios where prior knowledge of $\gamma$ is unavailable, the algorithm achieves a regret bound of $\Tilde{O}(T^{\frac{5}{6}})$, highlighting its robustness and potential for deployment in real-world environments with incomplete information.

{\bf Notation:} For any measure space $(B,\cB,n)$, we use $L^2(B,n)$ to denote the square-integrable function space which is also a Hilbert space with inner product $\langle\cdot,\cdot\rangle=\int_{B} f(x) g(x)dn(x)$ and norm $||f||_{L^2(B,n)}=\sqrt{\inner{f}{f}}$. $\cO(\cdot)$ and $o(\cdot)$ stand for Bachmann–Landau asymptotic notations up to constants. Meanwhile, $\Tilde{\cO}$ stands for the asymptotic notations up to logarithmic terms. For any $n\in\NN$, $[n]$ denotes the set $\cbr{1,\cdots,n}$, and $\II_{y}(t)$ denotes the indicator function $\II\cbr{y\le t}$. Unless otherwise stated, when we write the eigenvalue sequence $\cbr{\lambda_i}$, it is arranged in a decreasing order. 
\section{Model setup and Operator Eigendecay}\label{sec:model}

\subsection{Model Setup}\label{subsec:model}
There are two main bodies of our model, \emph{Functional Regression Model} and \emph{Decision-Making Model.} We first focus on the functional regression model.

\paragraph{Functional-Regression Model:} Assume a feature space $\cX$ and a finite action space $\cA=\cbr{a_1,\cdots,a_K}$. For any context $x\in \cX$ and action $a$, there is an associated random variable $Y_{a,x}$ which takes values in some bounded Borel set $S\subset R$ with $m(S)=1$, where $m$ is some measure on the real line. We assume $Y_{a,x}$ has a cumulative distribution function $F^*(x,a,s)$ satisfying the following \Cref{ass:regression_model} and \Cref{assump:Lip_cts_phi}.

\begin{assumption}\label{ass:regression_model}
We have access to a function family of CDF basis $\Phi=\cbr{\phi(x,a,w,s)}_{w\in\Omega}$ indexed by a compact set $\Omega\subset \RR^d$ with $d$-dimensional Lebesgue measure $\nu(\Omega)=1$. That is, given any $x\in\cX,a\in\cA,w\in\Omega$, $0\le\phi(x,a,w,s)\le 1$ is a CDF of some $S$ valued random variable. For any $x,a$, there is an unknown non-negative coefficient function $\theta^*\in L^2(\Omega,\nu)$ such that
\[F^*(x,a,s)=\int_{\Omega}\theta^*(w)\phi(x,a,w,s)d\nu(w)=\inner{\theta^*(\cdot)}{\phi(x,a,\cdot,s)}.\]
We also set $\int_{\Omega}\theta^*(w)d\nu(w)=1$ to ensure that $F^*(x,a,s)$ is a cumulative distribution function. For boundedness, we assume that $||\theta^*||_{L^2(\Omega,\nu)}\le M$ for some constant $M$.
\end{assumption}
Our goal in functional regression model is to derive an accurate coefficient function estimation $\hat{\theta}_{\cD}$.
\begin{assumption}\label{assump:Lip_cts_phi}
    $\phi(x,a,\cdot,s)$ is $L_0$-Lipschitz continuous,
    $|\phi(x,a,w,s)-\phi(x,a,r,s)|\le L_0||w-r||_{\infty}.$
\end{assumption}
Note that the dimension of our functional regression problem is uncountably infinite because the family $\Phi$ may contain uncountably many functions, making the problem significantly harder than finite-dimensional linear regression and hindering the applicability of methods used in the latter.

We now turn to our Decision-Making Model.
\paragraph{Decision-making Model:}
To incorporate all the sequential decision-making examples in our framework, we assume that the objective of the decision-maker is relevant to the cumulative distribution function itself, rather than only some concrete statistics such as mean, variance, and quantile.

To be specific, we assume that there is a known functional $\cT$ defined on function space $L^2(S,m)$. At each round $t$, the context $x_t$ is drawn i.i.d. from some unknown distribution $Q_{\cX}$. Given any context $x$, if the DM applies action $a$, then one data point will be sampled and observed according to distribution function $F^*(x,a,s)$, and the 
utility of action $a$ under $x$ is $\cT(F^*(x,a,s))$. The goal of the decision maker is to maximize the total utility in $T$ rounds of interaction, i.e., $\max_{\cbr{a_t}_{t=1}^{T}}\sum_{t=1}^{T}\cT(F^*(x_t,a_t,s))$.

Analogous to the bandit setting, the policy $\pi$ is a mapping from the feature space $\cX$ to the action space probability simplex $\Delta(\cA)$. Given any $x$, we denote $a^*(x)$ as the action that maximizes the utility, i.e., $a^*(x)=\argmax_{a}\cT(F^*(x,a,s))$. For any policy sequence $\cbr{\pi_t}_{t=1}^{T}$, the performance measure is utility regret, which is the difference between the optimal policy $\pi^*(x)=\delta_{a^*(x)}(\cdot)$\footnote{$\delta_{a}$ is the dirac function of $a$, i.e., $\delta_{a}(x)=1$ only if x=a, othwise, $\delta_a(x)=0$} and $\cbr{\pi_s}_{s=1}^{T}$.
\[\text{Reg}(\cbr{\pi_t}_{t=1}^{T}):=\sum_{t=1}^{T}\cT(F^*_{x_t,a^*(x_t)})-\cT(F^*_{x,\pi_t(x)}),\]
For a graphical illustration, please see
\cref{fig:illustration}.
\begin{figure}[ht]
\vskip 0.2in
    \begin{center}
\centerline{\includegraphics[width=0.8\linewidth]{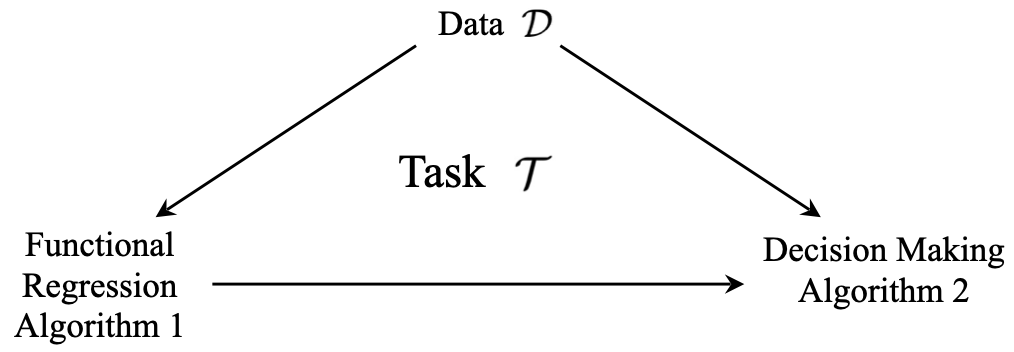}}
\caption{Data $\cD$ drives the functional regression and decision making\label{fig:illustration}.}
\end{center}
\vskip -0.2in
\end{figure}
\begin{example}[Contextual Bandits]
    Define the functional as $\cT: F\rightarrow\int_{S}xdF(x)$, which is a known functional in terms of $F$. Given any context $x$, $F^*(x,a,s)$ is the conditional reward distribution of arm $a$ given context $x$. Consequently, we recover the contextual bandit problem.
\end{example}
 
\begin{example}[Sequential Hypothesis Testing ~\citep{naghshvar2013active}] There are $M$ hypotheses of interest $\cbr{H_1,\cdots,H_M}$ among which only one holds true. The action $a_i$ is to choose which hypothesis $H_i$ to believe. The distribution associated with observation $x$ and action $a_i$ is a posterior multinomial distribution characterized by $F^*_{x,a_i}=\cbr{\PP(H_1|x,a_i),\PP(H_2|x,a_i),\cdots,\PP(H_M|x,a_i)}$. $\PP(H_j|x,a_i)$ is the posterior probability that $H_j$ is true given observation $x$ and that we choose $a_i$. Note that in fact, no matter which $a_i$ we apply, it does not impact the posterior distribution given observation, i.e., $\forall i,\ \PP(H_j|x,a_i)=\PP(H_j|x)$. We define loss vector $L_{i}=\cbr{L_{i,1}\cdots,L_{i,M}}$. $L_{i,j}$ is the penalty of choosing $H_i$ when $H_j$ is the underlying truth. $\cT:F^*_{x,a_i}\rightarrow \sum_{j=1}^{M}\PP(H_j|x,a_i)L_{i,j}$ is the true expected penalty of choosing $H_i$, and the DM wants to minimize it, which is equivalent to maximizing its additive inverse.
\end{example}

\begin{assumption}\label{ass:lip_cts_functional}
We assume that the known functional $\cT$ is $L$-Lipschitz continuous with respect to the norm $||\cdot||_{L^2(S,m)}$.    
\end{assumption}
It is intuitive that a good estimation of the underlying distribution may result in a good utility estimation. However, estimating the underlying CDF with an uncountable basis and arbitrary feature space is highly non-trivial. In \Cref{subsec:operator_eigendecay}, we would like to introduce some tools and assumptions from the functional analysis and operator theory community before we dive into our regression oracle. For background knowledge of operator theory and functional analysis, please refer to some textbooks ~\citet{conway2000course,kubrusly2011elements,simon2015operator}.

\subsection{Operator Eigendecay}\label{subsec:operator_eigendecay}
    
\begin{definition}\label{def:integral_operator_kernel}
    For any $(w,r)\in\Omega\times\Omega$, define the following linear integral operator, integral kernel, and mapping
    \begin{align*}
        \cL^{x,a}: & \theta(w)\mapsto\int_{\Omega}\int_{S}\theta(r)\phi(x,a,w,s)\phi(x,a,r,s)dm(s)d\nu(r),\\
        \cK^{x,a}: &(w,r)\mapsto\int_{S}\phi(x,a,w,s)\phi(x,a,r,s)dm(s),\\
        \psi_{x,a}: &L^2(\Omega,\nu)\times S\rightarrow R,\ (\theta,s)\mapsto\inner{\theta(\cdot)}{\phi(x,a,\cdot,s)}.
    \end{align*}
\end{definition}
    By Fubini's theorem, we have,
 \[\forall\ w\in\Omega,\ \cL^{x,a}(\theta)(w)=\int_{\Omega}\theta(r)\cK^{x,a}(w,r)d\nu(r)=\int_{S}\phi(x,a,w,s)\psi_{x,a}(\theta,t)dm(s).\]

\begin{theorem}\label{thm:math_property_ourintegral_operator}
For any $x\in\cX$ and $a\in\cA$, $\cL^{x,a}$ is a linear, positive, self-adjoint Hilbert-Schmidt integral operator with $||\cL^{x,a}||\le 1$. Moreover, it is also compact.
\end{theorem}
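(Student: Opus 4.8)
The plan is to verify the listed properties one at a time, with the central device being the factorization $\cL^{x,a}=\Psi_{x,a}^{*}\Psi_{x,a}$, where $\Psi_{x,a}\colon L^2(\Omega,\nu)\to L^2(S,m)$ is the operator $(\Psi_{x,a}\theta)(s)=\psi_{x,a}(\theta,s)=\inner{\theta}{\phi(x,a,\cdot,s)}$ from \Cref{def:integral_operator_kernel}. Its adjoint is, by a direct Fubini computation, $(\Psi_{x,a}^{*}g)(w)=\int_{S}g(s)\phi(x,a,w,s)\,dm(s)$, and the third displayed expression for $\cL^{x,a}$ in the excerpt, $\cL^{x,a}(\theta)(w)=\int_{S}\phi(x,a,w,s)\psi_{x,a}(\theta,s)\,dm(s)$, is precisely $\Psi_{x,a}^{*}\big(\Psi_{x,a}\theta\big)(w)$. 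Linearity of $\cL^{x,a}$ is immediate from linearity of the integral. Self-adjointness and positivity then come for free from the factored form: any $B^{*}B$ satisfies $\inner{B^{*}B\theta}{\eta}=\inner{B\theta}{B\eta}=\inner{\theta}{B^{*}B\eta}$ and $\inner{B^{*}B\theta}{\theta}=||B\theta||^2\ge 0$; concretely $\inner{\cL^{x,a}\theta}{\theta}=\int_{S}\psi_{x,a}(\theta,s)^2\,dm(s)\ge 0$, which one can also read off by substituting the kernel into the double integral $\int_{\Omega}\int_{\Omega}\theta(w)\theta(r)\cK^{x,a}(w,r)\,d\nu(w)d\nu(r)$ and applying Fubini together with the symmetry $\cK^{x,a}(w,r)=\cK^{x,a}(r,w)$.

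For the operator norm I would first bound $||\Psi_{x,a}||\le 1$. Since $0\le\phi\le 1$ and $\nu(\Omega)=1$, we have $||\phi(x,a,\cdot,s)||_{L^2(\Omega,\nu)}^2=\int_{\Omega}\phi(x,a,w,s)^2\,d\nu(w)\le 1$, so Cauchy--Schwarz gives $|\psi_{x,a}(\theta,s)|\le ||\theta||_{L^2(\Omega,\nu)}$; integrating over $S$ and using $m(S)=1$ yields $||\Psi_{x,a}\theta||_{L^2(S,m)}^2\le ||\theta||_{L^2(\Omega,\nu)}^2$. Hence $||\cL^{x,a}||=||\Psi_{x,a}^{*}\Psi_{x,a}||=||\Psi_{x,a}||^2\le 1$. (Equivalently, since $\cL^{x,a}$ is positive self-adjoint, $||\cL^{x,a}||=\sup_{||\theta||\le 1}\int_S\psi_{x,a}(\theta,s)^2\,dm(s)\le 1$ by the same estimate.)

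For the Hilbert--Schmidt and compactness assertions I would argue directly from the kernel: using $0\le\phi\le1$ and $m(S)=1$, $|\cK^{x,a}(w,r)|=\int_S\phi(x,a,w,s)\phi(x,a,r,s)\,dm(s)\le m(S)=1$, so $\cK^{x,a}\in L^2(\Omega\times\Omega,\nu\otimes\nu)$ with $\int_{\Omega}\int_{\Omega}\cK^{x,a}(w,r)^2\,d\nu(w)d\nu(r)\le\nu(\Omega)^2=1$. It is standard that an integral operator with square-integrable kernel is Hilbert--Schmidt and that every Hilbert--Schmidt operator is compact; invoking these facts finishes both claims. Alternatively one notes that $\Psi_{x,a}$ is itself Hilbert--Schmidt because $(w,s)\mapsto\phi(x,a,w,s)$ lies in $L^2(\Omega\times S,\nu\otimes m)$, and then $\cL^{x,a}=\Psi_{x,a}^{*}\Psi_{x,a}$ is Hilbert--Schmidt as the product of a bounded operator with a Hilbert--Schmidt operator.

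The proof is essentially a sequence of short estimates, so the obstacles are bookkeeping rather than conceptual: one must justify each Fubini/Tonelli interchange (legitimate since every integrand is non-negative or bounded and all underlying spaces are finite measure spaces), verify the adjoint formula for $\Psi_{x,a}$ carefully, and confirm that $\cL^{x,a}$ maps $L^2(\Omega,\nu)$ into itself (subsumed by the Hilbert--Schmidt bound). I expect the identification of the adjoint $\Psi_{x,a}^{*}$ to be the step warranting the most care, since it is what makes the clean derivations of self-adjointness and positivity legitimate; the remaining pieces are one-line applications of $0\le\phi\le1$, $m(S)=\nu(\Omega)=1$, and Cauchy--Schwarz.
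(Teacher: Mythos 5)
Your proof is correct, and the core estimates coincide with the paper's: Cauchy--Schwarz together with $0\le\phi\le1$, $\nu(\Omega)=1$, $m(S)=1$ yields the norm bound, and square-integrability of the kernel gives the Hilbert--Schmidt property, hence compactness. The genuine difference is organizational: you derive self-adjointness and positivity from the factorization $\cL^{x,a}=\Psi_{x,a}^{*}\Psi_{x,a}$, while the paper manipulates the kernel directly. Your route buys a real clarification on the positivity step. The paper simply displays $\inner{\theta}{\cL^{x,a}(\theta)}=\int_{\Omega}\int_{\Omega}\theta(w)\cK^{x,a}(w,r)\theta(r)\,d\nu(r)d\nu(w)\ge 0$ and asserts the inequality; this does not follow from $\cK^{x,a}\ge 0$ alone when $\theta$ changes sign, but only from the Gram structure $\cK^{x,a}(w,r)=\int_{S}\phi(x,a,w,s)\phi(x,a,r,s)\,dm(s)$, under which the double integral equals $\int_{S}\psi_{x,a}(\theta,s)^{2}\,dm(s)$. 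Your identity $\inner{\cL^{x,a}\theta}{\theta}=\|\Psi_{x,a}\theta\|^{2}$ makes that reason explicit and is the cleaner justification. The one point in your write-up deserving a citation or a line of proof is $\|\Psi_{x,a}^{*}\Psi_{x,a}\|=\|\Psi_{x,a}\|^{2}$ (the $C^{*}$-identity); the paper avoids it by bounding $\|\cL^{x,a}\theta\|$ directly via Cauchy--Schwarz on the kernel. Your parenthetical alternative, $\|\cL^{x,a}\|=\sup_{\|\theta\|\le1}\int_{S}\psi_{x,a}(\theta,s)^{2}\,dm(s)\le1$ for a positive self-adjoint operator, sidesteps this cleanly, so either way the argument is complete.
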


By the Riesz-Schauder theorem and the spectral decomposition theorem \citep{reed1978iv}, if we denote $\cbr{\lambda_i(\cL)^{x,a}}_{i=1}^{\infty}$ as its eigenvalue sequence in decreasing order and $\cbr{e_i(\cL^{x,a})}$ as the corresponding eigenfunctions, then it holds that
$$\cL^{x,a}(\theta)=\sum_{i=1}^{\infty}\lambda_i(\cL^{x,a})\inner{\theta}{e_i(\cL^{x,a})}e_i(\cL^{x,a}),$$
where $\lambda_1(\cL^{x,a})\ge\lambda_2(\cL^{x,a})\ge\cdots> 0$ and $\lambda_i(\cL^{x,a})\rightarrow 0$.
The eigenvalues and eigenfunctions play a crucial role in our algorithm and analysis. Hence, we further examine them and state additional properties below. All proofs are deferred to \Cref{sec:proofs_model}.

By~\citet{gohberg2012traces,ferreira2013eigenvalue}, we call the Hilbert-Schmidt integral operator $\cL^{x,a}$ traceable if $\sum_{i=1}^{\infty}\lambda_i(\cL^{x,a})<\infty$. For any traceable operator, we could analyze the decay rate of its eigenvalue sequence, and the eigenvalue decay rate of these integral operators has been well studied in functional analysis~\citep{ferreira2013eigenvalue,volkov2024optimal,levine2023decay,carrijo2020approximation}.

Specifically, for our integral operator $\cL^{x,a}$, we observe the following properties.
\begin{property}\label{prop:eigendecay_cL}
    For any $x,a$, the integral operator $\cL^{x,a}$ satisfies
    \begin{itemize}
        \item $\sum_{i=1}^{\infty}\lambda_{i}(\cL^{x,a})=\int_{\Omega}\int_{S}\phi(x,a,w,s)^2dm(s)d\nu(w)\le1$.
        \item For any $i$, $\exists C>0$ such that $\lambda_i(\cL^{x,a})\le \frac{C}{i}$.
        \item $\lambda_i(\cL^{x,a})=o(\frac{1}{i})$.
    \end{itemize}
\end{property}
In the analysis of many machine learning algorithms, in addition to \Cref{prop:eigendecay_cL}, stronger assumptions, such as polynomial or exponential eigendecay, are often proposed~\citep{yeh2023sample,vakili2024kernelized,goel2017eigenvalue,agarwal2018effective}. In our paper, we only assume a weaker polynomial eigendecay condition, namely, the $\gamma$-dominating eigendecay condition. Moreover, our result could be easily extended to exponential eigendecay and obtain an optimal rate. 
\begin{assumption}[$\gamma$-dominating eigendecay]\label{ass:dominating_eigendecay}\footnote{Actually, under some mild conditions, \Cref{ass:dominating_eigendecay} can be proved~\citep{carrijo2020approximation}.}
    Denoting all the bounded linear operators on $L^2(\Omega,\nu)$ as $\cB(L^2(\Omega,\nu))$, we assume that the set $\cbr{\cL^{x,a}:x\in\cX,a\in\cA}\subset\cB(L^2(\Omega,\nu))$ is convex and satisfies the existence of a sequence $\cbr{\tau_i}_{i=1}^{\infty}$ such that \begin{itemize}
    \item for some $0<\gamma\le1$,
$\sum_{k=1}^{\infty}\tau_k^\gamma\le s_0<\infty,$
        \item for $\forall x,a,\ \lambda_k(\cL^{x,a})< \tau_k \le \cO(\frac{1}{k}),\ \forall k.$
    \end{itemize}
\end{assumption}
We name the sequence $\cbr{\tau_i}_{i=1}^{\infty}$ as `$\gamma$-dominating sequence'.
Intuitively, \Cref{ass:dominating_eigendecay} states that the decay rate of the eigenvalue sequence of any operator $\cL$ in the integral operator set $\cbr{\cL^{x,a}:x\in\cX,a\in\cA}$ could be ``dominated'' by some polynomially converging series. The rate parameter $\gamma$ is a key factor in our analysis. In \Cref{sec:functional_regression}, we will prove that parameter $\gamma$ influences the error rate of the functional regression oracle and therefore determines our utility regret rate.

\section{Functional Regression and Oracle Inequality}\label{sec:functional_regression}

\subsection{Functional Regression}\label{subsec:function_reg_alg}
In \Cref{subsec:function_reg_alg}, we provide a method about how to estimate our coefficient function $\theta^*(w)$, given a dataset $\cD=\cbr{(x_j,a_j,y_j)}_{j=1}^{n}$. Here, $y_j$ is sampled according to CDF $F^*_{x_j,a_j}$. 

The intuition of functional regression oracle is to use the operator's spectral decomposition. In ordinary least square regression $\min||\bY-\bX\theta||^2$, the normal matrix $\bX^T\bX$ plays a crucial role ~\citep{goldberger1991course}. Here, we require an operator that behaves similarly.
\begin{definition}[Design Integral Operator]
    For any given dataset $\cD=\cbr{(x_j,a_j,y_j)}_{j=1}^{n}$ whose $|\cD|=n$, we define the design integral operator $\cU_{\cD}$ as
    \[
    \cU_{\cD}(\theta)(w):=\sum_{j=1}^{n}\cL^{x_i,a_i}(\theta)(w)=\sum_{j=1}^{n}\int_{\Omega}\theta(r)\cK^{x_j,a_j}(w,r)d\nu(r).
    \]
    Moreover, we denote the spectral decomposition of $\cU_{\cD}$ as $\sum_{i=1}^{\infty}\lambda_i(\cU_{\cD})e^i_{\cU_{\cD}}$.
\end{definition}
In \cref{sec:computation}, we present a concrete method to compute the eigenvalue sequence $\cbr{\lambda_i}$.

Using \Cref{thm:math_property_ourintegral_operator}, one could verify that $\cU_{\cD}$ is also a linear, positive, self-adjoint, and Hilbert-Schmidt integral operator. Therefore, we have $\lambda_i(\cU_{\cD})\rightarrow0$, and $\cU_{\cD}$ is not invertible. 

In finite-dimensional regression, people often handle non-invertibility with regularization terms $\lambda||\theta||^2$, leading to an error that scales with the problem dimension. However, since our dimension is infinite, adding this regularization term is infeasible. \citet{zhang2022functional} considers a different data-driven regularization, but the term will scale with the cardinality $|\cD|=n$, thus it's inapplicable to obtain convergence results.

Rather than regularization, we consider a truncation of the function series $\sum_{i=1}^{\infty}\lambda_i(\cU_{\cD})e^i_{\cU_{\cD}}$ according to the eigenvalue sequence $\cbr{\lambda_i(\cU_{\cD})}$. Specifically, we define the truncated finite rank operator $\hat{\cU}_{\cD,\varepsilon}$.
\begin{definition}\label{def:hat{cU}}
    For any small number $\varepsilon>0$, the truncated integral operator $\hat{\cU}_{\cD,\varepsilon}$ is defined as 
    $$\hat{\cU}_{\cD,\varepsilon}:\theta\mapsto\sum_{i=1}^{N_{\cD,\varepsilon}}\lambda_i(\cU_{\cD})\inner{\theta}{e^i_{\cU_{\cD}}}e^i_{\cU_{\cD}},$$
    where $N_{\cD,\varepsilon}$ is the smallest number such that for
    $\forall i\ge N_{\cD,\varepsilon}+1,\ \lambda_i(\cU_{\cD})<n\varepsilon.$
\end{definition}
$\varepsilon$ is some hyper-parameter which will be determined later.
Since we arrange the eigenvalues in a decreasing order, we have that for 
$\forall i\le N,\ \lambda_i(\cU_{\cD})\ge n\varepsilon.$ Despite the fact that $\hat{\cU}_{\cD,\varepsilon}$ is not invertible, it is still possible to define its pseudo-inverse as follows.
\begin{definition}\label{def:pseudo_inverse_hat{cU}}
    The pseudo-inverse of the operator $\hat{\cU}_{\cD,\varepsilon}$ is defined as
    \[\hat{\cU}_{\cD,\varepsilon}^{\dagger}:\theta\mapsto\sum_{i=1}^{N_{\cD,\varepsilon}}\frac{1}{\lambda_i(\cU_{\cD})}\inner{\theta}{e^i_{\cU_{\cD}}}e^i_{\cU_{\cD}}.\]
\end{definition}
We set $\varepsilon=\varepsilon^*:=n^{-\frac{2}{\gamma+2}}$ to conduct functional regression. With a bit of abuse of notation, we also use $N_{\varepsilon}$ to denote $N_{\cD,\varepsilon}$, and abbreviate $\hat{\cU}_{\cD,\varepsilon^*}$, $\hat{\cU}_{\cD,\varepsilon^*}^{\dagger}$ as $\hat{\cU}_{\cD}$ and $\hat{\cU}_{\cD}^{\dagger}$.

After obtaining $\hat{\cU}_{\cD}$ and $\hat{\cU}_{\cD}^{\dagger}$, there are two remaining steps of our functional regression.

The first step is to compute the following function 
\[
\theta_{\cD}=\cU_{\cD}^{\dagger}\rbr{\int_{S}\sum_{j=1}^{n}\II_{y_j}(t)\phi(x_j,a_j,w,s)dm(s)}.
\]
The motivation for calculating it lies in its ability to solve the following least squares optimization problem presented in \Cref{thm:optimization}. Intuitively, for any data point $(x_i,a_i)$, the underlying targeted distribution function is $F^*(x_i,a_i,s)=\int_{\Omega}\theta^*(w)\phi(x_i,a_i,w,s)d\nu(w)$. 
Normally, to learn the function $\theta^*(w)$, we need to observe the whole function $F^*(x_i,a_i,s)=\int_{\Omega}\theta^*(w)\phi(x_i,a_i,w,s)d\nu(w)$ and solve an inverse problem to obtain $\theta^*$. However, due to our bandit feedback nature, we only have one sample $y_i$ from this distribution. Thus, considering the $L^2$ error between the real empirical counterpart $\II_{y_i}(t)$ and any candidate distribution function $\int_{\Omega}\theta(w)\phi(x_i,a_i,w,s)d\nu(w)$ for all $n$ data points, we obtain the following loss function $l(\theta,\cD)$ in \Cref{thm:optimization}.
\begin{theorem}\label{thm:optimization}
    We define the loss function $l(\theta,\cD)$ on dataset $\cD$ as
    \[l(\theta,\cD):=\sum_{j=1}^{n}\norm{\II_{y_i}(s)-\int_{\Omega}\theta(w)\phi(x_i,a_i,w,s)d\nu(w)}_{L^2(S,m)}^2.\]
    Then, $\theta_{\cD}$ solves the following optimization problem,
    \begin{equation}\label{eq:opt}
    \bP):\ \min_{\theta\in \spa(e_1,\cdots,e_{N_{\varepsilon}})}l(\theta,\cD),
\end{equation}
where $\spa(X)$ denotes the linear subspace spanned by the vector group $X$.
\end{theorem}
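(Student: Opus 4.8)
The plan is to recognize $(\mathbf{P})$ as a finite-dimensional least-squares problem after expressing everything in the eigenbasis $\{e^i_{\cU_\cD}\}$, and then to verify that $\theta_\cD$ as defined satisfies the normal equations for that problem. First I would expand the loss: writing $\II_{y_j}(s)$ and $\phi(x_j,a_j,\cdot,s)$ as elements of $L^2(S,m)$ and expanding the square, the quadratic term in $\theta$ is $\sum_{j=1}^n \int_S \big(\int_\Omega \theta(w)\phi(x_j,a_j,w,s)d\nu(w)\big)^2 dm(s)$, which by Fubini is exactly $\inner{\theta}{\cU_\cD(\theta)}$ using \Cref{def:integral_operator_kernel} and the definition of $\cU_\cD$. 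The cross term is $-2\sum_j \int_S \II_{y_j}(s)\inner{\theta}{\phi(x_j,a_j,\cdot,s)}dm(s) = -2\inner{\theta}{b_\cD}$ where $b_\cD(w) := \int_S \sum_{j=1}^n \II_{y_j}(s)\phi(x_j,a_j,w,s)dm(s)$ is precisely the function to which $\cU_\cD^\dagger$ is applied in the definition of $\theta_\cD$. The remaining term $\sum_j \|\II_{y_j}\|^2_{L^2(S,m)}$ is independent of $\theta$, so minimizing $l(\theta,\cD)$ over $\theta \in \spa(e_1,\dots,e_{N_\varepsilon})$ is the same as minimizing $\inner{\theta}{\cU_\cD\theta} - 2\inner{\theta}{b_\cD}$ over that subspace.

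Next I would restrict attention to the subspace $V := \spa(e^1_{\cU_\cD},\dots,e^{N_\varepsilon}_{\cU_\cD})$. On $V$, write $\theta = \sum_{i=1}^{N_\varepsilon} c_i e^i_{\cU_\cD}$; since $\cU_\cD e^i_{\cU_\cD} = \lambda_i(\cU_\cD) e^i_{\cU_\cD}$ and the $e^i_{\cU_\cD}$ are orthonormal, the objective becomes $\sum_{i=1}^{N_\varepsilon}\big(\lambda_i(\cU_\cD) c_i^2 - 2 c_i \inner{b_\cD}{e^i_{\cU_\cD}}\big)$. Each summand is a strictly convex scalar quadratic in $c_i$ (strict because $\lambda_i(\cU_\cD) \ge n\varepsilon > 0$ for $i \le N_\varepsilon$ by \Cref{def:hat{cU}}), uniquely minimized at $c_i^\star = \inner{b_\cD}{e^i_{\cU_\cD}} / \lambda_i(\cU_\cD)$. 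Therefore the unique minimizer over $V$ is $\theta^\star = \sum_{i=1}^{N_\varepsilon} \frac{1}{\lambda_i(\cU_\cD)}\inner{b_\cD}{e^i_{\cU_\cD}} e^i_{\cU_\cD}$, which is exactly $\hat{\cU}_\cD^\dagger(b_\cD)$ by \Cref{def:pseudo_inverse_hat{cU}}. It remains to note that the paper writes $\theta_\cD = \cU_\cD^\dagger(b_\cD)$ while \Cref{def:pseudo_inverse_hat{cU}} defines the truncated pseudo-inverse $\hat{\cU}_\cD^\dagger$; under the convention that $\cU_\cD^\dagger$ denotes this same truncated object (consistent with the abbreviation $\hat{\cU}_{\cD,\varepsilon^*}^\dagger = \hat{\cU}_\cD^\dagger$ introduced just before), the identification $\theta_\cD = \theta^\star$ is immediate, and $\theta_\cD$ automatically lies in $V = \spa(e_1,\dots,e_{N_\varepsilon})$.

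The main obstacle — really the only nontrivial point — is the first step: justifying the Fubini interchange that turns the expanded quadratic loss into $\inner{\theta}{\cU_\cD\theta}$ and the cross term into $\inner{\theta}{b_\cD}$. This needs the integrands to be absolutely integrable over $\Omega \times \Omega \times S$ (resp. $\Omega \times S$), which follows from $0 \le \phi \le 1$, the finiteness of $m(S)=1$ and $\nu(\Omega)=1$ (\Cref{ass:regression_model}), and $\theta \in L^2(\Omega,\nu)$ together with Cauchy--Schwarz; I would spell this out but it is routine. A secondary bookkeeping point is to confirm the loss is written with $\phi(x_i,a_i,\cdot,s)$ and index $y_i$ consistently with $b_\cD$'s definition (the statement has a mild index clash between $j$ and $i$), but this does not affect the argument. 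Everything else is elementary finite-dimensional convex optimization in the eigencoordinates.
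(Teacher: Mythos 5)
Your proposal is correct and follows essentially the same route as the paper: both expand the loss into the quadratic form $\inner{\theta}{\cU_{\cD}(\theta)}-2\inner{\theta}{b_{\cD}}+\mathrm{const}$ via Fubini and then use the identity $\inner{\theta}{\cU_{\cD}(\theta_{\cD})}=\inner{\theta}{b_{\cD}}$ for $\theta$ in the span of $e_1,\dots,e_{N_\varepsilon}$ to conclude optimality (the paper phrases this as showing $l(\theta_{\cD}+\theta,\cD)-l(\theta_{\cD},\cD)\ge 0$, while you minimize coordinate-wise in the eigenbasis, which additionally yields uniqueness of the minimizer on the subspace). Your reading of $\cU_{\cD}^{\dagger}$ as the truncated pseudo-inverse $\hat{\cU}_{\cD,\varepsilon}^{\dagger}$ matches the paper's own usage in its proof.
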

After obtaining $\theta_{\cD}$, in order to ensure that our estimated coefficient function satisfies \Cref{ass:regression_model}, we project it onto the set $\cC=\cbr{\theta\ge 0:\int_{\Omega}\theta d\nu=1,||\theta||_{L^2(\Omega,\nu)}\le M}$ in the following norm.
\begin{definition}
    We define the following norm for any positive self-adjoint bounded linear operator $\cU$:
    \[||\theta||_{\cU}:=\sqrt{\inner{\theta}{\cU(\theta)}}.\]
    Correspondingly, we define the projection $\cP_{\cD}$ onto set $\cC$ under the norm $||\cdot||_{\cU_{\cD}}$ as
    \[
    \cP_{\cD}(x):=\argmin_{y\in\cC}||y-x||_{\cU_{\cD}}^2.
    \]
\end{definition}

The second step is to project $\theta_{\cD}$ onto set $\cC$ by $\cP_{\cD}$, i.e., $\hat{\theta}_{\cD}:=\cP_{\cD}(\theta_{\cD})$.This process yields our estimated coefficient function $\hat{\theta}_{\cD}$.
At the end of this section, we summarize the functional regression oracle $\Alg$ in \Cref{alg:regression_oracle_pseudo_code}.
\begin{algorithm}
\caption{$\Alg$}\label{alg:regression_oracle_pseudo_code}
    \begin{algorithmic}
        \Require Basis function family $\cbr{\phi(x,a,w,t)}_{w\in\Omega}$, dataset $\cD=\cbr{x_i,a_i,y_i}_{i=1}^{n}$, $\cbr{\tau_i}_{i=1}^{\infty}$, $0<\gamma\le1$.
    \State Define the integral operator $\cU_{\cD}(\theta)=\sum_{j=1}^{n}\cL^{x_i,a_i}(\theta)(w)=\sum_{j=1}^{n}\int_{\Omega}\cK^{x_j,a_j}(w,r)\theta(r)d\nu(r)$.
    \State Compute the spectral decomposition of $\cU_{\cD}$, say
    $$\cU_{\cD}(\theta)=\sum_{i=1}^{\infty}\lambda_i\inner{\theta}{e_i}e_i.$$
    \State Define $\varepsilon^*=n^{-\frac{\gamma}{\gamma+2}}$ and construct the truncated operator $\hat{\cU}_{\cD}$ and its pseudo-inverse $\hat{\cU}_{\cD}^{\dagger}$ according to \Cref{def:hat{cU}} and \Cref{def:pseudo_inverse_hat{cU}}.
    \State Solve the optimization problem ~\ref{eq:opt} and obtain solution $\theta_{\cD}$.
    \State Compute the projection $\hat{\theta}_{\cD}=\cP_{\cD}(\theta_{\cD})$.
    \State \Return $\hat{\theta}_{\cD}$.
    \end{algorithmic}
\end{algorithm}

\subsection{Oracle Inequality}\label{subsec:oracle_ineq}

In this section, we give two versions of oracle inequalities. The first one is for the fixed design that $\cD$ is arbitrarily given, whereas the second one is for the random design that $\cbr{(x_i,a_i)}_{i=1}^{n}$ are sampled from some unknown joint distribution $Q$.
,\begin{theorem}\label{thm:reg_fixed_design}
    Under \Cref{ass:regression_model} and \Cref{ass:dominating_eigendecay}, given dataset $\cD=\cbr{(x_j,a_j,y_j)}_{j=1}^{n}$ and setting $\varepsilon^*=n^{-\frac{2}{\gamma+2}}$, then with probability at least $1-\delta$, the estimated coefficient function $\hat{\theta}_{\cD}$ from \Cref{alg:regression_oracle_pseudo_code} satisfies
    \[
    ||\hat{\theta}_{\cD}-\theta^*||_{\cU_{\cD}}\le\cE_{\cD}^{\delta}(n):=\sqrt{2\rbr{2\log(\frac
{1}{\delta})+\sum_{i=1}^{N_{\varepsilon^*}}\log(1+\lambda_i(\cU_{\cD}))}}+n^{\frac{\gamma}{\gamma+2}} M.
\]

\end{theorem}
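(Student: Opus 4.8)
The plan is to bound $\|\hat\theta_\cD-\theta^*\|_{\cU_\cD}$ by a projection step followed by a bias--variance split, with the variance term handled by a self-normalized concentration argument that I reduce to a finite-dimensional computation. First I would remove the projection: since $\theta^*\in\cC$ by \Cref{ass:regression_model} and $\cC$ is closed, bounded and convex, while $\hat\theta_\cD=\cP_\cD(\theta_\cD)$ minimizes $\|\cdot-\theta_\cD\|_{\cU_\cD}^2$ over $\cC$, the first-order optimality condition $\inner{\theta^*-\hat\theta_\cD}{\cU_\cD(\hat\theta_\cD-\theta_\cD)}\ge 0$ (valid although $\|\cdot\|_{\cU_\cD}$ is only a seminorm, since $\cU_\cD\succeq 0$) gives the Pythagorean-type inequality $\|\hat\theta_\cD-\theta^*\|_{\cU_\cD}\le\|\theta_\cD-\theta^*\|_{\cU_\cD}$. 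Expanding the loss with $\inner{\theta}{\phi(x_j,a_j,\cdot,s)}=\psi_{x_j,a_j}(\theta,s)$ shows $l(\theta,\cD)=\inner{\theta}{\cU_\cD\theta}-2\inner{\theta}{b_\cD}+\mathrm{const}$ with $b_\cD:=\sum_{j=1}^n\int_S\II_{y_j}(s)\phi(x_j,a_j,\cdot,s)\,dm(s)$, so by \Cref{thm:optimization} the minimizer over $V_N:=\spa(e^1_{\cU_\cD},\dots,e^{N_{\varepsilon^*}}_{\cU_\cD})$ has the closed form $\theta_\cD=\hat\cU_\cD^{\dagger}b_\cD$. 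Writing $\Pi_N:=\hat\cU_\cD^{\dagger}\cU_\cD$ for the orthogonal projection onto $V_N$ and $\xi_\cD:=b_\cD-\cU_\cD\theta^*=\sum_{j=1}^n\int_S\bigl(\II_{y_j}(s)-F^*(x_j,a_j,s)\bigr)\phi(x_j,a_j,\cdot,s)\,dm(s)$, which is centered conditionally on the design because $\mathbb{E}\,\II_{y_j}(s)=F^*(x_j,a_j,s)$, and using $\hat\cU_\cD^{\dagger}\cU_\cD\theta^*=\Pi_N\theta^*$, I get $\theta_\cD-\theta^*=-(I-\Pi_N)\theta^*+\hat\cU_\cD^{\dagger}\xi_\cD$, hence $\|\theta_\cD-\theta^*\|_{\cU_\cD}\le\|(I-\Pi_N)\theta^*\|_{\cU_\cD}+\|\hat\cU_\cD^{\dagger}\xi_\cD\|_{\cU_\cD}$.

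The bias term is $\|(I-\Pi_N)\theta^*\|_{\cU_\cD}^2=\sum_{i>N_{\varepsilon^*}}\lambda_i(\cU_\cD)\inner{\theta^*}{e^i_{\cU_\cD}}^2\le n\varepsilon^*\|\theta^*\|_{L^2(\Omega,\nu)}^2\le n\varepsilon^* M^2$, using the truncation rule $\lambda_i(\cU_\cD)<n\varepsilon^*$ for $i>N_{\varepsilon^*}$ together with $\|\theta^*\|_{L^2}\le M$ from \Cref{ass:regression_model}; with $\varepsilon^*=n^{-2/(\gamma+2)}$ this is at most $n^{\gamma/(\gamma+2)}M^2$, so the bias contributes at most $n^{\gamma/(2(\gamma+2))}M\le n^{\gamma/(\gamma+2)}M$, the second term of $\cE_\cD^{\delta}(n)$. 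For the variance term, since $\hat\cU_\cD^{\dagger}$ has range $V_N$ one computes $\|\hat\cU_\cD^{\dagger}\xi_\cD\|_{\cU_\cD}^2=\inner{\Pi_N\xi_\cD}{(\cU_\cD|_{V_N})^{-1}\Pi_N\xi_\cD}$, and because $\lambda_i(\cU_\cD)\ge n\varepsilon^*\ge 1$ on $V_N$ we have the operator inequality $(\cU_\cD|_{V_N})^{-1}\preceq 2\,(I+\cU_\cD|_{V_N})^{-1}$, so it remains to bound the self-normalized quantity $\|\Pi_N\xi_\cD\|_{(I+\cU_\cD|_{V_N})^{-1}}^2$.

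The crux is this self-normalized bound. I would first establish the one-step inequality, for every $\lambda\in L^2(\Omega,\nu)$, $\mathbb{E}\bigl[\exp(\inner{\lambda}{\zeta_j})\,\big|\,\text{design}\bigr]\le\exp\bigl(\tfrac12\inner{\lambda}{\cL^{x_j,a_j}\lambda}\bigr)$, where $\zeta_j:=\int_S(\II_{y_j}(s)-F^*(x_j,a_j,s))\phi(x_j,a_j,\cdot,s)\,dm(s)$: indeed $\inner{\lambda}{\zeta_j}=H(y_j)-\mathbb{E}\,H(y_j)$ with $H(t):=\int_{S\cap[t,\infty)}\inner{\lambda}{\phi(x_j,a_j,\cdot,s)}\,dm(s)$, and $|H(\cdot)|\le\|\inner{\lambda}{\phi(x_j,a_j,\cdot,\cdot)}\|_{L^1(S,m)}\le\|\inner{\lambda}{\phi(x_j,a_j,\cdot,\cdot)}\|_{L^2(S,m)}=\sqrt{\inner{\lambda}{\cL^{x_j,a_j}\lambda}}$ by Cauchy--Schwarz and $m(S)=1$, so $\inner{\lambda}{\zeta_j}$ is centered and supported in an interval of length $\le 2\sqrt{\inner{\lambda}{\cL^{x_j,a_j}\lambda}}$, and Hoeffding's lemma then gives exactly the stated sub-Gaussian constant --- this sharp constant is what makes the self-normalizing operator exactly $\cU_\cD$ with no slack. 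Multiplying over $j=1,\dots,n$ (independence of the $y_j$ for fixed design) shows $M_n(\lambda):=\exp(\inner{\lambda}{\xi_\cD}-\tfrac12\inner{\lambda}{\cU_\cD\lambda})$ has $\mathbb{E}\,M_n(\lambda)\le 1$; mixing $\lambda$ against the standard Gaussian on the finite-dimensional, \emph{deterministic} space $V_N$ (the fixed-design hypothesis is precisely what makes $V_N$ and the $e^i_{\cU_\cD}$ deterministic, so we avoid an infinite-dimensional Fredholm determinant) yields $\mathbb{E}\,\overline M_n\le 1$ with $\overline M_n=\det(I+\cU_\cD|_{V_N})^{-1/2}\exp\bigl(\tfrac12\|\Pi_N\xi_\cD\|_{(I+\cU_\cD|_{V_N})^{-1}}^2\bigr)$, and Markov's inequality gives, with probability at least $1-\delta$, $\|\Pi_N\xi_\cD\|_{(I+\cU_\cD|_{V_N})^{-1}}^2\le 2\log(1/\delta)+\log\det(I+\cU_\cD|_{V_N})=2\log(1/\delta)+\sum_{i=1}^{N_{\varepsilon^*}}\log(1+\lambda_i(\cU_\cD))$. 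Combining with the operator inequality above, the variance term is at most $\sqrt{2\bigl(2\log(1/\delta)+\sum_{i=1}^{N_{\varepsilon^*}}\log(1+\lambda_i(\cU_\cD))\bigr)}$; adding the bias bound and invoking the projection inequality proves $\|\hat\theta_\cD-\theta^*\|_{\cU_\cD}\le\cE_\cD^{\delta}(n)$.

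The main obstacle is this last step: pinning the sub-Gaussian constant in the one-step bound down tightly enough that the self-normalizing operator is $\cU_\cD$ itself rather than a constant multiple of it (which is what the $L^1\le L^2$ estimate plus Hoeffding's lemma buys), and carrying out the method of mixtures rigorously. The reduction to the deterministic finite-dimensional subspace $V_N$ is the device that makes the mixing routine here; for the random/sequential-design version (\Cref{thm:reg_fixed_design}'s random-design counterpart) $V_N$ becomes data-dependent, so one would instead need the genuine Hilbert-space self-normalized inequality with a Fredholm determinant, combined with a covering argument over $\cX\times\cA$ exploiting the Lipschitz continuity of \Cref{assump:Lip_cts_phi}.
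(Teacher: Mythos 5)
Your proposal is correct and follows essentially the same route as the paper's proof: the same bias--variance decomposition $\theta_\cD-\theta^*=-(I-\Pi_N)\theta^*+\hat{\cU}_{\cD}^{\dagger}\xi_\cD$, the same one-step Hoeffding-plus-Cauchy--Schwarz bound making $\cU_\cD$ the exact self-normalizing operator, the same Gaussian mixture over the deterministic truncated eigenspace yielding the $\sum_{i\le N_{\varepsilon^*}}\log(1+\lambda_i)$ term, and the same use of non-expansiveness of $\cP_\cD$ to absorb the projection. The only (immaterial) differences are that you use plain Markov where the paper invokes Doob's maximal inequality for its supermartingale, and you replace the paper's factor $(1+1/(n\varepsilon^*))$ by the cruder bound $2$ and obtain a slightly sharper bias exponent $n^{\gamma/(2(\gamma+2))}$ before relaxing it to the stated $n^{\gamma/(\gamma+2)}$.
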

\Cref{thm:reg_fixed_design} provides a theoretical guarantee about our regression error when the given $\cD$ has cardinality $|\cD|=n$. Nevertheless, \Cref{thm:reg_fixed_design} cannot be directly used, as there exists an agnostic term $\sum_{i=1}^{N_{\varepsilon^*}}\log(1+\lambda_i(\cU_{\cD}))$ related to the dataset $\cD$. The following important lemma gives us an upper bound of $\cE_{\cD}^{\delta}(n)$ which is independent of $\cD$.
\begin{lemma}
\label{lemma:bound_cE_D}
    For any dataset $\cD=\cbr{(x_i,a_i,y_i)}_{i=1}^{n}$ whose $|\cD|=n$, by choosing $\varepsilon=\varepsilon^*=n^{\frac{-2}{\gamma+2}}$, we have
    \begin{align*}
        &\cE_{\cD}^{\delta}(n)=\sqrt{2\rbr{2\log(\frac
{1}{\delta})+\sum_{i=1}^{N_{\varepsilon^*}}\log(1+\lambda_i)}}+n^{\frac{\gamma}{\gamma+2}} M\\
\le& \cE_{\delta}(n):=2\log(1/\delta)^{1/2}+\rbr{2\sqrt{s_0\log(1+n)}+M}n^{\frac{\gamma}{\gamma+2}}.
    \end{align*}
\end{lemma}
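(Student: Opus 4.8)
The plan is to reduce the whole statement to controlling the single data-dependent quantity $\sum_{i=1}^{N_{\varepsilon^*}}\log(1+\lambda_i(\cU_{\cD}))$, and the crucial leverage is the \emph{convexity} hypothesis in \Cref{ass:dominating_eigendecay}. First I would write $\cU_{\cD}=n\,\bar{\cL}_{\cD}$ with $\bar{\cL}_{\cD}:=\tfrac1n\sum_{j=1}^{n}\cL^{x_j,a_j}$. Since $\bar{\cL}_{\cD}$ is a convex combination of operators in $\cbr{\cL^{x,a}:x\in\cX,a\in\cA}$ and this set is convex, $\bar{\cL}_{\cD}$ again lies in it, so the $\gamma$-dominating sequence controls it: $\lambda_k(\bar{\cL}_{\cD})<\tau_k$ for every $k$. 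Also, each $\cL^{x_j,a_j}$ has operator norm $\le 1$ by \Cref{thm:math_property_ourintegral_operator}, hence $\norm{\bar{\cL}_{\cD}}\le 1$. Scaling back, $\lambda_k(\cU_{\cD})=n\,\lambda_k(\bar{\cL}_{\cD})<n\tau_k$ and $\lambda_k(\cU_{\cD})\le n$ for all $k$.

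Next I would bound the truncation level $N_{\varepsilon^*}$. By the remark following \Cref{def:hat{cU}}, every index $i\le N_{\varepsilon^*}$ satisfies $\lambda_i(\cU_{\cD})\ge n\varepsilon^*$; combined with $\lambda_i(\cU_{\cD})<n\tau_i$ this forces $\tau_i>\varepsilon^*$, hence $\tau_i^{\gamma}>(\varepsilon^*)^{\gamma}$ (using $\gamma>0$). Summing over $i\le N_{\varepsilon^*}$ and invoking $\sum_{k}\tau_k^{\gamma}\le s_0$ gives $N_{\varepsilon^*}(\varepsilon^*)^{\gamma}<s_0$, i.e. $N_{\varepsilon^*}<s_0\,(\varepsilon^*)^{-\gamma}=s_0\,n^{\frac{2\gamma}{\gamma+2}}$ after plugging in $\varepsilon^*=n^{-\frac{2}{\gamma+2}}$ (so that $(\varepsilon^*)^{-\gamma}=n^{\frac{2\gamma}{\gamma+2}}$ and $n\varepsilon^*=n^{\frac{\gamma}{\gamma+2}}$). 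The bound on the log-sum is then immediate: bounding each term by $\log(1+n)$ via $\lambda_i(\cU_{\cD})\le n$, we get $\sum_{i=1}^{N_{\varepsilon^*}}\log(1+\lambda_i(\cU_{\cD}))\le N_{\varepsilon^*}\log(1+n)\le s_0\log(1+n)\,n^{\frac{2\gamma}{\gamma+2}}$.

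Finally I would assemble the estimate on $\cE_{\cD}^{\delta}(n)$. Apply $\sqrt{a+b}\le\sqrt{a}+\sqrt{b}$ to split off the confidence term: $\sqrt{2\bigl(2\log(1/\delta)+\sum_i\log(1+\lambda_i(\cU_{\cD}))\bigr)}\le\sqrt{4\log(1/\delta)}+\sqrt{2\sum_i\log(1+\lambda_i(\cU_{\cD}))}\le 2\sqrt{\log(1/\delta)}+\sqrt{2s_0\log(1+n)}\,n^{\frac{\gamma}{\gamma+2}}$, and then use $\sqrt{2}\le 2$ to upgrade the last factor to $2\sqrt{s_0\log(1+n)}\,n^{\frac{\gamma}{\gamma+2}}$. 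Adding the unchanged term $Mn^{\frac{\gamma}{\gamma+2}}$ yields exactly $\cE_{\delta}(n)=2\log(1/\delta)^{1/2}+\bigl(2\sqrt{s_0\log(1+n)}+M\bigr)n^{\frac{\gamma}{\gamma+2}}$, as claimed.

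\textbf{Main obstacle.} The only genuinely nontrivial step is the first one: recognizing that convexity of $\cbr{\cL^{x,a}}$ is precisely what lets the per-operator eigendecay bound $\lambda_k(\cL^{x,a})<\tau_k$ pass to the averaged/summed operator $\cU_{\cD}$ in the clean form $\lambda_k(\cU_{\cD})<n\tau_k$. A Weyl-inequality argument without convexity would only give something like $\lambda_k(\cU_{\cD})\lesssim\sum_j\lambda_{\lceil k/n\rceil}(\cL^{x_j,a_j})$, losing the sharp scaling and degrading both the bound on $N_{\varepsilon^*}$ and the resulting rate. Everything downstream is elementary; the main care needed is bookkeeping the constants ($\sqrt{2\cdot 2\log(1/\delta)}=2\sqrt{\log(1/\delta)}$, $(\varepsilon^*)^{-\gamma}=n^{2\gamma/(\gamma+2)}$, $n\varepsilon^*=n^{\gamma/(\gamma+2)}$) and applying the trivial bounds $\lambda_i(\cU_{\cD})\le n$ and $\tau_i>\varepsilon^*$ in the right places.
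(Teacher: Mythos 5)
Your proof is correct and arrives at exactly the same intermediate bound $\sum_{i=1}^{N_{\varepsilon^*}}\log(1+\lambda_i(\cU_{\cD}))\le s_0\,n^{\frac{2\gamma}{\gamma+2}}\log(1+n)$ as the paper, but it gets there by a genuinely different route in both key steps. For the eigenvalue comparison $\lambda_k(\cU_{\cD})\le n\tau_k$, the paper goes through the Courant--Fischer characterization and then invokes Sion's minimax theorem to swap a $\min_{\alpha}$ with a $\max_{\cL}$; your observation that $\cU_{\cD}=n\bar{\cL}_{\cD}$ with $\bar{\cL}_{\cD}=\tfrac1n\sum_j\cL^{x_j,a_j}$ itself lying in the convex set $\cbr{\cL^{x,a}}$ uses the convexity hypothesis in its most direct form and is cleaner: it avoids the delicate point that the constraint set $\cbr{\alpha\in S_k:\norm{\alpha}=1}$ in the paper's minimax step is a sphere and hence not convex, so Sion's theorem does not literally apply there as written. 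For the log-sum, the paper manipulates each term via $\log(1+n\tau_i)\le\frac{\tau_i^{\gamma}}{\varepsilon^{\gamma}}\log(1+n\tau_i^{1-\gamma}\varepsilon^{\gamma})$, whereas you first count $N_{\varepsilon^*}<s_0(\varepsilon^*)^{-\gamma}$ from $\tau_i>\varepsilon^*$ on the retained indices and then use the crude bound $\log(1+\lambda_i)\le\log(1+n)$ (valid since $\norm{\cU_{\cD}}\le n$); this is more transparent and loses nothing in the final constant. The assembly via $\sqrt{a+b}\le\sqrt a+\sqrt b$ and $\sqrt2\le2$ matches the paper exactly. No gaps.
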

With \Cref{lemma:bound_cE_D}, we derive the following corollary which is independent of $\cD$.

\begin{corollary}\label{cor:reg_fixed_design}
    Under \Cref{ass:regression_model} and \Cref{ass:dominating_eigendecay}, given dataset $\cD=\cbr{(x_j,a_j,y_j)}_{j=1}^{n}$ and setting $\varepsilon^*=n^{-\frac{2}{\gamma+2}}$, then with probability at least $1-\delta$, we have
    $||\hat{\theta}_{\cD}-\theta^*||_{\cU_{\cD}}\le \cE_{\delta}(n)$.
\end{corollary}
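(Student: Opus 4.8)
The plan is to establish \Cref{cor:reg_fixed_design} by simply chaining \Cref{thm:reg_fixed_design} with \Cref{lemma:bound_cE_D}, so the real work is proving the lemma, which is the only non-mechanical ingredient. I would first recall the definition of $N_{\varepsilon^*}$: it is the smallest index such that $\lambda_i(\cU_{\cD}) < n\varepsilon^*$ for all $i \ge N_{\varepsilon^*}+1$, and $\varepsilon^* = n^{-2/(\gamma+2)}$, so the threshold is $n\varepsilon^* = n^{\gamma/(\gamma+2)}$. The term to control is $\sum_{i=1}^{N_{\varepsilon^*}}\log(1+\lambda_i(\cU_{\cD}))$. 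My strategy is to use the elementary bound $\log(1+z) \le z^{1-\gamma}\log(1+z^{\gamma})^{?}$... more cleanly: since for $i \le N_{\varepsilon^*}$ we have $\lambda_i(\cU_{\cD}) \ge n\varepsilon^*$, I want to write $\log(1+\lambda_i) = \log(1+\lambda_i)$ and split each term using the concavity inequality $\log(1+\lambda_i) \le \lambda_i^\gamma \cdot \frac{\log(1+\lambda_i)}{\lambda_i^\gamma}$, then bound $\frac{\log(1+\lambda_i)}{\lambda_i^\gamma}$ uniformly. Since $\cU_{\cD} = \sum_j \cL^{x_j,a_j}$ and $\|\cL^{x,a}\| \le 1$ by \Cref{thm:math_property_ourintegral_operator}, we have $\lambda_1(\cU_{\cD}) \le n$, so $\lambda_i \le n$ for all $i$ and $\log(1+\lambda_i) \le \log(1+n)$. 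Combined with the lower bound $\lambda_i \ge n\varepsilon^* = n^{\gamma/(\gamma+2)}$ on the retained eigenvalues, this gives $\log(1+\lambda_i) \le \lambda_i^\gamma \cdot (n\varepsilon^*)^{-\gamma}\log(1+n) = \lambda_i^\gamma \cdot n^{-\gamma^2/(\gamma+2)}\log(1+n)$. Wait — I should be careful; the cleanest route is $\log(1+\lambda_i) \le \frac{\log(1+\lambda_i)}{\lambda_i^{\gamma}} \lambda_i^{\gamma}$ and bound the prefactor by its value at the smallest retained eigenvalue using monotonicity of $z \mapsto z^{-\gamma}\log(1+z)$ for $z$ large, or simply by $\log(1+n)/(n\varepsilon^*)^\gamma$.

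Next, I would sum over the retained indices: $\sum_{i=1}^{N_{\varepsilon^*}}\log(1+\lambda_i(\cU_{\cD})) \le \frac{\log(1+n)}{(n\varepsilon^*)^{\gamma}}\sum_{i=1}^{N_{\varepsilon^*}}\lambda_i(\cU_{\cD})^{\gamma}$. Now I need $\sum_i \lambda_i(\cU_{\cD})^\gamma$ under \Cref{ass:dominating_eigendecay}. This is the crux: I claim $\sum_{i=1}^{\infty}\lambda_i(\cU_{\cD})^\gamma \le n^\gamma \sum_{k=1}^\infty \tau_k^\gamma \le n^\gamma s_0$. The justification uses that $\cbr{\cL^{x,a}}$ is convex (from \Cref{ass:dominating_eigendecay}), so $\frac{1}{n}\cU_{\cD} = \frac{1}{n}\sum_j \cL^{x_j,a_j}$ lies in the convex hull, hence equals some $\cL^{\bar x,\bar a}$ in the set, and therefore $\lambda_k(\tfrac{1}{n}\cU_{\cD}) \le \tau_k$ for every $k$ by the second bullet of the assumption; scaling gives $\lambda_k(\cU_{\cD}) \le n\tau_k$, and then $\sum_k \lambda_k(\cU_{\cD})^\gamma \le n^\gamma \sum_k \tau_k^\gamma \le n^\gamma s_0$. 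Plugging this in: $\sum_{i=1}^{N_{\varepsilon^*}}\log(1+\lambda_i) \le \frac{\log(1+n)}{(n\varepsilon^*)^\gamma} n^\gamma s_0 = \frac{s_0 \log(1+n)}{(\varepsilon^*)^\gamma} = s_0 \log(1+n) \cdot n^{2\gamma/(\gamma+2)}$. Hence $2\log(1/\delta) + \sum_i \log(1+\lambda_i) \le 2\log(1/\delta) + s_0\log(1+n) n^{2\gamma/(\gamma+2)}$, and applying $\sqrt{2(a+b)} \le 2\sqrt{a} + \sqrt{2b}$ together with $\sqrt{2}\cdot\sqrt{s_0\log(1+n)}\,n^{\gamma/(\gamma+2)} \le 2\sqrt{s_0\log(1+n)}\,n^{\gamma/(\gamma+2)}$ yields $\sqrt{2(2\log(1/\delta)+\sum_i\log(1+\lambda_i))} \le 2\sqrt{\log(1/\delta)} + 2\sqrt{s_0\log(1+n)}\,n^{\gamma/(\gamma+2)}$. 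Adding back the $n^{\gamma/(\gamma+2)}M$ term from $\cE_{\cD}^\delta(n)$ gives exactly $\cE_\delta(n) = 2\log(1/\delta)^{1/2} + (2\sqrt{s_0\log(1+n)} + M)n^{\gamma/(\gamma+2)}$, proving \Cref{lemma:bound_cE_D}.

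With \Cref{lemma:bound_cE_D} in hand, \Cref{cor:reg_fixed_design} is immediate: under the same assumptions and the same choice $\varepsilon^* = n^{-2/(\gamma+2)}$, \Cref{thm:reg_fixed_design} gives $\|\hat\theta_\cD - \theta^*\|_{\cU_\cD} \le \cE_\cD^\delta(n)$ with probability at least $1-\delta$, and the lemma upgrades the right-hand side to the $\cD$-independent quantity $\cE_\delta(n)$, so $\|\hat\theta_\cD - \theta^*\|_{\cU_\cD} \le \cE_\delta(n)$ on the same high-probability event.

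I expect the main obstacle to be the justification that $\lambda_k(\cU_{\cD}) \le n\tau_k$ for all $k$ — i.e., that the per-eigenvalue domination by $\tau_k$ survives taking a finite sum of operators. The convexity hypothesis in \Cref{ass:dominating_eigendecay} is precisely what makes $\tfrac1n\cU_\cD$ again a member of the operator family, but one must check that averaging (not just summing) is the right normalization and that Weyl-type eigenvalue inequalities are not needed here because the conclusion follows directly from membership in the family rather than from subadditivity of eigenvalues. A secondary, more routine point is verifying the uniform bound on $z\mapsto z^{-\gamma}\log(1+z)$ over the relevant range of retained eigenvalues $[n\varepsilon^*, n]$; since $\log(1+z)\le\log(1+n)$ on this interval and $z^{-\gamma}\le(n\varepsilon^*)^{-\gamma}$, the product bound is clean, but one should state it carefully to avoid an off-by-a-constant slip.
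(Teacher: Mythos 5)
Your proof of the corollary itself is exactly the paper's: chain \Cref{thm:reg_fixed_design} with \Cref{lemma:bound_cE_D} on the same high-probability event. Where you genuinely diverge is in the supporting argument for \Cref{lemma:bound_cE_D}, specifically the key step $\lambda_k(\cU_{\cD})\le n\tau_k$. The paper gets this via the Courant--Fischer variational characterization of $\lambda_k$ combined with Sion's minimax theorem to swap $\min_{\alpha}$ and $\max_{\cL}$, which is where the convexity hypothesis of \Cref{ass:dominating_eigendecay} enters for them. You instead use convexity directly: $\tfrac{1}{n}\cU_{\cD}=\tfrac{1}{n}\sum_j\cL^{x_j,a_j}$ is a convex combination of members of the convex family $\cbr{\cL^{x,a}}$, hence equals some $\cL^{\bar{x},\bar{a}}$, and the per-index domination $\lambda_k(\cL^{\bar{x},\bar{a}})<\tau_k$ then transfers to $\cU_{\cD}$ by linear scaling of eigenvalues. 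This is a valid and more elementary route --- it avoids the semicontinuity/compactness bookkeeping that Sion's theorem requires and makes transparent that no Weyl-type subadditivity is needed, exactly as you flagged. Your subsequent handling of $\sum_{i\le N_{\varepsilon^*}}\log(1+\lambda_i)$ also differs slightly in its algebra (you bound the prefactor $z^{-\gamma}\log(1+z)$ uniformly on $[n\varepsilon^*,n]$ using $\lambda_1(\cU_{\cD})\le n$, whereas the paper uses $\log(1+ab)\le a\log(1+b)$ for $a\ge1$), but both yield the same $s_0\log(1+n)\,n^{2\gamma/(\gamma+2)}$ bound and hence the same $\cE_{\delta}(n)$. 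Both approaches are correct; yours trades the paper's minimax machinery for a one-line membership argument.
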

When the data points are sampled i.i.d. according to some distribution, we could further give an oracle inequality for random design. Taking it into consideration, our oracle inequality for random design helps to bound the $L^2$ error between the ground truth CDF and our estimated CDF.

We first state some mild assumptions in random design.
\begin{assumption}\label{ass:nonsingular}
    We assume that
    for any $x,a,w,r$, $\exists \eta>0$ it holds that that $\cK^{x,a}(w,r)=\int_{S}\phi(x,a,w,s)\phi(x,a,r,s)dm(s)\ge \eta>0$.
\end{assumption}
\Cref{ass:nonsingular} plays a role guarantee the integral operator will not map a function far from $0\in L^2(\Omega,\nu)$ to some near-zero point, which is a bit similar to the concept of proper mapping in convex analysis~\citep{magaril2003convex}.
With \Cref{ass:regression_model}, we know that $\Omega\times\Omega$ is also compact in $\RR^{2d}$. Thus, we have $\cN(t,\Omega\times\Omega,||\cdot||_{\infty})\le \rbr{\frac{A}{t}}^{2d}$ for some $A$, delayed to \Cref{lemma:covering_num}.
We finally transfer the estimation error of $\hat{\theta}_{\cD}$ to the $L^2$ error between cumulative distribution functions,
\begin{align*}
    ||\hat{\theta}_{\cD}-\theta^*||_{\cU_{\cD}}^2=&\inner{\hat{\theta}_{\cD}-\theta^*}{\cU_{\cD}\rbr{\hat{\theta}_{\cD}-\theta^*}}\\
    =&\sum_{j=1}^{n}\inner{\hat{\theta}_{\cD}-\theta^*}{\cL^{x_j,a_j}\rbr{\hat{\theta}_{\cD}-\theta^*}}\\
    =&\sum_{j=1}^{n}\int_{\cX,\cA}\int_{S}\rbr{\int_{\Omega}\rbr{\hat{\theta}_{\cD}-\theta^*}(w)\phi(x_j,a_j,w,s)d\nu(w)}\rbr{\int_{\Omega}\rbr{\hat{\theta}_{\cD}-\theta^*}(r)\phi(x_j,a_j,r,s)d\nu(r)}dm(s)\\
    =&\sum_{j=1}^{n}\sbr{||\hat{F}_{\cD}(x_j,a_j,s)-F^*(x_j,a_j,s)||_{L^2(S,m)}^2}.
\end{align*}
Consequently, combining \Cref{thm:reg_fixed_design} with some concentration analysis and \Cref{lemma:bound_cE_D}, we have the following theorem.
\begin{theorem}\label{thm:reg_random_design}
    Given some dataset $\cD=\cbr{(x_j,a_j,y_j)}_{j=1}^{n}$ where $(x_j,y_j)\overset{i.i.d.}{\sim}Q$, we define $\hat{F}_{\cD}(x,a,s):=\int_{\Omega}\hat{\theta}_{\cD}(w)\phi(x,a,w,s)d\nu(w)$ to be our estimated cumulative distribution function under $x,a$. For any $0\le\delta\le 1/2$, we have that with probability at least $1-2\delta$,
    \[
    \EE_{(x,a)\sim Q}\sbr{||\hat{F}_{\cD}(x,a,s)-F^*(x,a,s)||_{L^2(S,m)}^2}\le \frac{C(d,L_0,\delta,A,\eta)\cE_{\delta}(n)^2}{n},
    \]
    where $C(d,L_0,\delta,A,\eta)=\rbr{1+(48\sqrt{d\log(2L_0A)}+2\sqrt{\log(1/\delta)})/\eta}$ is some constant that does not influence the $L^2$ error rate with respect to $n$.
\end{theorem}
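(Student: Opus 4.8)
The target is an empirical‑to‑population transfer for the squared $L^2$ prediction error, equipped with a fast ($1/n$) rate, so the plan is: (i) read off an \emph{empirical} bound from \Cref{cor:reg_fixed_design}; (ii) show the \emph{population} error at a fixed parameter is controlled by the empirical average up to an $O(1/n)$ term, using that the per‑sample losses lie in $[0,1]$ and hence have variance at most their mean (Bernstein); (iii) make (ii) uniform over the infinite‑dimensional coefficient class $\cC$ so it applies to the data‑dependent $\hat\theta_\cD$, via a covering of $\Omega\times\Omega$ together with the Lipschitz and eigendecay structure; (iv) union‑bound. Write $\overline{\cL}:=\EE_{(x,a)\sim Q}\cL^{x,a}$; by \Cref{thm:math_property_ourintegral_operator} it is again a positive, self‑adjoint, Hilbert--Schmidt operator with kernel $\overline{\cK}(w,r)=\EE_{(x,a)\sim Q}\cK^{x,a}(w,r)$, and by Fubini $\EE_{(x,a)\sim Q}\norm{\inner{\Delta}{\phi(x,a,\cdot,s)}}_{L^2(S,m)}^2=\inner{\Delta}{\overline{\cL}\Delta}$ for every $\Delta\in L^2(\Omega,\nu)$; with $\Delta=\hat\theta_\cD-\theta^*$ the left side is precisely the quantity to bound. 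For the empirical bound: conditioning on the design, \Cref{cor:reg_fixed_design} (which uses \Cref{ass:regression_model} and \Cref{ass:dominating_eigendecay}) gives, on an event $\cE_1$ with $\PP(\cE_1)\ge 1-\delta$ over the draw of $\cbr{y_j}$, $\norm{\hat\theta_\cD-\theta^*}_{\cU_\cD}^2\le\cE_\delta(n)^2$, which by the identity displayed just before the theorem equals $\sum_{j=1}^n\norm{\hat F_\cD(x_j,a_j,s)-F^*(x_j,a_j,s)}_{L^2(S,m)}^2$. I also record $\cE_\delta(n)^2\ge 4\log(1/\delta)$ and $\cE_\delta(n)^2\ge M^2 n^{2\gamma/(\gamma+2)}\ge M^2$, so that any lower‑order $O(1/n)$ slack below is absorbable into $C\,\cE_\delta(n)^2/n$.

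\emph{Fast rate at a fixed $\theta$.} Fix $\theta\in\cC$, set $\Delta=\theta-\theta^*$ and $Z_j=\norm{\inner{\Delta}{\phi(x_j,a_j,\cdot,s)}}_{L^2(S,m)}^2$. Since $\inner{\theta}{\phi(x,a,\cdot,s)}$ and $F^*(x,a,s)$ are both CDFs (\Cref{ass:regression_model}) and $m(S)=1$, the $Z_j$ are i.i.d.\ with values in $[0,1]$, mean $\mu_\theta:=\inner{\Delta}{\overline{\cL}\Delta}$, and $\mathrm{Var}(Z_j)\le\EE[Z_j^2]\le\EE[Z_j]=\mu_\theta$. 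Bernstein's inequality and solving the resulting quadratic in $\sqrt{\mu_\theta}$ then give, for this fixed $\theta$ and with probability at least $1-\delta$, $\mu_\theta\le 2\,\frac1n\sum_{j=1}^n Z_j+\frac{c\log(1/\delta)}{n}$. Evaluated at $\theta=\hat\theta_\cD$ and combined with the empirical bound this is already the claim; the only gap is that $\hat\theta_\cD$ is data‑dependent, so the display has to hold uniformly over $\theta\in\cC$.

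\emph{Uniformity over $\cC$.} To upgrade the previous step, I would discretize. \Cref{lemma:covering_num} gives a $t$‑net of $\Omega\times\Omega$ in $\norm{\cdot}_\infty$ of cardinality at most $(A/t)^{2d}$, and \Cref{assump:Lip_cts_phi} makes each kernel $\cK^{x,a}$ Lipschitz (constant $O(L_0)$) on $\Omega\times\Omega$, so replacing each $\cK^{x,a}$ by its net values perturbs every quadratic form $Z_j=\inner{\Delta}{\cL^{x,a}\Delta}$ by at most $O(L_0 t\,\norm{\Delta}_{L^1(\nu)}^2)=O(L_0 t M^2)$, which is $O(1/n)$ once $t$ is polynomially small in $n$. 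On the finitely many net pairs $(w_k,r_l)$ the i.i.d.\ variables $\cK^{x_j,a_j}(w_k,r_l)$ take values in $[\eta,1]$ (the lower bound $\eta$ from \Cref{ass:nonsingular}), so a Bernstein bound plus a union bound controls $\sup_{(w,r)}|\frac1n\sum_j\cK^{x_j,a_j}(w,r)-\overline{\cK}(w,r)|$; feeding this into the variance‑to‑mean relation above, together with the $\gamma$‑dominating eigendecay (\Cref{ass:dominating_eigendecay}), which keeps the number of non‑negligible eigendirections of $\cU_\cD$ and $\overline{\cL}$ --- hence the empirical‑$L^2$ metric entropy of the loss class $\cbr{(x,a)\mapsto Z_j:\theta\in\cC}$ --- under control, upgrades the bound to: on an event $\cE_2$ with $\PP(\cE_2)\ge1-\delta$, for all $\theta\in\cC$, $\mu_\theta\le 2\,\frac1n\sum_j Z_j+\frac{c(\sqrt{d\log(2L_0A)}+\sqrt{\log(1/\delta)})}{\eta}\cdot\frac{\cE_\delta(n)^2}{n}$, the $1/\eta$ being the non‑degeneracy price that yields the stated shape of $C(d,L_0,\delta,A,\eta)$.

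\emph{Conclusion, and the main obstacle.} On $\cE_1\cap\cE_2$, of probability at least $1-2\delta$ by a union bound, instantiate the uniform bound at $\theta=\hat\theta_\cD$ and use $\frac1n\sum_j Z_j=\frac1n\norm{\hat\theta_\cD-\theta^*}_{\cU_\cD}^2\le\cE_\delta(n)^2/n$, giving $\EE_{(x,a)\sim Q}\norm{\hat F_\cD(x,a,s)-F^*(x,a,s)}_{L^2(S,m)}^2=\mu_{\hat\theta_\cD}\le\big(2+c(\sqrt{d\log(2L_0A)}+\sqrt{\log(1/\delta)})/\eta\big)\,\cE_\delta(n)^2/n$, of the claimed form. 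The step I expect to be genuinely hard is the uniform (as opposed to pointwise) fast‑rate bound: the naive uniform deviation of the empirical form $\frac1n\norm{\cdot}_{\cU_\cD}^2$ from the population form $\norm{\cdot}_{\overline{\cL}}^2$ is only of order $M^2\sqrt{\log(\cdot)/n}$, which is far too weak when $\gamma$ is small (there $\cE_\delta(n)^2/n$ can be as tiny as $O(\log n/n)$), so the variance‑to‑mean localization must be carried all the way through the $\Omega$‑discretization and the eigendecay bookkeeping, and the net scale $t$, the net cardinality, the $\eta$ factor, and the Bernstein localization have to be balanced so the residual slack is genuinely $O(1/n)$ rather than $O(1/\sqrt n)$.
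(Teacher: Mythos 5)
Your proposal correctly identifies the crux --- transferring the empirical bound $\frac1n\|\hat\theta_\cD-\theta^*\|_{\cU_\cD}^2\le\cE_\delta(n)^2/n$ to the population quadratic form $\inner{\hat\theta_\cD-\theta^*}{\overline{\cL}(\hat\theta_\cD-\theta^*)}$ uniformly over the data-dependent estimator without losing the fast rate --- and you assemble the right raw ingredients: the covering number of $\Omega\times\Omega$ from \Cref{lemma:covering_num}, the Lipschitz continuity of the kernel from \Cref{assump:Lip_cts_phi}, the lower bound $\eta$ from \Cref{ass:nonsingular}, and uniform concentration of $\sup_{(w,r)}\bigl|\frac1n\sum_j\cK^{x_j,a_j}(w,r)-\overline{\cK}(w,r)\bigr|$. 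But the step you flag as ``genuinely hard'' is exactly the step you never carry out: the claimed uniform-over-$\cC$ bound $\mu_\theta\le 2\cdot\frac1n\sum_jZ_j+O(\cE_\delta(n)^2/n)$ is asserted, with the eigendecay ``keeping the metric entropy of the loss class under control'' invoked but not executed. A Bernstein/variance-to-mean localization over the infinite-dimensional class $\cC$ is a substantially harder argument whose constants and entropy bookkeeping you have not verified, so as written the proof has a gap at its central step.

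The idea that closes it --- and the one the paper uses --- is to make the comparison \emph{multiplicative at the kernel level} rather than additive at the loss level. Once you have, with probability $1-\delta$, $\sup_{(w,r)}|Q_n(\cK(w,r))-Q(\cK(w,r))|\le\epsilon_n$ with $\epsilon_n=\bigl(48\sqrt{d\log(2L_0A)}+\sqrt{2\log(1/\delta)}\bigr)/\sqrt{n}$ (via symmetrization and Dudley's entropy integral over the $(A/t)^{2d}$-covering, exactly as you sketch), the lower bound $Q_n(\cK(w,r))\ge\eta$ gives $Q(\cK(w,r))\le(1+\epsilon_n/\eta)\,Q_n(\cK(w,r))$ uniformly in $(w,r)$. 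Integrating this ratio inside the quadratic form yields $\inner{\theta}{\cU^*(\theta)}\le(1+\epsilon_n/\eta)\inner{\theta}{\cU_{\cD}(\theta)}$ simultaneously for all $\theta$, with no localization, no Bernstein at the loss level, and no use of the eigendecay assumption beyond what is already inside $\cE_\delta(n)$. The point is that an $O(1/\sqrt n)$ deviation of the kernel enters only as a multiplicative factor $(1+O(1/\sqrt n))$ on the already-fast empirical rate, which is precisely how the $1/n$ rate survives; your worry about the residual slack being $O(1/\sqrt n)$ dissolves because there is no additive residual at all. Instantiating at $\theta=\hat\theta_\cD-\theta^*$, using $\inner{\theta}{\cU^*(\theta)}=n\,\EE_{(x,a)\sim Q}\|\hat F_\cD-F^*\|_{L^2(S,m)}^2$, and union-bounding with the fixed-design event finishes the proof. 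Your fixed-$\theta$ Bernstein step (ii) is correct but ends up unnecessary on this route.
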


\section{Algorithm}\label{sec:alg}

We provide an algorithm in \Cref{sec:alg} that incorporates our functional regression oracle to minimize the expected utility regret. 

It is computationally expensive to calculate eigenvalues and corresponding eigenfunctions of an integral operator. Hence, it is desirable to develop algorithms with a low-frequency oracle call property. Inspired by \citet{simchi2020bypassing,qian2024offline}, we design an Inverse Gap Weighting policy in a batched version possessing the low-frequency oracle call property.

Before delving into the specific algorithm, the following corollary indicates that a well-bounded estimation error implies a well-bounded decision-making error. 
\begin{theorem}\label{thm:utility_bound}
    Given some dataset $\cD=\cbr{(x_j,a_j,y_j)}_{j=1}^{n}$ where $(x_j,y_j)\overset{i.i.d.}{\sim}Q$, we define $\hat{F}_{\cD}(x,a,s):=\int_{\Omega}\hat{\theta}_{\cD}(w)\phi(x,a,w,s)d\nu(w)$ as our estimated cumulative distribution function. For any $0<\delta< 1$, we have that with probability at least $1-\delta$,
    \[
    \EE_{(x,a)\sim Q}\sbr{\rbr{\cT(\hat{F}_{\cD}(x,a,t))-\cT(F^*(x,a,t))}^2}\le \frac{L^2C(d,L_0,\delta/2,A,\eta)\cE_{\delta/2}(n)^2}{n},
    \]
    where $C(d,L_0,\delta/2,A,\eta)=\rbr{1+(48\sqrt{d\log(2L_0A)}+2\sqrt{\log(2/\delta)})/\eta}$.
\end{theorem}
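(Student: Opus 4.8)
The plan is to reduce the statement to \Cref{thm:reg_random_design} by exploiting the Lipschitz continuity of the functional $\cT$. First I would fix an arbitrary pair $(x,a)\in\cX\times\cA$ and check that both $F^*(x,a,\cdot)$ and $\hat{F}_{\cD}(x,a,\cdot)$ are genuine cumulative distribution functions of $S$-valued random variables: the former by \Cref{ass:regression_model}, and the latter because $\hat{\theta}_{\cD}=\cP_{\cD}(\theta_{\cD})\in\cC$ is non-negative with unit $\nu$-integral while $0\le\phi(x,a,w,s)\le1$, so $\hat{F}_{\cD}(x,a,s)=\inner{\hat{\theta}_{\cD}(\cdot)}{\phi(x,a,\cdot,s)}$ is monotone, right-continuous and $[0,1]$-valued. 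Since $m(S)=1$, both functions lie in $L^2(S,m)$, so \Cref{ass:lip_cts_functional} applies and gives the pointwise bound
\[
\rbr{\cT(\hat{F}_{\cD}(x,a,t))-\cT(F^*(x,a,t))}^2\le L^2\,\norm{\hat{F}_{\cD}(x,a,s)-F^*(x,a,s)}_{L^2(S,m)}^2.
\]

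Next I would integrate this inequality against the law $Q$ of $(x,a)$. By monotonicity of expectation,
\[
\EE_{(x,a)\sim Q}\sbr{\rbr{\cT(\hat{F}_{\cD}(x,a,t))-\cT(F^*(x,a,t))}^2}\le L^2\,\EE_{(x,a)\sim Q}\sbr{\norm{\hat{F}_{\cD}(x,a,s)-F^*(x,a,s)}_{L^2(S,m)}^2}.
\]
Finally I would invoke \Cref{thm:reg_random_design} with confidence level $\delta/2$ in place of $\delta$: that theorem already yields a $1-2(\delta/2)=1-\delta$ probability guarantee, with the right-hand side $C(d,L_0,\delta/2,A,\eta)\,\cE_{\delta/2}(n)^2/n$. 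Substituting this bound for the expected $L^2$ CDF error on the right-hand side above completes the argument and produces exactly the constant stated in the theorem.

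The content here is essentially bookkeeping once \Cref{thm:reg_random_design} is available; the only step warranting a line of care is verifying that $\hat{F}_{\cD}(x,a,\cdot)$ is a bona fide element of $L^2(S,m)$ so that the Lipschitz hypothesis on $\cT$ is legitimately applicable — this is precisely where the projection $\cP_{\cD}$ onto $\cC$ in \Cref{alg:regression_oracle_pseudo_code}, rather than using the raw solution $\theta_{\cD}$, is needed. No new concentration or spectral estimates beyond those already established are required.
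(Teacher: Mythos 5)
Your proposal is correct and follows essentially the same route as the paper's own proof: apply the $L$-Lipschitz continuity of $\cT$ pointwise, take expectations over $Q$, and invoke \Cref{thm:reg_random_design} at confidence level $\delta/2$. The extra check that $\hat{F}_{\cD}(x,a,\cdot)\in L^2(S,m)$ via the projection onto $\cC$ is a reasonable bit of added care that the paper leaves implicit.
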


We use $\mathsf{Est}_{\delta}(n)$ to denote the number $L^2C(d,L_0,\delta,A,\eta)\cE_{\delta}(n)^2$.
For any $x,a$, we could view $\cT(F^*(x,a,s))$ as the unknown expected ``reward'' related to action $a$ given context $x$. Therefore, we transform our abstract sequential decision-making problem into a stochastic contextual bandit problem. Although in our problem, at every round, we do not directly observe a sample point from the ``reward'' distribution, we could still estimate it by functional regression $\Alg$, yielding the desired reduction from the sequential decision-making problem to a contextual bandit. 


We summarize our algorithm in~\Cref{alg:IGW_pseudo_code}.
For the algorithm structure, we first divide the whole $T$ rounds into several epochs and geometrically increase the length of every epoch so that the low-frequency oracle call property is automatically satisfied in \Cref{alg:IGW_pseudo_code}. At the beginning of every epoch $m$, we call our functional regression oracle $\Alg$ based on the i.i.d. data gathered from the last epoch to get an estimation $\hat{\theta}_m$. We then design our inverse gap weighting policy $\pi_m$ based on $\hat{\theta}_m$ and execute it throughout this epoch. By such a structure, we could ensure that the data generated throughout this epoch are i.i.d. so that we can use \Cref{thm:reg_random_design} to bound our $L^2$ estimation error, and therefore the utility regret.

To be specific, we first impose epoch schedule $\tau_m=2^m$, which means that the $m+1$th epoch is twice as long as the $m$th epoch. Therefore, the statistical guarantee we get from epoch $m+1$ is stronger than that of epoch $m$. As $m$ scales, our estimation becomes more and more accurate. The Inverse Gap Weighting policy enables us to balance the exploration and exploitation trade-off to maintain a low regret just assuming access to an offline regression oracle with i.i.d. input data. We now provide the theoretical guarantee of our \Cref{alg:IGW_pseudo_code} as follows.

\begin{algorithm}[ht]
\caption{Stochastic Contextual Decision Making with Infinite Functional Regression}\label{alg:IGW_pseudo_code}
    \begin{algorithmic}
        \Require Functional $\cT$, feature space $\cX$, action space $\cA=\cbr{a_1\cdots,a_K}$ , basis function family $\cbr{\phi(x,a,w,u)}_{w\in\Omega}$,  regression space $L^2(\Omega,\nu)$, range space of random variables $(S,m)$.
        \State Initialize epoch schedule $0=\xi_0<\xi_1<\xi_2<\cdots$
        \For{$m=1,2,\cdots$}
        \State Obtain $\hat{\theta}_m\in L^2(\Omega,\nu)$ from oracle $\Alg$ with dataset
        $\cD_{m-1}=\cbr{(x_t,a_t,y_t)}_{t=\tau_{m-2}+1}^{\tau_{m-1}}$,
        $$\hat{\theta}_m=\Alg(\cD_{m-1}).$$
        \State Define exploration parameter $\varsigma_m=\frac{1}{2}\sqrt{\frac{K}{\mathsf{Est}_{\delta/2m^2}(\tau_{m-1}-\tau_{m-2})}}$ (for epoch 1, $\varsigma_1=1$).
        \For{Round $t=\xi_{m-1}+1\cdots,\xi_m$}
        \State Observe $x_t$.
        \State Compute the value of functional for every action $$\hat{v}_m(a)=\cT(\int_{\Omega}\hat{\theta}_{m}\phi(x_t,a,w,s)d\nu(w))\ \forall a\in\cA.$$ 
        \State Define $\mathsf{IGW}_{\varsigma_m}$ policy:
        \State 
        \begin{equation*}
  p_t(a) =
    \begin{cases}
      \frac{1}{K+\varsigma_m\rbr{\hat{v}_m(\hat{a}_m)-\hat{v}_m(a)}}& \text{for all $a\neq \hat{a}_m$}\\
      1-\sum_{a\neq \hat{a}_t}p_t(a) & \text{for $a=\hat{a}_m$}.
    \end{cases}       
\end{equation*}        
\State Sample $a_t\sim p_t$ and observe one data point from cumulative distribution function $F^*(x_t,a_t,s).$
\EndFor
\EndFor
    \end{algorithmic}
\end{algorithm}

\begin{theorem}\label{thm:theoretical_guarantee} For stochastic context setting, assuming that we can only call functional regression oracle $\ceil{\log(T)}$ times, then with probability at least $1-\delta$, the expected regret of \Cref{alg:IGW_pseudo_code} after $T$ rounds is at most
\[
\EE[\text{Reg}(T)]\le\Tilde{C}(K,L,L_0,A,d,\eta)\rbr{(s_0^{1/2}+M)\vee 1}\cO\rbr{\log(\log T/\delta)\rbr{\sqrt{T}+T^{\frac{3\gamma+2}{2(\gamma+2)}}}},
\]
    where $\Tilde{C}(A,K,L,L_0,d,\eta)$ is some constant that is only relevant to the parameters in the bracket.
\end{theorem}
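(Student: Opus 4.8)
The plan is to reduce the abstract decision problem to a $K$-armed stochastic contextual bandit with unobserved but estimable rewards and then run the epoch-based inverse-gap-weighting reduction of \citet{simchi2020bypassing} on top of the oracle inequality \Cref{thm:utility_bound}, in the spirit of \citet{foster2020beyond}. First I record the boundedness facts that make everything well posed: since $\cT$ is $L$-Lipschitz on $L^2(S,m)$, every true CDF $F^*(x,a,\cdot)$ and every estimated CDF $\int_{\Omega}\hat\theta\,\phi(x,a,\cdot,s)\,d\nu$ with $\hat\theta\in\cC$ lies in $[0,1]$, and $m(S)=1$, the utility $v(x,a):=\cT(F^*(x,a,\cdot))$ and every estimate $\hat v_m(a)=\cT(\int_{\Omega}\hat\theta_m\,\phi(x_t,a,\cdot,s)\,d\nu)$ lie in an interval of length $B=O(L)$; hence the instantaneous regret is $O(B)$ and all $\mathsf{IGW}$ weights are legitimate probabilities. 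Next I fix a geometrically growing epoch schedule, so that there are $M_T=O(\log T)$ epochs (matching the $\lceil\log T\rceil$ oracle-call budget), epoch $m$ has length $n_m\asymp 2^m$ (hence $n_m\asymp 2n_{m-1}$), and --- the structural point --- the policy $\pi_m=\mathsf{IGW}_{\varsigma_m}(\hat v_m)$ is held fixed throughout epoch $m$, so the triples collected in epoch $m$ are i.i.d.\ from $Q_m:=Q_{\cX}\circ\pi_m$ and \Cref{thm:utility_bound} applies verbatim to the next estimate $\hat\theta_{m+1}=\Alg(\cD_m)$, where $\cD_m$ is the data of epoch $m$.

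Next I build the high-probability event. Applying \Cref{thm:utility_bound} in epoch $m$ with confidence $\delta_m=\delta/(2m^2)$ to the $n_{m-1}$ i.i.d.\ samples in $\cD_{m-1}$ gives, with probability at least $1-\delta_m$,
\[
\EE_{x\sim Q_{\cX}}\,\EE_{a\sim\pi_{m-1}(x)}\big[(\hat v_m(a)-v(x,a))^2\big]\ \le\ \frac{\mathsf{Est}_{\delta_m}(n_{m-1})}{n_{m-1}}\ =:\ \rho_{m-1},
\]
and a union bound over $m\le M_T$ leaves total failure probability $\sum_m\delta_m\le\delta$, so it suffices to argue on this event. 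There I invoke the standard IGW regret inequality (established as an auxiliary lemma in the manner of \citet{foster2020beyond}): for $\pi_m=\mathsf{IGW}_{\varsigma_m}(\hat v_m)$ and any context $x$,
\[
\EE_{a\sim\pi_m(x)}\big[v(x,a^*(x))-v(x,a)\big]\ \le\ \frac{2K}{\varsigma_m}+\frac{\varsigma_m}{4}\,\EE_{a\sim\pi_m(x)}\big[(\hat v_m(a)-v(x,a))^2\big].
\]
Taking $\EE_x$, summing over the $n_m$ rounds of epoch $m$ and then over epochs reduces the theorem to bounding $\sum_{m\ge 2} n_m\big(\frac{2K}{\varsigma_m}+\frac{\varsigma_m}{4}\,\EE_{Q_m}[(\hat v_m-v)^2]\big)$, with the warm-up epoch $m=1$ (where $\varsigma_1=1$) contributing a separate $O(n_1)=O(1)$.

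The main obstacle is the \emph{off-policy transfer}: the oracle inequality above controls the mean-squared error under the data-collecting policy $\pi_{m-1}$, whereas the IGW bound needs it under the deployed policy $\pi_m$. I would resolve this exactly as in \citet{simchi2020bypassing}: using the explicit form of the $\mathsf{IGW}$ weights --- every non-greedy action receives probability at most $1/K$, and the geometric schedule $n_m\asymp 2n_{m-1}$ together with the slow variation of $\mathsf{Est}(\cdot)$ keeps $\varsigma_m/\varsigma_{m-1}$ within a universal constant --- one shows the change-of-measure factor $\sup_{a}\pi_m(a\mid x)/\pi_{m-1}(a\mid x)$ is $O(1)$, so that $\EE_{Q_m}[(\hat v_m-v)^2]=O(\rho_{m-1})$. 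The careful version of this step (as in FALCON) additionally exploits that an action whose \emph{estimated} gap is large --- hence whose $\pi_m$-probability is already small --- can only have a large pointwise prediction error at the cost of inflating $\rho_{m-1}$, so the transfer loses no extra $K$ factor. Substituting this back and noting that the tuning $\varsigma_m\asymp\sqrt{K/\rho_{m-1}}$ used in \Cref{alg:IGW_pseudo_code} balances the two terms yields a per-epoch regret of $O\big(n_m\sqrt{K\rho_{m-1}}\big)$.

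Finally I substitute the explicit rate. By \Cref{lemma:bound_cE_D}, $\rho_{m-1}=\mathsf{Est}_{\delta_m}(n_{m-1})/n_{m-1}\le \frac{2L^2C}{n_{m-1}}\big(4\log(1/\delta_m)+(2\sqrt{s_0\log(1+n_{m-1})}+M)^2\,n_{m-1}^{2\gamma/(\gamma+2)}\big)$ (with $C$ the constant of \Cref{thm:utility_bound}), so, using $n_m\asymp 2n_{m-1}$ and $\frac{\gamma}{\gamma+2}+\frac12=\frac{3\gamma+2}{2(\gamma+2)}$, up to an absolute constant
\[
n_m\sqrt{K\rho_{m-1}}\ \le\ \sqrt{KL^2C}\,\Big(\sqrt{n_m\log(1/\delta_m)}\ +\ \big(\sqrt{s_0\log(1+n_m)}+M\big)\,n_m^{\frac{3\gamma+2}{2(\gamma+2)}}\Big).
\]
Both exponents $\frac12$ and $\frac{3\gamma+2}{2(\gamma+2)}$ lie in $(0,1)$ for $0<\gamma\le 1$, so the geometric sum $\sum_{m\le M_T}$ is dominated, up to a constant, by its last term with $n_m\le T$; plugging $\log(1/\delta_m)=\log(2m^2/\delta)=O(\log(\log T/\delta))$ for $m\le\lceil\log_2 T\rceil$, absorbing the $\log(1+T)$ factor, and folding the problem-dependent quantities into $\Tilde C(K,L,L_0,A,d,\eta)$ and $(s_0^{1/2}+M)\vee1$ produces the claimed
\[
\EE[\text{Reg}(T)]\ \le\ \Tilde C(K,L,L_0,A,d,\eta)\,\big((s_0^{1/2}+M)\vee1\big)\,\cO\!\Big(\log(\log T/\delta)\big(\sqrt T+T^{\frac{3\gamma+2}{2(\gamma+2)}}\big)\Big).
\]
Everything apart from the off-policy-transfer step is routine: the boundedness reductions, a union bound over $O(\log T)$ epochs, the single-line IGW inequality, and a geometric-sum estimate.
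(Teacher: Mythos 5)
Your proposal is correct and follows essentially the same route as the paper: reduce to the epoch-based inverse-gap-weighting guarantee of \citet{simchi2020bypassing} (the paper imports this as \Cref{lemma:general_guarantte_IGW} as a black box, whereas you sketch its internals, including the off-policy transfer step), then substitute $\mathsf{Est}_{\delta}(n)=L^2C\,\cE_{\delta/2}(n)^2$ from \Cref{thm:utility_bound} and \Cref{lemma:bound_cE_D} and evaluate the geometric sum using $\tfrac12+\tfrac{\gamma}{\gamma+2}=\tfrac{3\gamma+2}{2(\gamma+2)}$. The final bookkeeping of logarithmic factors matches the paper's own (including the same glossing of the $\sqrt{\log(1+n_m)}$ term), so no substantive gap remains.
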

As $0<\gamma\le 1$, the term $\sqrt{T}$ is dominated by $T^{\frac{3\gamma+2}{2(\gamma+2)}
}$ and our regret rate becomes 
\[\cO\rbr{\Tilde{C}(K,L,L_0,A,d,\eta)\rbr{(s_0^{1/2}+M)\vee 1}T^{\frac{3\gamma+2}{2(\gamma+2)}}(\log(\log T/\delta))}.\]
From \Cref{thm:theoretical_guarantee}, we observe that the regret rate is determined by the decay speed of the eigenvalue sequence. Specifically, when $\gamma\searrow 0$, our regret rate is closer and closer to the $\sqrt{T}$ order, which matches the traditional optimal regret rate. If the eigenvalue sequence is decaying exponentially fast, then it is also decaying polynomially fast for any positive $\gamma>0$. Thus, by choosing $\gamma\searrow 0$, we conclude that the regret rate has order $\tilde{\cO}\rbr{\sqrt{T}}$ for any exponentially decaying eigenvalue sequence. Moreover, for finite-dimensional problems, we could imagine that all the eigenvalues with an index larger than the dimension number are zero. So, we can also set $\gamma=0$ in \Cref{thm:theoretical_guarantee} and recover the optimal regret rate $\tilde{\cO}(T^{1/2})$ up to some logarithmic factors. 

On the other hand, if we have no prior knowledge about the order of the dominating sequence $\cbr{\tau_{i}}_{i=1}^{\infty}$ but just know that it converges, we can simply set it to $1$ and still achieve sublinear regret, as shown in \Cref{cor:gamma=1_regret}. 
\begin{corollary}\label{cor:gamma=1_regret}
    For the dominating sequence $\cbr{\tau_{i}}_{i=1}^{\infty}$, if we only have information that it converges without any knowledge of the order $\gamma$, we can set $\gamma=1$ and get $\frac{3\gamma+2}{2(\gamma+2)}=\frac{5}{6}$. Therefore, we could obtain the following expected regret bound.
    \[
    \EE\text{Reg}(T)\le \cO\rbr{\Tilde{C}(A,K,L,L_0,d,\eta)\rbr{(s_0^{1/2}+M)\vee 1}(\log(\log T/\delta))T^{\frac{5}{6}}}.
    \]
\end{corollary}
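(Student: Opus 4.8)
\textbf{Proof proposal for Corollary \ref{cor:gamma=1_regret}.}

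The plan is to derive this corollary as a direct specialization of \Cref{thm:theoretical_guarantee}, the only substantive content being the verification that $\gamma=1$ is a \emph{legitimate} value to feed into \Cref{alg:regression_oracle_pseudo_code} and \Cref{alg:IGW_pseudo_code} whenever all we are told is that the dominating series converges. Recall that the algorithms take $\gamma$ (and the dominating sequence $\cbr{\tau_i}$) as explicit inputs and use them only through the truncation threshold $\varepsilon^*=n^{-\frac{2}{\gamma+2}}$; the theorem's guarantee holds for \emph{any} $\gamma\in(0,1]$ for which \Cref{ass:dominating_eigendecay} is satisfiable. Thus it suffices to exhibit $\gamma=1$ as such a satisfiable value and then read off the resulting exponent.

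First I would observe that the hypothesis ``the dominating sequence $\cbr{\tau_i}_{i=1}^{\infty}$ converges'' means precisely $\sum_{k=1}^{\infty}\tau_k<\infty$, so setting $s_0:=\sum_{k=1}^{\infty}\tau_k$ gives exactly the first bullet of \Cref{ass:dominating_eigendecay} with $\gamma=1$, namely $\sum_{k=1}^{\infty}\tau_k^{\gamma}=\sum_{k=1}^{\infty}\tau_k=s_0<\infty$. The second bullet, $\lambda_k(\cL^{x,a})<\tau_k\le\cO(\tfrac{1}{k})$ for all $x,a,k$, is a statement about the operators and the sequence alone and carries no dependence on $\gamma$; it is part of what it means for $\cbr{\tau_i}$ to be a dominating sequence and is therefore available unchanged. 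Since $\gamma=1$ lies in the admissible range $(0,1]$, \Cref{ass:dominating_eigendecay} holds with this choice, and \Cref{alg:IGW_pseudo_code} run with input $\gamma=1$ meets all the preconditions of \Cref{thm:theoretical_guarantee}. I would also flag the robustness point that justifies calling $\gamma=1$ the ``safe default'': if some smaller $\gamma^*<1$ were in fact valid, then because $\tau_k\le\cO(\tfrac{1}{k})\to 0$ we have $\tau_k<1$ for all large $k$, whence $\tau_k=\tau_k^{1}\le\tau_k^{\gamma^*}$ and $\sum_k\tau_k\le\sum_k\tau_k^{\gamma^*}+(\text{finitely many terms})<\infty$; so convergence at any admissible $\gamma^*$ already implies convergence at $\gamma=1$, making $\gamma=1$ valid in every case where the assumption can be met at all, at the cost of the weakest (largest) exponent.

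Finally I would substitute $\gamma=1$ into \Cref{thm:theoretical_guarantee}. The exponent becomes $\frac{3\gamma+2}{2(\gamma+2)}=\frac{3+2}{2(1+2)}=\frac{5}{6}$, and the bound reads
\[
\EE[\text{Reg}(T)]\le\Tilde{C}(A,K,L,L_0,d,\eta)\rbr{(s_0^{1/2}+M)\vee 1}\cO\rbr{\log(\log T/\delta)\rbr{\sqrt{T}+T^{\frac{5}{6}}}}.
\]
Since $\tfrac{5}{6}>\tfrac{1}{2}$, the term $\sqrt{T}$ is dominated by $T^{5/6}$, which collapses the bound to the stated $\cO\rbr{\Tilde{C}\,((s_0^{1/2}+M)\vee 1)\,\log(\log T/\delta)\,T^{5/6}}$. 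I expect no genuine obstacle here; the only care required is to confirm that the permitted range in \Cref{ass:dominating_eigendecay} is the closed endpoint $\gamma\le 1$ (so that $\gamma=1$ is allowed rather than excluded) and that $s_0$ in the final bound is consistently taken as $\sum_k\tau_k$, the constant produced by the $\gamma=1$ instantiation. Everything else is the verbatim conclusion of \Cref{thm:theoretical_guarantee}.
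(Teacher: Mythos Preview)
Your proposal is correct and matches the paper's treatment: the corollary is stated as an immediate specialization of \Cref{thm:theoretical_guarantee} with $\gamma=1$, and the paper gives no additional argument beyond noting that $\tfrac{3\gamma+2}{2(\gamma+2)}=\tfrac{5}{6}$ and that the $\sqrt{T}$ term is dominated. Your extra check that $\gamma=1$ is always admissible whenever any $\gamma^*\in(0,1]$ works is a nice robustness remark that the paper does not spell out, but it is not needed since \Cref{ass:dominating_eigendecay} already allows $\gamma\le 1$.
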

In summary, our algorithm is adaptive and robust. It not only recovers the optimal regret rate in finite-dimensional problems and infinite-dimensional problems with exponential decay but can still manage to achieve a sublinear regret with no prior information about the eigendecay as well. These properties demonstrate the versatility of our algorithm and its broad potential for application.

We finish this section with a remark that in the design of \Cref{alg:IGW_pseudo_code}, we do not involve any information about the total round number $T$. If we know $T$ in advance, we could further introduce a more efficient epoch schedule that reduces the offline functional regression oracle call times from $\cO(\log T)$ to $\cO(\log\log T)$. We omit it here and refer to~\citet{simchi2020bypassing,qian2024offline}.

\section{Conclusion and Discussion}\label{sec:conclusion}

In this paper, we establish a general framework for stochastic contextual online decision-making with infinite-dimensional functional regression, which incorporates any application examples with Lipschitz continuous objective functional. We study the relationship between the utility regret and the eigenvalue sequence decay of our design integral operator. Compared with finite-dimensional linear bandits, this connection is new and crucial in functional regression with infinite dimensions. Furthermore, we design a computationally efficient algorithm to solve our sequential decision-making problem based on a novel infinite-dimensional functional regression oracle.

Finally, we would like to discuss some interesting future research directions.
\paragraph{Extension to adversarial contextual decision-making:}  The epoch of \Cref{alg:IGW_pseudo_code} is designed for lower computation cost, through the low-frequency algorithm call property, of computing eigenvalues and eigenfunctions in our functional regression oracle $\Alg$. We observe that the guarantee provided by our oracle in \Cref{thm:reg_fixed_design} inherently adapts to any dataset where the contexts are arbitrarily drawn. This raises a natural question: can we design an efficient online functional regression oracle~\citep{foster2020beyond}, building on \Cref{alg:regression_oracle_pseudo_code}, to address adversarial contextual decision-making problems?

\paragraph{Minimax lower bound:} In our paper, the utility regret rate is $\tilde{\cO}(T^{\frac{3\gamma+2}{2(\gamma+2)}})$. An interesting open question is whether this dependency on $\gamma$ is optimal.
Our guess is that it may not be minimax optimal, and it is worth future exploration into the minimax lower bound of the utility regret with respect to the eigenvalue decay rate $\gamma$? 

\paragraph{Extension to nonlinear functional regression:} In our problem, the relationship between $F^*(x,a,s)$ and $\Phi=\cbr{\phi(x,a,w,s)}_{w\in\Omega}$ is linear because of the linearity of integration. One challenging problem is to extend our methodology to some potential nonlinearity between $F^*(x,a,s)$ and our basis function family and design efficient decision-making algorithms.

We leave these interesting questions as
potential next steps.

\appendix
\bibliographystyle{plainnat}
\bibliography{haichen/sections/refs}

\section{Useful Math Theorems}\label{sec:tech_tools}
\begin{theorem}[Theorem 8.2 in \citet{gohberg2012traces}]
    Let positive operator $\cU(\theta)(w)=\int_{\Omega}\cK(w,r)\theta(r)d\nu(r)$ be self-adjoint. If the kernel $K(w,r)$ is continuous and satisfies the Lipschitz condition,
$$|K(w,r_1)-K(w,r_2)|\le C||s_2-s_1||,$$
then
$$\sum_{j=1}^{\infty}\lambda_j(\cU)<\infty$$
i.e., $\cU$ is a trace class operator.
\end{theorem}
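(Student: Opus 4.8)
The plan is to reduce the trace-class conclusion to the classical diagonal trace identity $\sum_{i=1}^{\infty}\lambda_i(\cU)=\int_{\Omega}\cK(w,w)\,d\nu(w)$ and then to observe that the right-hand side is finite. First I would record the structural facts: since $\cK$ is continuous on the compact set $\Omega\times\Omega$ it is bounded, hence square-integrable against $\nu\times\nu$, so $\cU$ is a Hilbert--Schmidt operator and in particular compact; together with the hypotheses that $\cU$ is positive and self-adjoint, the spectral theorem furnishes a nonincreasing sequence of eigenvalues $\lambda_1\ge\lambda_2\ge\cdots\ge 0$ with $\lambda_i\to 0$ and an orthonormal system of eigenfunctions $\cbr{e_i}$ in $L^2(\Omega,\nu)$. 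Because every $\lambda_i\ge 0$, the trace-class property is literally equivalent to $\sum_i\lambda_i<\infty$, so it suffices to prove the diagonal identity and bound its right-hand side. The bound is immediate: $w\mapsto\cK(w,w)$ is continuous on the compact set $\Omega$, hence $\int_{\Omega}\cK(w,w)\,d\nu(w)\le\sup_{w\in\Omega}\cK(w,w)\cdot\nu(\Omega)<\infty$ since $\nu(\Omega)=1$.

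The heart of the argument is Mercer's theorem, i.e., the uniformly convergent bilinear expansion $\cK(w,r)=\sum_{i=1}^{\infty}\lambda_i e_i(w)e_i(r)$ on $\Omega\times\Omega$. I would establish this as follows. Set $\cK_N(w,r)=\sum_{i=1}^{N}\lambda_i e_i(w)e_i(r)$ and let $R_N=\cK-\cK_N$ be the remainder kernel, which is the continuous kernel of the positive operator $\cU$ restricted to the orthogonal complement of $\spa(e_1,\dots,e_N)$. Positivity of this restriction forces the diagonal values $R_N(w,w)\ge 0$, and by construction $R_N(w,w)$ is nonincreasing in $N$; since each $R_N(w,w)$ is continuous and the domain is compact, Dini's theorem upgrades the pointwise monotone convergence $R_N(w,w)\downarrow 0$ to uniform convergence, which in turn controls the off-diagonal terms through the Cauchy--Schwarz-type estimate $|R_N(w,r)|^2\le R_N(w,w)R_N(r,r)$ valid for positive semidefinite kernels. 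This is precisely where the regularity hypothesis enters: continuity (supplied, and quantitatively reinforced by the Lipschitz bound $|\cK(w,r_1)-\cK(w,r_2)|\le C\norm{r_1-r_2}$) guarantees the equicontinuity that keeps the $R_N(\cdot,\cdot)$ continuous and legitimizes Dini's theorem.

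With Mercer's expansion in hand, the proof closes quickly. Restricting to the diagonal gives the uniformly convergent series $\cK(w,w)=\sum_{i=1}^{\infty}\lambda_i e_i(w)^2$ on $\Omega$; uniform convergence together with the finiteness of $\nu$ justifies interchanging the sum and the integral, so that
\[
\int_{\Omega}\cK(w,w)\,d\nu(w)=\sum_{i=1}^{\infty}\lambda_i\int_{\Omega}e_i(w)^2\,d\nu(w)=\sum_{i=1}^{\infty}\lambda_i,
\]
using $\int_{\Omega}e_i^2\,d\nu=\norm{e_i}_{L^2(\Omega,\nu)}^2=1$. Combined with the diagonal bound from the first paragraph, this yields $\sum_i\lambda_i(\cU)<\infty$, i.e., $\cU$ is trace class.

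I expect the main obstacle to be the rigorous justification of Mercer's theorem---specifically proving that the remainder kernels $R_N$ are positive semidefinite on the diagonal and that the monotone diagonal convergence is genuinely to zero, so that Dini applies. The positivity-of-remainder step is the delicate point, since it relies on decomposing $\cU$ through its spectral projections while retaining continuity of the truncated kernels; once uniform convergence is secured, the term-by-term integration is routine. I would also remark that for the bare summability conclusion mere continuity of $\cK$ already suffices, so the Lipschitz hypothesis functions as a convenient strengthening that makes the equicontinuity quantitative rather than as a logical necessity.
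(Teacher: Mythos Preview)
The paper does not supply its own proof of this statement: it appears in the appendix of cited background results (attributed to \citet{gohberg2012traces}) with no argument given, so there is no in-paper proof to compare against.

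Your Mercer-based approach is the classical route and contains the right ingredients, but the logical order is inverted in a way that leaves a gap. You invoke Dini's theorem on the diagonal, and Dini requires as \emph{input} the pointwise convergence $R_N(w,w)\downarrow 0$; you assert this but never justify why the limit is zero rather than some nonnegative function $\ell(w)$. In the standard Mercer development that identification is established only \emph{after} one already knows $\sum_i\lambda_i<\infty$, not before. The clean flow is: continuity of $\cK$ forces the eigenfunctions $e_i$ to be continuous; positivity of the truncated operator gives $R_N(w,w)\ge 0$, i.e.\ $\sum_{i=1}^{N}\lambda_i e_i(w)^2\le\cK(w,w)$ for every $N$ and every $w$; now integrate over $\Omega$ and apply the monotone convergence theorem (all terms nonnegative) to obtain directly
\[
\sum_{i=1}^{\infty}\lambda_i=\sum_{i=1}^{\infty}\lambda_i\int_{\Omega}e_i(w)^2\,d\nu(w)\le\int_{\Omega}\cK(w,w)\,d\nu(w)<\infty.
\]
This already proves the trace-class claim---no Dini, no uniform convergence, no full Mercer expansion needed. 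Only \emph{after} this step does one return (if desired) to prove the uniform bilinear expansion, using the now-available bound $\sum_i\lambda_i<\infty$ together with Cauchy--Schwarz; that in turn yields $R_N(w,w)\to 0$. So your plan is salvageable, but the trace-class conclusion should precede Mercer rather than follow from it. Your closing remark that bare continuity of $\cK$ suffices and that the Lipschitz hypothesis is a quantitative convenience rather than a logical necessity is correct.
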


\begin{theorem}[Theorem 8.1 in~\citet{gohberg2012traces}]
    If a positive Hilbert-Schmidt integral operator $\cU$ is associated with kernel $\cK(\cdot,\cdot)$. Then, if $\cU$ is traceable, it holds that $tr(\cU)=\int \cK(x,x)dx<\infty$.
\end{theorem}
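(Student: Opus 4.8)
The plan is to reduce the identity to the spectral decomposition of $\cU$ and then pass from an eigenfunction expansion of the kernel to its restriction to the diagonal, using positivity to license the interchange of summation and integration. First I would invoke the spectral theorem: since $\cU$ is a positive self-adjoint Hilbert--Schmidt operator it is compact, so it admits a non-negative eigenvalue sequence $\lambda_1\ge\lambda_2\ge\cdots\ge 0$ together with an orthonormal system of eigenfunctions $\cbr{e_i}$ satisfying $\cU(\theta)=\sum_i\lambda_i\inner{\theta}{e_i}e_i$. By definition the trace of the trace-class operator is $tr(\cU)=\sum_i\lambda_i$, and the traceability hypothesis is precisely $\sum_i\lambda_i<\infty$; thus the entire content of the theorem is the identity $\sum_i\lambda_i=\int_\Omega\cK(w,w)d\nu(w)$.

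Next I would bring in the kernel through finite-rank truncations. For each $N$ consider $\cU_N(\theta)=\sum_{i=1}^N\lambda_i\inner{\theta}{e_i}e_i$, whose kernel is the finite sum $\cK_N(w,r)=\sum_{i=1}^N\lambda_i e_i(w)e_i(r)$. Because $\cU$ is Hilbert--Schmidt, $\cK_N\to\cK$ in $L^2(\Omega\times\Omega,\nu\otimes\nu)$, which gives the eigenfunction expansion of $\cK$ in the $L^2$ sense. Under the continuity (indeed Lipschitz) hypothesis on the kernel available in our setting — the same hypothesis used in the preceding theorem to guarantee traceability — Mercer's theorem upgrades this to absolute and uniform convergence of $\sum_i\lambda_i e_i(w)e_i(r)$ to the continuous kernel $\cK(w,r)$. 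In particular the series may legitimately be evaluated on the diagonal, yielding $\cK(w,w)=\sum_i\lambda_i|e_i(w)|^2$ pointwise.

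It then remains to integrate along the diagonal, and here positivity is decisive: every $\lambda_i\ge 0$, so the diagonal partial sums $\cK_N(w,w)=\sum_{i=1}^N\lambda_i|e_i(w)|^2$ are non-negative and increase monotonically to $\cK(w,w)$. The monotone convergence theorem then justifies the interchange
$$\int_\Omega\cK(w,w)d\nu(w)=\lim_{N\to\infty}\int_\Omega\cK_N(w,w)d\nu(w)=\lim_{N\to\infty}\sum_{i=1}^N\lambda_i\norm{e_i}_{L^2(\Omega,\nu)}^2=\sum_{i=1}^\infty\lambda_i=tr(\cU),$$
where I used orthonormality of the $e_i$. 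The finiteness of the last sum is exactly the traceability assumption, which simultaneously certifies $\int_\Omega\cK(w,w)d\nu(w)<\infty$.

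The step I expect to be the main obstacle is the passage in the second paragraph from $L^2$-convergence of the kernel series to pointwise (diagonal) convergence: the Hilbert--Schmidt structure alone only controls $\cK$ up to a $\nu\otimes\nu$-null set, so it says nothing about the measure-zero diagonal, and a naive evaluation there is unjustified. Resolving this is precisely where both hypotheses enter — continuity of $\cK$ and positivity of $\cU$ are the two ingredients of Mercer's theorem — and positivity is then reused to turn the final interchange of sum and integral into a clean monotone-convergence argument rather than a more delicate dominated one.
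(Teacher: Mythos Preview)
The paper does not contain a proof of this statement: it is quoted verbatim as ``Theorem 8.1 in \citet{gohberg2012traces}'' in the appendix of useful math tools and is simply cited, not proved. There is therefore nothing in the paper to compare your argument against.

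On its own merits, your approach is the standard Mercer-type argument and is sound \emph{once} you have continuity of the kernel. You are right to flag the diagonal-evaluation step as the crux: Hilbert--Schmidt alone gives only $L^2(\Omega\times\Omega)$ control of $\cK$, and the diagonal has product-measure zero, so without continuity the value $\cK(w,w)$ is not even well defined. You supply this by borrowing the continuity/Lipschitz hypothesis from the \emph{preceding} cited theorem (Theorem~8.2 in the same reference), which is a reasonable move in the context of this paper but is an extra assumption not stated in the theorem as written. With that caveat, the spectral decomposition, Mercer's uniform convergence, and the monotone-convergence interchange are all correctly invoked, and the conclusion follows.
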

\begin{theorem}[Theorem 2.7 in~\citet{ferreira2013eigenvalue}]\label{general_converge_rate_thm}
    Assume $\nu$ is a Borel measure. Let $\cK$ be a kernel in $C(\Omega\times\Omega)\cap L^2(\Omega\times\Omega,\nu\times\nu)$ possessing an integrable diagonal. Finally, assume $\cL$ possesses a $L^2(\cX,\nu)$-convergent spectral representation in the form 
    $$\cL(f)=\sum_{i=1}^{\infty}\lambda_i(\cL)\inner{f}{e_i}e_i,\ f\in L^2(\Omega,\nu),$$
    where $\cbr{e_i}$ is an orthonormal subset of $L^2(\Omega,\nu)$ and the sequence $\cbr{\lambda_i(\cL)}$ is a subset of a circle sector from the origin of $\CC$ with central angle less than $\pi$. Then, the following statements hold:
    
    (i) There exists $\xi\in[0,2\pi]$ and $l>0$ such that
    $$\sum_{i=1}^{\infty}|\lambda_i(\cL)|\le(1+l^2)^{1/2}\int_{\Omega}Re(e^{i\xi}K(x,x))d\nu(x);$$
    (ii) If the eigenvalues of $\cL$ are arranged such that $|\lambda_i(\cL)|\ge |\lambda_{i+1}(\cL)|,i=1,2,\cdots,$ then
    $$|\lambda_i(\cL)|\le\frac{(1+l^2)^{1/2}}{n}\int_{\Omega}Re(e^{i\xi}K(x,x))d\nu(x);$$
    (iii) as $n\rightarrow\infty$,
    $$\lambda_n(\cL)=o(n^{-1}).$$
    (iv) The operator $\cL$ is trace-class. 
\end{theorem}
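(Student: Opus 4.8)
The plan is to reduce all four statements to a single quantitative estimate, $|\lambda_i(\cL)| \le (1+l^2)^{1/2}\, Re(e^{i\xi}\lambda_i(\cL))$ with $Re(e^{i\xi}\lambda_i(\cL)) \ge 0$, combined with a Mercer-type trace identity. First I would exploit the sector hypothesis: since the spectrum lies in a circle sector issuing from the origin with central angle strictly below $\pi$, its half-angle satisfies $\beta < \pi/2$. Choosing $\xi$ equal to the negative of the sector's bisecting argument sends every $e^{i\xi}\lambda_i$ into $\cbr{z : |\arg z| \le \beta}$, whence $Re(e^{i\xi}\lambda_i) = |\lambda_i|\cos(\arg(e^{i\xi}\lambda_i)) \ge |\lambda_i|\cos\beta > 0$. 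Setting $l = \tan\beta$ so that $(1+l^2)^{1/2} = \sec\beta$ yields the pointwise bound $|\lambda_i| \le (1+l^2)^{1/2}\, Re(e^{i\xi}\lambda_i)$ for every $i$.

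The core of the argument is to identify $\sum_i Re(e^{i\xi}\lambda_i)$ with the diagonal integral $\int_\Omega Re(e^{i\xi}K(x,x))d\nu(x)$. Because the spectral representation $\cL f = \sum_i \lambda_i \inner{f}{e_i}e_i$ is written over an orthonormal system $\cbr{e_i}$, the operator $\cL$ is automatically normal, with $\cL^* f = \sum_i \overline{\lambda_i}\inner{f}{e_i}e_i$. I would then pass to $A := e^{i\xi}\cL$ and take its Hermitian part $H := \tfrac12(A + A^*)$. Normality forces $H$ to share the eigenbasis $\cbr{e_i}$, with eigenvalues $Re(e^{i\xi}\lambda_i) \ge 0$, so $H$ is a positive, self-adjoint, Hilbert--Schmidt integral operator; moreover its kernel is $\tfrac12\rbr{e^{i\xi}K(x,y) + \overline{e^{i\xi}K(y,x)}}$, whose diagonal is exactly the continuous, integrable function $Re(e^{i\xi}K(x,x))$. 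Applying Mercer's theorem (equivalently, the Gohberg trace theorem stated above) to the positive operator $H$ with continuous kernel on the compact set $\Omega$ gives $\sum_i Re(e^{i\xi}\lambda_i) = tr(H) = \int_\Omega Re(e^{i\xi}K(x,x))d\nu(x) < \infty$.

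Assembling these two ingredients yields (i) and (iv) simultaneously: $\sum_i|\lambda_i| \le (1+l^2)^{1/2}\sum_i Re(e^{i\xi}\lambda_i) = (1+l^2)^{1/2}\int_\Omega Re(e^{i\xi}K(x,x))d\nu(x)$, which is finite, so $\cL$ is trace-class. For (ii), using that the moduli are arranged in decreasing order, $n|\lambda_n| \le \sum_{i=1}^n |\lambda_i| \le \sum_i|\lambda_i|$, giving $|\lambda_n| \le (1+l^2)^{1/2}n^{-1}\int_\Omega Re(e^{i\xi}K(x,x))d\nu(x)$. For (iii), absolute summability forces the tail $\sum_{i>m}|\lambda_i| \to 0$; combined with monotonicity of the moduli, $\lceil n/2\rceil\,|\lambda_n| \le \sum_{i>\lfloor n/2\rfloor}|\lambda_i| \to 0$, so $n|\lambda_n| \to 0$, i.e. $\lambda_n = o(n^{-1})$.

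The main obstacle is the trace identity of the second paragraph: justifying rigorously that the Hermitian part $H$ genuinely admits the claimed continuous kernel and is positive, so that Mercer's theorem applies and the interchange of summation and integration along the diagonal is legitimate. This is precisely where continuity of $K$, the integrable-diagonal hypothesis, and the sign control $Re(e^{i\xi}\lambda_i)\ge 0$ supplied by the sector condition are all indispensable; without the sector condition one cannot reduce to a positive operator, and the diagonal series driving the trace identity need not converge.
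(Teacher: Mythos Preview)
The paper does not supply its own proof of this statement: it is quoted verbatim as ``Theorem 2.7 in~\citet{ferreira2013eigenvalue}'' in the appendix of useful math theorems and is used only as an imported black-box result (in particular to justify Property~\ref{prop:eigendecay_cL}). There is therefore no in-paper argument to compare your proposal against.

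For what it is worth, your outline is essentially the standard route to results of this type and looks sound. Rotating by $e^{i\xi}$ to push the spectrum into a half-plane with uniform lower bound on the cosine, passing to the Hermitian part to obtain a positive self-adjoint operator with continuous kernel $Re(e^{i\xi}K(x,x))$, and then invoking a Mercer/trace identity is exactly the mechanism one expects; parts (ii)--(iv) then follow from (i) by the elementary monotone-tail arguments you give. The one point to handle carefully is the applicability of the Mercer-type trace formula: the theorem as stated does not assume $\Omega$ compact, only that $K$ is continuous with integrable diagonal and $\nu$ is Borel, so you would need a Mercer variant valid in that generality (positivity of $H$ plus continuity plus integrable diagonal suffices for the diagonal series to converge to $K_H(x,x)$ in $L^1(\nu)$, which is what you actually need). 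Your identification of this as the crux is accurate.
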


\begin{definition}[Functional Determinants]
    For any traceable Hilbert-Schmidt integral operator $\cU$, we define the following number as the functional determinants of $\cU$,
    $$\cE_{\cU}:=\sum_{i=1}^{\infty}\log(1+\lambda_i(\cU)).$$ 
    Given dataset $\cD$, we will also use the terminology $\cE_{\cD}$ to denote $\cE_{\cU_{\cD}}$ when it's clear from context. This number depicts the speed of growth of the eigenvalues of our integral operator. When the dimension is finite, one can show that the order is $\cE_{\cD}\simeq \log(|\cD|)$.
\end{definition}
\begin{theorem}[Doob's Maximal Martingale Inequality]\label{Doob_inequality}
Suppose $\cbr{X_k}_{k\ge 0}$ is a sub-martingale with $X_k\ge 0$ almost surely. Then for all $a>0$, we have,
$$\PP(\max_{1\le i\le k}X_i\ge a)\le \frac{\EE[X_k]}{a}.$$
\end{theorem}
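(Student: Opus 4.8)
The plan is to prove this classical maximal inequality via the standard first-passage decomposition combined with the submartingale property. Let $\cbr{\mathcal{F}_k}_{k\ge 0}$ denote the filtration with respect to which $\cbr{X_k}$ is a submartingale, so that $\EE[X_k \mid \mathcal{F}_i] \ge X_i$ whenever $i \le k$. First I would partition the maximal event according to the first index at which the process reaches the level $a$. Concretely, for $i \in [k]$ define the disjoint events
$$A_i := \cbr{X_1 < a,\ \ldots,\ X_{i-1} < a,\ X_i \ge a},$$
and observe that each $A_i$ is $\mathcal{F}_i$-measurable (it depends only on $X_1, \ldots, X_i$) and that $\bigcup_{i=1}^k A_i = \cbr{\max_{1\le i\le k} X_i \ge a} =: A$, the union being disjoint.

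The key step is a local bound on each piece $A_i$. On $A_i$ we have $X_i \ge a$ by construction, so $a\,\II\cbr{A_i} \le X_i\,\II\cbr{A_i}$ pointwise; taking expectations gives $a\,\PP(A_i) \le \EE[X_i\,\II\cbr{A_i}]$. Now I would invoke the submartingale property together with the fact that $\II\cbr{A_i}$ is a nonnegative $\mathcal{F}_i$-measurable random variable: by the tower rule,
$$\EE[X_i\,\II\cbr{A_i}] \le \EE\big[\EE[X_k \mid \mathcal{F}_i]\,\II\cbr{A_i}\big] = \EE[X_k\,\II\cbr{A_i}].$$
Chaining these two inequalities yields $a\,\PP(A_i) \le \EE[X_k\,\II\cbr{A_i}]$ for every $i$.

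Finally I would sum over $i = 1, \ldots, k$ and use disjointness on the left together with additivity of the indicators on the right:
$$a\,\PP(A) = a\sum_{i=1}^k \PP(A_i) \le \sum_{i=1}^k \EE[X_k\,\II\cbr{A_i}] = \EE\big[X_k\,\II\cbr{A}\big] \le \EE[X_k],$$
where the last inequality uses the hypothesis $X_k \ge 0$ almost surely, so that dropping the indicator $\II\cbr{A}$ only increases the expectation. Dividing through by $a > 0$ gives the claimed bound $\PP(A) \le \EE[X_k]/a$.

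I do not anticipate a serious obstacle, as this is a standard result; the only points requiring care are (i) the measurability bookkeeping, namely ensuring $A_i \in \mathcal{F}_i$ so that the conditioning step is valid, and (ii) the use of nonnegativity of $X_k$ precisely at the final step, which is where the assumption $X_k \ge 0$ enters (without it one would at best obtain $\PP(A) \le \EE[X_k^+]/a$). An equivalent route would be to apply optional stopping to the bounded stopping time $\tau = \inf\cbr{i : X_i \ge a}\wedge k$, but the explicit decomposition above is more self-contained and avoids appealing to the optional stopping machinery.
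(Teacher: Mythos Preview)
Your proof is correct and is the standard first-passage decomposition argument for Doob's maximal inequality. The paper itself does not supply a proof of this statement: it is listed among the ``Useful Math Theorems'' in the appendix as a classical result invoked later (in the proof of \Cref{thm:reg_fixed_design}) but not proved in the paper. So there is no paper proof to compare against; your argument is complete and self-contained, with the measurability of $A_i$ and the use of nonnegativity at the final step handled correctly.
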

\begin{lemma}\label{Gaussian_integral}
    For any $a,b,c\in \RR$, $a>0$, we have
    $$\int_{\RR}\exp\cbr{-\frac{ax^2+bx+c}{2}}dx=\sqrt{\frac{2\pi}{a}}\exp\rbr{\frac{b^2-4ac}{8a}}.$$
\end{lemma}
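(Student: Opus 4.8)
The plan is to reduce the stated integral to the standard Gaussian integral $\int_{\RR}e^{-u^2/2}\,du=\sqrt{2\pi}$ by completing the square in the exponent, and then to change variables twice. First I would rewrite the quadratic $ax^2+bx+c$ in vertex form. Since $a>0$, completing the square gives
\[
ax^2+bx+c=a\rbr{x+\frac{b}{2a}}^2+c-\frac{b^2}{4a}=a\rbr{x+\frac{b}{2a}}^2-\frac{b^2-4ac}{4a}.
\]
Substituting this into the exponent and factoring the constant term out of the integral yields
\[
\int_{\RR}\exp\cbr{-\frac{ax^2+bx+c}{2}}dx=\exp\rbr{\frac{b^2-4ac}{8a}}\int_{\RR}\exp\cbr{-\frac{a}{2}\rbr{x+\frac{b}{2a}}^2}dx,
\]
where the constant prefactor is exactly the claimed exponential factor. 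It remains to evaluate the residual integral.

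Next I would make the translation $u=x+\frac{b}{2a}$ (with $du=dx$ and the range still all of $\RR$), reducing the residual integral to $\int_{\RR}\exp\rbr{-\frac{a}{2}u^2}du$. Then the rescaling $v=\sqrt{a}\,u$, so that $dv=\sqrt{a}\,du$ and again the range is preserved since $a>0$, gives
\[
\int_{\RR}\exp\rbr{-\frac{a}{2}u^2}du=\frac{1}{\sqrt{a}}\int_{\RR}\exp\rbr{-\frac{v^2}{2}}dv=\frac{1}{\sqrt{a}}\sqrt{2\pi}=\sqrt{\frac{2\pi}{a}}.
\]
Multiplying this against the prefactor from the previous step produces exactly $\sqrt{\frac{2\pi}{a}}\exp\rbr{\frac{b^2-4ac}{8a}}$, completing the computation.

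The only step with any content beyond bookkeeping is the base case $\int_{\RR}e^{-v^2/2}dv=\sqrt{2\pi}$, which I would justify by the classical squaring argument: writing the square of the integral as a product of integrals in independent variables, combining into a double integral over $\RR^2$ via Fubini's theorem, and passing to polar coordinates, where the Jacobian factor renders the radial integrand an exact derivative. This gives $\rbr{\int_{\RR}e^{-v^2/2}dv}^2=2\pi$, and positivity of the integrand fixes the sign of the square root. I do not anticipate a genuine obstacle here, as the entire lemma is a routine change-of-variables exercise; the assumption $a>0$ is used precisely to guarantee convergence of the integral and to make the rescaling $v=\sqrt{a}\,u$ orientation-preserving, and no other subtlety arises.
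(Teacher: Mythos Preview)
Your proof is correct and follows essentially the same route as the paper: complete the square in the exponent and reduce to the standard Gaussian integral via the change of variables $u=\sqrt{a/2}\,(x+b/(2a))$ (the paper cites $\int_{\RR}e^{-x^2}dx=\sqrt{\pi}$, you use the equivalent $\int_{\RR}e^{-v^2/2}dv=\sqrt{2\pi}$). Your additional justification of the base Gaussian integral via the polar-coordinates squaring argument is a nice touch but not required.
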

\begin{proof}[Proof of \Cref{Gaussian_integral}] 
The proof is quite direct. By calculus, we have
\[
-\frac{ax^2+bx+c}{2}=-\frac{a}{2}\rbr{x^2+\frac{b}{a}x+\frac{c}{a}}=-\frac{a}{2}\rbr{x+\frac{b}{2a}}^2+\frac{b^2-4ac}{8a}.
\]
Using the fact that $\int_{\RR}e^{-x^2}=\sqrt{\pi}$, by change of variable $u=\sqrt{\frac{a}{2}}\rbr{x+\frac{b}{2a}}$, just plug it in and we shall finish the proof.
\end{proof}
\begin{theorem}[Courant-Fischer minimax theorem in~\citet{teschl2014mathematical}]\label{Courant-Fischer}

     Let $A$ be a compact, self-adjoint operator on a Hilbert space $H$. Its eigenvalues are listed in decreasing order $$\lambda_1\ge\lambda_2\ge\cdots\ge\lambda_k\ge\cdots.$$ Let $S_k\subset H$ denote a $k$-dimensional subspace. Then,
    $$\lambda_k=\max_{S_k}\min_{\alpha \in S_k,||\alpha||=1}\inner{x}{Ax}.$$
\end{theorem}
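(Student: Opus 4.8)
The plan is to prove the two inequalities $\lambda_k \ge \max_{S_k}\min_{\|x\|=1,\,x\in S_k}\inner{x}{Ax}$ and $\lambda_k \le \max_{S_k}\min_{\|x\|=1,\,x\in S_k}\inner{x}{Ax}$ separately, relying on the spectral decomposition of $A$. Since $A$ is compact and self-adjoint, the Riesz--Schauder / Hilbert--Schmidt spectral theorem (already invoked in the excerpt) furnishes an orthonormal system $\{e_i\}$ of eigenvectors spanning $(\ker A)^\perp$ with $Ae_i=\lambda_i e_i$ and $A=\sum_i\lambda_i\inner{\cdot}{e_i}e_i$, the eigenvalues listed so that $\lambda_1\ge\lambda_2\ge\cdots$. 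The one identity I use repeatedly is that for any $x$ with expansion $x=\sum_i c_i e_i + x_0$ where $x_0\in\ker A$, one has $\inner{x}{Ax}=\sum_i\lambda_i|c_i|^2$.

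For the lower bound (that the max--min is at least $\lambda_k$) I exhibit a single good subspace. Take $S_k^\star:=\spa(e_1,\dots,e_k)$. Any unit $x\in S_k^\star$ is $x=\sum_{i=1}^k c_i e_i$ with $\sum_{i=1}^k|c_i|^2=1$, so $\inner{x}{Ax}=\sum_{i=1}^k\lambda_i|c_i|^2\ge\lambda_k\sum_{i=1}^k|c_i|^2=\lambda_k$ by the decreasing order. Hence $\min_{\|x\|=1,\,x\in S_k^\star}\inner{x}{Ax}\ge\lambda_k$, and the maximum over all $k$-dimensional subspaces is at least $\lambda_k$. The minimum is genuinely attained because the unit sphere of a finite-dimensional subspace is compact and the Rayleigh form is continuous.

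For the upper bound (every $S_k$ admits a unit vector with Rayleigh quotient at most $\lambda_k$) I fix an arbitrary $k$-dimensional subspace $S_k$ and set $M:=\spa(e_1,\dots,e_{k-1})^\perp$, a closed subspace whose orthogonal complement $M^\perp=\spa(e_1,\dots,e_{k-1})$ has dimension $k-1$. The crux is a dimension count: the orthogonal projection $P\colon S_k\to M^\perp$ cannot be injective since $\dim S_k=k>k-1=\dim M^\perp$, so there is a nonzero $x\in\ker P$; because $M$ is closed, $\ker P\subset(M^\perp)^\perp=M$, yielding a nonzero $x\in S_k\cap M$, which we normalize. Writing $x=\sum_{i\ge k}c_i e_i + x_0$ with $x_0\in\ker A$ (the components along $e_1,\dots,e_{k-1}$ vanish since $x\in M$), we get $\inner{x}{Ax}=\sum_{i\ge k}\lambda_i|c_i|^2\le\lambda_k\sum_{i\ge k}|c_i|^2\le\lambda_k\|x\|^2=\lambda_k$, again using the decreasing order and $\sum_{i\ge k}|c_i|^2\le\|x\|^2=1$. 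Thus $\min_{\|x\|=1,\,x\in S_k}\inner{x}{Ax}\le\lambda_k$ for every $S_k$, so the max--min is at most $\lambda_k$.

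Combining the two bounds gives equality, with the maximum attained at $S_k^\star=\spa(e_1,\dots,e_k)$. The step I expect to be the main obstacle is the infinite-dimensional dimension count guaranteeing $S_k\cap M\ne\{0\}$: unlike the finite-dimensional case, one must argue through the projection onto the finite-dimensional $M^\perp$ and invoke closedness of $M$ to recover $(M^\perp)^\perp=M$. A secondary subtlety is the possibly nontrivial kernel of $A$ (zero eigenvalues), which is handled cleanly since $\ker A$ contributes nothing to the Rayleigh form; for the final inequality to read $\le\lambda_k$ one uses $\lambda_k\ge 0$, which holds for the positive operators $\cL^{x,a}$ and $\cU_{\cD}$ of interest here.
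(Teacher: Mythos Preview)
Your argument is the standard two-inequality proof and is correct; the dimension count via the projection onto the finite-dimensional $\spa(e_1,\dots,e_{k-1})$ is exactly how one avoids appealing to codimension in infinite dimensions, and your handling of the kernel component is clean. The only caveat you already flag---that the step $\lambda_k\sum_{i\ge k}|c_i|^2\le\lambda_k$ needs $\lambda_k\ge0$---is a genuine restriction of your proof to positive operators (or at least to the positive part of the spectrum), but that is precisely the setting in which the paper invokes the result.

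As for comparison with the paper: there is nothing to compare. The paper does not prove this theorem; it is quoted verbatim as a tool from \citet{teschl2014mathematical} in the appendix of auxiliary results and is used only to bound $\lambda_k(\cU_{\cD})$ in the proof of \Cref{lemma:bound_cE_D}. Your write-up would serve as a self-contained supplement, but no proof appears in the paper itself.
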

\begin{theorem}[Sion's Minimax Theorem in~\citet{sion1958general}]\label{sion_minimax}
    Let $\cX$ and $\cY$ be convex sets in linear topological spaces, and assume $\cX$ is compact. Let $f:\cX\times\cY\rightarrow\RR$ be such that
    i) $f(x,\cdot)$ is concave and upper semicontinuous over $\cY$ for all $x\in\cX$ and ii)$f(\cdot,y)$ is convex and lower semicontinuous over $\cX$ for all $y\in\cY$. Then,
    $$\inf_{x\in\cX}\sup_{y\in\cY}f(x,y)=\sup_{y\in\cY}\inf_{x\in\cX}f(x,y).$$
\end{theorem}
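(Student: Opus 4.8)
\emph{Strategy and the easy direction.} The inequality $\sup_{y\in\cY}\inf_{x\in\cX} f(x,y)\le \inf_{x\in\cX}\sup_{y\in\cY} f(x,y)$ (weak duality) holds with no hypotheses whatsoever, so the entire content of the theorem is the reverse inequality $\inf_{x\in\cX}\sup_{y\in\cY} f(x,y)\le \sup_{y\in\cY}\inf_{x\in\cX} f(x,y)=:v_*$. The plan is to first reduce the supremum over the possibly infinite set $\cY$ to suprema over finite subsets, and then settle the finite case by induction, driven by a two-point lemma.

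\emph{Reduction to finite subsets of $\cY$.} Fix any $\alpha<\inf_{x}\sup_{y} f(x,y)$. Then for every $x\in\cX$ there is some $y$ with $f(x,y)>\alpha$, so the sets $U_y=\{x\in\cX:f(x,y)>\alpha\}$ cover $\cX$; each $U_y$ is open precisely because $f(\cdot,y)$ is lower semicontinuous. By compactness of $\cX$ I extract a finite subcover $U_{y_1},\dots,U_{y_n}$, whence $\max_{1\le j\le n} f(x,y_j)>\alpha$ for all $x$, i.e. $\inf_{x}\max_{j} f(x,y_j)\ge\alpha$. Letting $\alpha\uparrow \inf_x\sup_y f$ yields
\[\inf_{x}\sup_{y} f(x,y)=\sup_{Y'\subseteq\cY\text{ finite}}\ \inf_{x}\max_{y\in Y'} f(x,y).\]
Hence it suffices to prove, for every finite $Y'=\{y_1,\dots,y_n\}\subseteq\cY$, the bound $\inf_{x}\max_{j} f(x,y_j)\le v_*$. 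Since $\cY$ is convex, $\mathrm{conv}\{y_1,\dots,y_n\}\subseteq\cY$, so any point produced on a segment of these points is a legitimate competitor for $v_*$.

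\emph{Finite case by induction via a two-point lemma.} The heart is the case $n=2$: for $a,b\in\cY$ and any $\alpha<\inf_{x}\max\{f(x,a),f(x,b)\}$, there exists $y_0$ on the segment $[a,b]$ with $\inf_{x} f(x,y_0)\ge\alpha$; this forces $\inf_x\max\{f(x,a),f(x,b)\}\le \sup_{y\in[a,b]}\inf_x f(x,y)\le v_*$. I would prove this lemma by a connectedness argument along $z_\mu=(1-\mu)a+\mu b$, $\mu\in[0,1]$: convexity and lower semicontinuity of $f(\cdot,z_\mu)$ make each sublevel set $C_\mu=\{x\in\cX:f(x,z_\mu)\le\beta\}$ closed and convex (here convexity of $\cX$ enters), concavity and upper semicontinuity of $f(x,\cdot)$ control how $C_\mu$ varies with $\mu$, and the hypothesis on $\alpha$ forces the endpoint sets $C_0,C_1$ to be disjoint. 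A threshold comparison $\alpha<\beta<\gamma$ together with the connectedness of $[0,1]$ then shows that no intermediate $C_\mu$ can bridge the two disjoint endpoints, producing a $\mu$ for which $\inf_x f(x,z_\mu)$ exceeds $\alpha$. For the inductive step, given $n$ points I would apply the two-point lemma to two of them to find a replacement point on their segment for which the $n$-fold maximum is controlled by an $(n-1)$-fold maximum above $\alpha$, then invoke the induction hypothesis; compactness of $\cX$ and the semicontinuity assumptions guarantee the relevant infima are attained or approached so the replacement is valid.

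\emph{Main obstacle.} The genuinely delicate step is the two-point lemma, as it is the only place where convexity in $x$ and concavity in $y$ must be exploited simultaneously, and the argument is topological rather than algebraic: one must combine closedness and convexity of sublevel sets in the compact space $\cX$ with the connectedness of the one-dimensional parameter interval and the two semicontinuity hypotheses, while tracking strict versus non-strict inequalities through the auxiliary thresholds $\beta,\gamma$. An alternative I would keep in reserve is to dispatch the finite case in one shot through the KKM lemma (or Brouwer's fixed-point theorem) on the simplex $\mathrm{conv}\{y_1,\dots,y_n\}$, which bypasses the induction at the cost of trading the elementary connectedness argument for a fixed-point input.
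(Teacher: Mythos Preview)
The paper does not prove this statement at all: Sion's theorem is listed in the appendix of ``Useful Math Theorems'' as a cited external result from \citet{sion1958general}, and is simply invoked (in the proof of \Cref{lemma:bound_cE_D}) to swap a min and a max. There is therefore no ``paper's own proof'' to compare your proposal against.

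For what it is worth, your outline is the standard one (essentially Sion's original argument, or the streamlined version due to Komiya): weak duality is trivial; compactness of $\cX$ together with lower semicontinuity of $f(\cdot,y)$ reduces the inner supremum to finitely many points; and the two-point lemma handled by a connectedness argument on the segment $[a,b]\subseteq\cY$ drives the induction. Your description of the two-point step is accurate in spirit, though the actual execution requires some care with the nested thresholds and with showing that the ``bad'' parameter set in $[0,1]$ is both open and closed. One small remark: when you write that ``convexity of $\cX$ enters'' to make the sublevel sets convex, that is not quite where it enters---convexity of $\{x:f(x,z_\mu)\le\beta\}$ comes from convexity of $f(\cdot,z_\mu)$ alone; convexity of $\cX$ is what guarantees that these closed convex sets, being subsets of a compact convex $\cX$, are themselves connected, which is what the separation-by-connectedness argument actually uses.
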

\begin{lemma}[Example 27.1~\citet{shalev2014understanding}]\label{lemma:covering_num}
    Suppose that $A$ is a compact set in $\RR^m$, and set $c=\max_{a\in A}||a||$, then $\cN(r,A,||\cdot||)\le\rbr{\frac{2c\sqrt{m}}{r}}^{m}$
\end{lemma}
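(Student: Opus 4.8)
The plan is to reduce the problem to covering an enclosing axis-aligned hypercube by a regular grid and then to count grid cells. First I would observe that for every $a\in A$ and every coordinate $i$ one has $|a_i|\le\rbr{\sum_{j}a_j^2}^{1/2}=||a||\le c$, so that $A$ is contained in the hypercube $[-c,c]^m$, whose side length is $2c$. This coordinatewise bound is exactly what lets me pass from the Euclidean norm constraint to a product-set enclosure.

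Next I would partition $[-c,c]^m$ into a uniform grid of $k^m$ closed subcubes of equal side length $2c/k$, choosing $k=\ceil{2c\sqrt{m}/r}$. The one geometric fact I need is that a cube of side $\ell$ in $\RR^m$ has Euclidean diameter exactly $\ell\sqrt{m}$, namely the length of its space diagonal; hence with this choice of $k$ each subcube has diameter $(2c/k)\sqrt{m}\le r$. I would then discard every subcube containing no point of $A$ and, from each remaining nonempty subcube, select one representative point lying in $A$.

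To conclude, I would verify that these representatives form an $r$-cover of $A$: any $a\in A$ lies in some subcube of the grid, and the representative drawn from that same subcube is within the subcube's diameter, hence within $r$, of $a$. The number of representatives is at most the number of subcubes, $k^m=\ceil{2c\sqrt{m}/r}^m$, which yields $\cN(r,A,||\cdot||)\le k^m$, and in the clean form stated in the lemma this is read as $\rbr{2c\sqrt{m}/r}^m$.

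The argument is elementary, so I do not expect a genuine obstacle; the only points requiring care are the two places where the constant is pinned down. The $\sqrt{m}$ factor originates precisely from the identity that a side-$\ell$ cube has diameter $\ell\sqrt{m}$, which is why equating this diameter to $r$ forces $k=2c\sqrt{m}/r$. The only technical nuisance is rounding: since $\ceil{x}\ge x$, the count $\ceil{2c\sqrt{m}/r}^m$ coincides with $\rbr{2c\sqrt{m}/r}^m$ only when $2c\sqrt{m}/r$ is an integer, so strictly the displayed bound should be understood as the standard clean upper bound obtained by dropping the ceiling (or one simply assumes $2c\sqrt{m}/r\in\NN$).
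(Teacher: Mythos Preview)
Your argument is correct and is precisely the standard hypercube-grid construction; the paper does not supply its own proof of this lemma but simply quotes it from \citet{shalev2014understanding}, where essentially the same argument appears. Your remark about the ceiling is accurate: the clean bound $\rbr{2c\sqrt{m}/r}^{m}$ is the usual textbook convention that ignores the rounding, and this is how the paper uses it as well.
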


\section{Proofs in \cref{sec:model}}\label{sec:proofs_model}
\begin{proof}[Proof of \Cref{thm:math_property_ourintegral_operator}]
    Recall that
    $$\cL^{x,a}(\theta)(w)=\int_{\Omega}\cK^{x,a}(w,r)\theta(r)d\nu(r).$$
    By definition, we have $0\le\cK^{x,a}(w,r)=\cK^{x,a}(r,w)\le 1$, and
    $$\int_{\Omega}\int_{\Omega}\cK^{x,a}(w,r)^2\le 1.$$
    Thus,$\cL^{x,a}$ is a Hilbert-Schmidt integral operator and it is compact.
    By Cauchy-Schwarz inequality, we have
    $$||\cL^{x,a}(\theta)||_{L^2(\Omega)}^2\le \int_{\Omega}\int_{\Omega}\theta(r)^2d\nu(r)\int_{\Omega}\cK^{x,a}(w,r)^2d\nu(r)d\nu(w)\le ||\theta||_{L^2(\Omega)}.$$
    Also,
    $$\inner{\theta}{\cL^{x,a}(\theta)}=\int_{\Omega}\int_{\Omega}\theta(w)\cK^{x,a}(w,r)\theta(r)d\nu(r)d\nu(w)\ge 0.$$
    Therefore, $\cL^{x,a}$ is positive operator with $||\cL^{x,a}||\le 1$.
    By direct calculation, we have 
    \begin{align*}
        \inner{\theta_1}{\cL^{x,a}\theta_2}=&\int_{\Omega}\theta_1(w)\int_{\Omega}\cK^{x,a}(w,r)\theta_2(r)d\nu(r)d\nu(w)\\
        =&\int_{\Omega}\theta_2(r)\int_{\Omega}\cK^{x,a}(w,r)\theta_1(w)d\nu(w)d\nu(r)\\
        =&\inner{\cL^{x,a}(\theta_1)}{\theta_2}.
    \end{align*}
    So we finish the proof.
    \end{proof}

\section{Proofs in \Cref{sec:functional_regression}}\label{sec:proofs_regression}

\begin{proof}[Proof of \Cref{thm:optimization}]
    
    We first show that $\sum_{j=1}^{n}\int_{S}\II_{y_j}(s)\phi(x_j,a_j,w,s)dm(s)\in L^2(\Omega,\nu)$. We have
    \begin{align*}
        ||\sum_{j=1}^{n}\int_{S}\II_{y_j}(s)\phi(x_j,a_j,w,s)dm(s)||_{L^2(\Omega,\nu)}^2&\le \sum_{j=1}^{n} ||\int_{S}\II_{y_j}(s)\phi(x_j,a_j,w,s)dm(s)||_{L^2(\Omega,\nu)}^2\\
        &\le n\nu(\Omega)<\infty.
    \end{align*}
    Now we define 
    $$\theta_0(w):=\cU_{\cD,\varepsilon}^{\dagger}\rbr{\int_{S}\sum_{j=1}^{n}\II_{y_j}(s)\phi(x_j,a_j,w,s)dm(s)}.$$
Because there are only finite terms in the summation by the definition of $\cU_{\cD,\varepsilon}^{\dagger}$, we have
$$||\theta_0||_{L^2(\Omega,\nu)}<\infty.$$
    Therefore, by the Cauchy-Schwarz inequality, we have
    $$|\int_{\Omega}\theta_0(w)\phi(x_j,a_j,w,s)d\nu(w)|\le \int_{\Omega}|\theta_0(w)\phi(x_j,a_j,w,s)|d\nu(w)\le ||\theta_0||_{L^2(\Omega,\nu)}||\phi(x_j,a_j,\cdot,t)||_{L^2(\Omega,\nu)}\le ||\theta_0||_{L^2(\Omega,\nu)}\sqrt{\nu(\Omega)}.$$
    Recall we have $m(S)=1<\infty$ and $|\II_{y_j}(s)|\le 1$ for any $j\in[n]$ and $t\in S$, which implies that
    $$L(\theta_0,\cD)<\infty.$$
    On the other hand, for any $\theta\in \spa(e_1\cdots,e_{N_{\varepsilon}})$, we have
    \begin{align*}
        L(\theta_0+\theta,\cD)-L(\theta_0,\cD)&= \sum_{j=1}^{n}\int_{S}\rbr{\int_{\Omega}\theta(w)\phi(x_j,a_j,w,s)}^2dm(s)\\
        &+2\sum_{j=1}^{n}\int_{S}\rbr{\int_{\Omega}\theta(w)\phi(x)j,a_j,w,t)d\nu(w)}\rbr{\int_{\Omega}\theta_0(w)\phi(x_j,a_j,w,s)d\nu(w)-\II_{y_j}(s)}dm(s)
    \end{align*}
    We analyze this difference term by term. First, by algebra, it holds that
    \begin{align*}
        &\sum_{j=1}^{n}\int_{S}\rbr{\int_{\Omega}\theta(w)\phi(x_j,a_j,w,s)d\nu(w)}\rbr{\int_{\Omega}\theta_0(w)\phi(x_j,a_j,w,s)d\nu(w)}dm(s)\\
        =&\int_{\Omega}\theta(w)\int_{\Omega}\theta_0(r)\rbr{\sum_{j=1}^{n}\int_{S}\phi(x_j,a_j,w,s)\phi(x_j,a_j,r,s)dm(s)}d\nu(w)d\nu(r)\\
        =&\int_{\Omega}\theta(w)\sum_{j=1}^{n}\cL^{x_j,a_j}(\theta_0)(w)d\nu(w)\\
        =&\inner{\theta}{\cU_{\cD}(\theta_0)}.
    \end{align*}
    Recall $\theta,\theta_0\in\spa(e_1\cdots,e_{N_{\varepsilon}})$, so we have
    $$\inner{\theta}{\cU_{\cD}(\theta_0)}=\inner{\theta}{\hat{\cU}_{\cD,\varepsilon}(\theta_0)}.$$
    Second, by direct calculation, we also have
    \begin{align*}
        &\sum_{j=1}^{n}\int_{S}\rbr{\int_{\Omega}\theta(w)\phi(x)j,a_j,w,t)d\nu(w)}\II_{y_j}(s)dm(s)\\
        =&\int_{\Omega}\theta(w)\rbr{\sum_{j=1}^{n}\int_{S}\II_{y_j}(s)\phi(x_j,a_j,w,s)dm(s)}d\nu(w)\\
        =&\inner{\theta}{\sum_{j=1}^{n}\int_{S}\II_{y_j}(s)\phi(x_j,a_j,\cdot,t)dm(s)}
    \end{align*}
   Given the condition that $\theta\in\spa(e_1\cdots,e_{N_{\varepsilon}})$, we have
    \begin{align*}
        \inner{\theta}{\sum_{j=1}^{n}\int_{S}\II_{y_j}(s)\phi(x_j,a_j,\cdot,s)dm(s)}=\sum_{i=1}^{N_{\varepsilon}}\inner{\theta}{e_i}\inner{\sum_{j=1}^{n}\int_{S}\II_{y_j}(s)\phi(x_j,a_j,\cdot,s)dm(s)}{e_i}
    \end{align*}
    
    Moreover, by the definition of $\theta_0$, we know that
    \begin{align*}
        \inner{\theta}{\hat{\cU}_{\cD,\varepsilon}(\theta_0)}&=\sum_{i=1}^{N_{\varepsilon}}\inner{\theta}{e_i}\inner{\lambda_i\cdot\frac{1}{\lambda_i}\sum_{j=1}^{n}\int_{S}\II_{y_j}(s)\phi(x_j,a_j,\cdot,s)dm(s)}{e_i}\\
        &=\sum_{i=1}^{N_{\varepsilon}}\inner{\theta}{e_i}\inner{\sum_{j=1}^{n}\int_{S}\II_{y_j}(s)\phi(x_j,a_j,\cdot,s)dm(s)}{e_i}.
    \end{align*}
    
    Thus,
    $$L(\theta_0+\theta,\cD)-L(\theta_0,\cD)=\sum_{j=1}^{n}\int_{S}\rbr{\int_{\Omega}\theta(w)\phi(x_j,a_j,w,s)}^2dm(s)\ge 0.$$
    Therefore, $\theta_0=\theta_{\cD,\varepsilon}$ and we finish the proof.
\end{proof}

\begin{lemma}\label{lemma:proj_nonexpansive}
Projection $\cP_{\cD}$ is non-expansive, which implies that
    $$||\cP_{\cD}(x)-\cP_{\cD}(z)||_{\cU_{\cD}}\le ||x-z||_{\cU_{\cD}},\ \forall x,z\in \cL^2(\Omega,\nu).$$
\end{lemma}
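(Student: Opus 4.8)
The plan is to establish the non-expansiveness of $\cP_{\cD}$ as a standard consequence of the fact that $\cP_{\cD}$ is a projection onto a convex set in a (semi-)Hilbert-space geometry induced by the inner product $\inner{\cdot}{\cU_{\cD}(\cdot)}$. First I would observe that $\inner{\theta_1}{\theta_2}_{\cU_{\cD}} := \inner{\theta_1}{\cU_{\cD}(\theta_2)}$ is a symmetric, positive semidefinite bilinear form on $L^2(\Omega,\nu)$: symmetry and positivity follow from \Cref{thm:math_property_ourintegral_operator} (each $\cL^{x,a}$, hence $\cU_{\cD}=\sum_j\cL^{x_j,a_j}$, is positive and self-adjoint), and $\|\theta\|_{\cU_{\cD}}=\sqrt{\inner{\theta}{\cU_{\cD}(\theta)}}$ is the associated seminorm. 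Even though this is only a seminorm (since $\cU_{\cD}$ has nontrivial kernel), the Cauchy--Schwarz inequality $|\inner{u}{v}_{\cU_{\cD}}|\le\|u\|_{\cU_{\cD}}\|v\|_{\cU_{\cD}}$ still holds, which is all that is needed.

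Next I would write down the variational (first-order optimality) characterization of the projection. Since $\cC=\{\theta\ge 0:\int_\Omega\theta\,d\nu=1,\ \|\theta\|_{L^2(\Omega,\nu)}\le M\}$ is convex and the objective $y\mapsto\|y-x\|_{\cU_{\cD}}^2$ is convex, for $p=\cP_{\cD}(x)$ and any $c\in\cC$ the line segment $p+t(c-p)$ lies in $\cC$ for $t\in[0,1]$, and minimality in $t$ at $t=0$ gives the obtuse-angle inequality
\[
\inner{x-\cP_{\cD}(x)}{\,c-\cP_{\cD}(x)\,}_{\cU_{\cD}}\le 0 \qquad\text{for all }c\in\cC.
\]
(One just differentiates $t\mapsto\|p+t(c-p)-x\|_{\cU_{\cD}}^2$ at $t=0^+$; the cross term is exactly $2\inner{p-x}{c-p}_{\cU_{\cD}}$, which must be $\ge 0$.) I would then apply this with $x$ and $c=\cP_{\cD}(z)$, and symmetrically with $z$ and $c=\cP_{\cD}(x)$, and add the two inequalities. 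Writing $u=\cP_{\cD}(x)$, $v=\cP_{\cD}(z)$, the sum rearranges to
\[
\|u-v\|_{\cU_{\cD}}^2 \le \inner{x-z}{\,u-v\,}_{\cU_{\cD}},
\]
and then Cauchy--Schwarz in the seminorm $\|\cdot\|_{\cU_{\cD}}$ gives $\|u-v\|_{\cU_{\cD}}^2\le\|x-z\|_{\cU_{\cD}}\|u-v\|_{\cU_{\cD}}$, hence $\|\cP_{\cD}(x)-\cP_{\cD}(z)\|_{\cU_{\cD}}\le\|x-z\|_{\cU_{\cD}}$, which is the claim.

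The only real subtlety — and the point I would be most careful about — is that $\|\cdot\|_{\cU_{\cD}}$ is a \emph{seminorm} rather than a norm, because $\cU_{\cD}$ has a large null space (it has rank essentially $N_{\varepsilon^*}$). This means the minimizer $\cP_{\cD}(x)$ need not be unique, so strictly speaking one should either fix a selection rule or phrase the statement as holding for any measurable selection; and one should check that the argmin is attained at all (it is: $\cC$ is weakly compact in $L^2(\Omega,\nu)$ as a bounded, closed, convex set, and $y\mapsto\|y-x\|_{\cU_{\cD}}^2$ is weakly lower semicontinuous since it is convex and continuous, so Weierstrass applies). Importantly, none of the chain of inequalities above uses definiteness of the seminorm — the obtuse-angle inequality, its summation, and Cauchy--Schwarz all go through verbatim in a semi-inner-product space — so the non-expansiveness conclusion is unaffected by non-uniqueness. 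I would state this caveat briefly and then present the three-line computation above.
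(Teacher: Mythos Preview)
Your proof is correct and follows essentially the same route as the paper: derive the obtuse-angle inequality $\inner{x-\cP_{\cD}(x)}{c-\cP_{\cD}(x)}_{\cU_{\cD}}\le 0$ from first-order optimality on the convex set $\cC$, apply it twice with $c=\cP_{\cD}(z)$ and roles swapped, add, and finish with Cauchy--Schwarz. The only cosmetic differences are that the paper obtains the obtuse-angle inequality by locating the vertex of the quadratic $h(\mu)$ rather than differentiating at $t=0^+$, and it expands everything in the eigenbasis $\{e_i\}$ of $\cU_{\cD}$ rather than working abstractly with the semi-inner product; your additional remarks on the seminorm/non-uniqueness issue are a welcome clarification not present in the paper.
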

\begin{proof}[Proof of \Cref{lemma:proj_nonexpansive}]
    By the fact that $\cC$ is convex, we define the quadratic function $h(\mu)=||x-\rbr{(1-\mu)\cP_{\cD}(x)+\mu y}||_{\cU_{\cD}}^2$ for any $x\in L^2(\Omega,\nu),y\in\cC$. We expand it to get,
    $$h(\mu)=||\rbr{x-\cP_{\cD}(x)}-\mu(y-\cP_{\cD}(x))||_{\cU_{\cD}}^2=||x-\cP_{\cD}(x)||_{\cU_{\cD}}^2-2\mu\inner{x-\cP_{\cD}(x)}{\cU_{\cD}(y-\cP_{\cD}(x))}+\mu^2||y-\cP_{\cD}(x)||_{\cU_{\cD}}^2.$$
    The vertex of $h(\mu)$ is $$\mu^*=\frac{\inner{x-\cP_{\cD}(x)}{\cU_{\cD}(y-\cP_{\cD}(x))}}{||y-\cP_{\cD}(x)||_{\cU_{\cD}}^2}.$$
    Because $\cP_{\cD}(x)=\argmin_{y\in\cC}||y-x||_{\cU_{\cD}}^2$, we have $\mu^*\le 0$. Thus,
    $$\inner{x-\cP_{\cD}(x)}{\cU_{\cD}(y-\cP_{\cD}(x))}\le 0.$$
Again, we abbreviate $\lambda_i$ as $\lambda_i(\cU_{\cD})$ and $e_i$ as $e^i_{\cU_{\cD}}$.    By the spectral representation of $\cU_{\cD}$, this is equivalent to
    \begin{equation}\label{equa:projection_proof_1}
        \sum_{i=1}^{\infty}\lambda_i\inner{x-\cP_{\cD}(x)}{e_i}\inner{y-\cP_{\cD}(x)}{e_i}\le 0.
    \end{equation}
    This holds for all $y\in \cC$, so we set $y=\cP_{\cD}(z)$ in \Cref{equa:projection_proof_1} to get
    \begin{equation}\label{equa:projection_proof_2}
        \sum_{i=1}^{\infty}\lambda_i\inner{x-\cP_{\cD}(x)}{e_i}\inner{\cP_{\cD}(z)-\cP_{\cD}(x)}{e_i}\le 0.
    \end{equation}
    On the other hand, we switch $x,z$ in \Cref{equa:projection_proof_2} to get,
    \begin{equation}\label{equa:proj_proof_3}
        \sum_{i=1}^{\infty}\lambda_i\inner{\cP_{\cD}(z)-z}{e_i}\inner{\cP_{\cD}(z)-\cP_{\cD}(x)}{e_i}\le 0.
    \end{equation}
    Adding \Cref{equa:proj_proof_3} and \Cref{equa:projection_proof_2} to obtain
    $$\sum_{i=1}^{\infty}\lambda_i\inner{\cP_{\cD}(z)-\cP_{\cD}(x)}{e_i}\inner{\cP_{\cD}(z)-\cP_{\cD}(x)}{e_i}\le \sum_{i=1}^{\infty}\lambda_i\inner{z-x}{e_i}\inner{\cP_{\cD}(z)-\cP_{\cD}(x)}{e_i}.$$
    By Cauchy-Schwartz inequality, we have,
    $$\sum_{i=1}^{\infty}\lambda_i\inner{z-x}{e_i}\inner{\cP_{\cD}(z)-\cP_{\cD}(x)}{e_i}\le \rbr{\sum_{i=1}^{\infty}\lambda_i\inner{z-x}{e_i}^2}^{1/2}\rbr{\sum_{i=1}^{\infty}\lambda_i\inner{\cP_{\cD}(z)-\cP_{\cD}(x)}{e_i}^2}^{1/2}.$$
    Plugging this back, we get
    $$(\sum_{i=1}^{\infty}\lambda_i\inner{\cP_{\cD}(z)-\cP_{\cD}(x)}{e_i}^2)^{1/2}\le \rbr{\sum_{i=1}^{\infty}\lambda_i\inner{z-x}{e_i}^2}^{1/2},$$
    which is exactly $||\cP_{\cD}(x)-\cP_{\cD}(z)||_{\cU_{\cD}}\le ||x-z||_{\cU_{\cD}}$.
So we finish the proof.
\end{proof}

\begin{proof}[Proof of \Cref{thm:reg_fixed_design}]
In this section, we first prove for a general $\varepsilon$, and then we replace $\varepsilon$ by $\varepsilon^*=n^{-\frac{2}{\gamma+2}}$ to obtain our result. For the ease of notation, we use $\lambda_i$ to denote $\lambda_i(\cU_{\cD})$ and $e_i$ to denote $e^i_{\cU_{\cD}}$.
We first compute the difference 
\begin{align}\label{equation between two theta}
    &\theta_{\cD,\varepsilon}-\theta^*\nonumber\\
    =&\cU_{\cD,\varepsilon}^{\dagger}\rbr{\int_{S}\sum_{j=1}^{n}\II_{y_j}(s)\phi(x_j,a_j,w,s)dm(s)}-\sum_{i=1}^{\infty}\inner{\theta^*}{e_i}e_i\nonumber\\
    =&\cU_{\cD,\varepsilon}^{\dagger}\rbr{\int_{S}\sum_{j=1}^{n}\II_{y_j}(s)\phi(x_j,a_j,w,s)dm(s)}-\cU_{\cD,\varepsilon}^{\dagger}\rbr{\cU_{\cD}(\theta^*)}-\sum_{i\ge N_{\varepsilon}+1}\inner{\theta^*}{e_i}e_i\nonumber\\
    =&\cU_{\cD,\varepsilon}^{\dagger}\rbr{\int_{S}\sum_{j=1}^{n}\II_{y_j}(s)\phi(x_j,a_j,w,s)dm(s)-\int_{S}\int_{\Omega}\theta^*(r)\phi(x_j,a_j,r,s)d\nu(r)\phi(x_j,a_j,w,s)dm(s)}-\sum_{i\ge N_{\varepsilon}+1}\inner{\theta^*}{e_i}e_i.
\end{align}
We define 
$$V_j(w):=\int_{S}\II_{y_j}(s)\phi(x_j,a_j,w,s)dm(s)-\int_{S}\int_{\Omega}\theta^*(r)\phi(x_j,a_j,r,s)d\nu(r)\phi(x_j,a_j,w,s)dm(s),$$
$$W_n:=\sum_{j=1}^{n}V_j.$$
We apply norm $||\cdot||_{\cU_{\cD}}$ on both sides of \Cref{equation between two theta} yielding
\begin{align}\label{norm_difference between two theta}
    ||\theta_{\cD,\varepsilon}-\theta^*||_{\cU_{\cD}}\le &||\cU_{\cD,\varepsilon}^{\dagger}(W_n)||_{\cU_{\cD}}+||\sum_{i\ge N_{\varepsilon}+1}\inner{\theta^*}{e_i}e_i||_{\cU_{\cD}}\nonn\\
    =&||W_n||_{\hat{\cU}_{\cD,\varepsilon}^{\dagger}}+||\sum_{i\ge N_{\varepsilon}+1}\inner{\theta^*}{e_i}e_i||_{\cU_{\cD}}\nonn\\
    \le&||W_n||_{\hat{\cU}_{\cD,\varepsilon}^{\dagger}}+n\varepsilon M
\end{align}
The equality is due to the definition of $\hat{\cU}_{\cD,\varepsilon}^{\dagger}$.
Hence, our analysis only needs to focus on the first term $||W_n||_{\hat{\cU}_{\cD,\varepsilon}^{\dagger}}$.
We use $\cF_k$ to denote the $\sigma$-algebra generated by all the data from $(x_1,a_1,y_1)$ to $(x_k,a_k,y_k)$, and $\cF_{\infty}$ is $\sigma(\cup_k\cF_k)$ correspondingly.

By the definition of $V_j$, we take expectation with respect to $\cF_{j-1}$ and obtain
$$\EE[V_j(w)|\cF_{j-1}]=\int_{S}\int_{\Omega}\theta^*(r)\phi(x_j,a_j,r,s)d\nu(r)\phi(x_j,a_j,w,s)dm(s)-\theta^*(r)\phi(x_j,a_j,r,s)d\nu(r)\phi(x_j,a_j,w,s)dm(s)=0.$$
Therefore, $\cbr{V_j}_{j\ge 1}$ is a martingale difference sequence and consequently, $W_k=\sum_{j=1}^{k}V_j$ is a martingale. For any $\alpha\in L^2(\Omega,\nu)$, we define $M_k(\alpha):=\exp\cbr{\inner{\alpha}{W_k}-\frac{1}{2}||\alpha||_{\cU_{\cD}}^2}$. Moreover, we define $M_0(\alpha)=1$ for any $\alpha$.
We have the following \Cref{M_k(alpha)martingale} that $\cbr{M_k(\alpha)}_{k\ge0}$ is a non-negative supermartingale.

\begin{lemma}
    \label{M_k(alpha)martingale}$\cbr{M_k(\alpha)}_{k\ge0}$ is a non-negative supermartingale.
\end{lemma}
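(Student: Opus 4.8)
\textbf{Proof proposal for Lemma~\ref{M_k(alpha)martingale}.}

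The plan is to verify the two defining properties of a supermartingale: non-negativity (which is immediate since $M_k(\alpha) = \exp\{\cdots\}>0$), and the conditional inequality $\EE[M_{k}(\alpha)\mid\cF_{k-1}]\le M_{k-1}(\alpha)$. First I would peel off the last increment: since $W_k = W_{k-1}+V_k$ and $W_{k-1}$ is $\cF_{k-1}$-measurable, we have
\[
\EE[M_k(\alpha)\mid\cF_{k-1}] = M_{k-1}(\alpha)\cdot\EE\!\big[\exp\{\inner{\alpha}{V_k}\}\mid\cF_{k-1}\big]\cdot\exp\!\big(-\tfrac12\|\alpha\|_{\cU_{\cD}}^2 + \tfrac12\|\alpha\|_{\cU_{\cD_{k-1}}}^2\big),
\]
wait — to keep the bookkeeping clean it is cleaner to work with the full normalizer $\frac12\|\alpha\|_{\cU_{\cD}}^2$ held fixed (it does not depend on $k$), so that the ratio $M_k/M_{k-1}$ is exactly $\exp\{\inner{\alpha}{V_k}\}$. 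Then it suffices to show the one-step bound
\[
\EE\!\big[\exp\{\inner{\alpha}{V_k}\}\,\big|\,\cF_{k-1}\big]\ \le\ \exp\!\Big(\tfrac12\,\inner{\alpha}{\cL^{x_k,a_k}\alpha}\Big),
\]
because summing the exponents over $j=1,\dots,k$ telescopes the normalizer down to $\frac12\sum_{j=1}^k\inner{\alpha}{\cL^{x_j,a_j}\alpha} = \frac12\|\alpha\|_{\cU_{\cD}}^2$ only if $n=k$; for $k<n$ the remaining terms are non-negative since each $\cL^{x_j,a_j}$ is positive (Theorem~\ref{thm:math_property_ourintegral_operator}), so dropping them only makes the bound larger, preserving the supermartingale inequality. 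Hence everything reduces to the displayed one-step sub-Gaussian-type estimate on $\inner{\alpha}{V_k}$ conditional on $\cF_{k-1}$.

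To prove that one-step bound, I would write out $\inner{\alpha}{V_k}$ explicitly. Using the definition of $V_k$ and Fubini,
\[
\inner{\alpha}{V_k} = \int_S \Big(\inner{\alpha}{\phi(x_k,a_k,\cdot,s)}\Big)\Big(\II_{y_k}(s) - F^*(x_k,a_k,s)\Big)\,dm(s).
\]
Writing $g_s := \inner{\alpha}{\phi(x_k,a_k,\cdot,s)}$, this is $\int_S g_s\,(\II_{y_k}(s)-F^*(x_k,a_k,s))\,dm(s)$, a fixed (given $\cF_{k-1}$) linear functional of the single random draw $y_k\sim F^*(x_k,a_k,\cdot)$. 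Its conditional mean is zero, and it is bounded: $|\inner{\alpha}{V_k}|\le \|\alpha\|_{L^2(\Omega,\nu)}\cdot\int_S\|\phi(x_k,a_k,\cdot,s)\|_{L^2(\Omega,\nu)}|\II_{y_k}(s)-F^*|\,dm(s)$, which is finite since $m(S)=1$ and $\phi\in[0,1]$. For such a bounded zero-mean random variable one gets an MGF bound; the key is to match the variance proxy to $\inner{\alpha}{\cL^{x_k,a_k}\alpha}$. Note $\inner{\alpha}{\cL^{x_k,a_k}\alpha} = \int_S g_s^2\,dm(s)$ (again by Fubini and the definitions of $\cL$, $\cK$), and the conditional ``variance'' of $\inner{\alpha}{V_k}$ is $\mathrm{Var}\big(\int_S g_s\II_{y_k}(s)\,dm(s)\big)$, which can be controlled by $\int_S g_s^2\,dm(s)$ via Jensen/Cauchy--Schwarz since the $\II$-process is bounded in $[0,1]$. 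The cleanest route is: conditionally on $\cF_{k-1}$, $\inner{\alpha}{V_k}$ is bounded in an interval of length at most $2\sqrt{\int_S g_s^2\,dm(s)}$ (by Cauchy--Schwarz on $\int_S g_s(\II_{y_k}(s)-F^*(x_k,a_k,s))dm(s)$ and the fact that $\|\II_{y_k}-F^*\|_{L^2(S,m)}\le 1$), so by Hoeffding's lemma $\EE[e^{\inner{\alpha}{V_k}}\mid\cF_{k-1}]\le \exp(\tfrac12\int_S g_s^2\,dm(s)) = \exp(\tfrac12\inner{\alpha}{\cL^{x_k,a_k}\alpha})$. (One may need to absorb a constant factor into the normalizer's definition; if the paper defines $M_k$ with exactly $\tfrac12\|\alpha\|_{\cU_\cD}^2$, then the constant from Hoeffding's lemma is exactly right.)

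The main obstacle I anticipate is this last matching of constants: Hoeffding's lemma on a variable supported in an interval of length $\ell$ gives MGF bound $e^{\ell^2/8}$, and one needs $\ell^2/8\le \tfrac12\int_S g_s^2\,dm(s)$, i.e. $\ell \le 2\sqrt{\int_S g_s^2\,dm(s)}$, which is exactly what the Cauchy--Schwarz estimate $|\inner{\alpha}{V_k}|\le \|g\|_{L^2(S,m)}\cdot\|\II_{y_k}-F^*\|_{L^2(S,m)}$ delivers since the second factor is at most $1$ (indeed at most $\tfrac12$ in sup-variation, but $1$ in $L^2(S,m)$ with $m(S)=1$). So the argument closes, but one must be careful that the variable is bounded in a \emph{deterministic} (given $\cF_{k-1}$) interval — this holds because $g_s$ is $\cF_{k-1}$-measurable and $\II_{y_k}(s)-F^*(x_k,a_k,s)\in[-1,1]$ pointwise. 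A secondary technical point is justifying all the Fubini interchanges, which is routine given $m(S)=1$, $\nu(\Omega)=1$, $\phi\in[0,1]$, and $\alpha\in L^2(\Omega,\nu)$. Finally, one checks integrability of $M_k(\alpha)$ itself (needed for it to be a legitimate supermartingale), which follows from the same boundedness of $\inner{\alpha}{W_k}$ since it is a finite sum of conditionally bounded terms.
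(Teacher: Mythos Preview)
Your core idea --- bound the conditional MGF of $\inner{\alpha}{V_k}$ via Hoeffding's lemma and match the variance proxy to $\inner{\alpha}{\cL^{x_k,a_k}\alpha}=\int_S\psi_{x_k,a_k}(\alpha,s)^2\,dm(s)$ --- is exactly the paper's approach. Your route to the range bound (Cauchy--Schwarz against $\|\II_{y_k}-F^*\|_{L^2(S,m)}\le 1$) is a minor variant of the paper's (the paper first bounds $|\inner{\alpha}{V_k}|\le \int_S|\psi_{x_k,a_k}(\alpha,s)|\,dm(s)$ pointwise and then uses $(\int|\psi|)^2\le \int\psi^2$), and both land on the same one-step estimate
\[
\EE\big[e^{\inner{\alpha}{V_k}}\mid\cF_{k-1}\big]\ \le\ \exp\Big(\tfrac12\!\int_S\psi_{x_k,a_k}(\alpha,s)^2\,dm(s)\Big).
\]

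Where your write-up goes wrong is the normalizer bookkeeping. If $M_k(\alpha)$ carries the \emph{full} normalizer $\tfrac12\|\alpha\|_{\cU_{\cD}}^2$ for every $k$, then $M_k/M_{k-1}=\exp\{\inner{\alpha}{V_k}\}$ and the one-step bound above only gives $\EE[M_k\mid\cF_{k-1}]\le M_{k-1}\cdot\exp(\tfrac12\inner{\alpha}{\cL^{x_k,a_k}\alpha})$, a factor $\ge 1$. Your ``dropping the remaining non-negative terms only makes the bound larger'' does not rescue this: a larger upper bound on $\EE[M_k\mid\cF_{k-1}]$ cannot produce the supermartingale inequality $\EE[M_k\mid\cF_{k-1}]\le M_{k-1}$. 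There is nothing to telescope here because the normalizer does not change from step $k-1$ to $k$.

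The paper's proof (despite a typo in the displayed definition) treats the normalizer at step $k$ as the \emph{partial} sum $\tfrac12\sum_{j\le k}\int_S\psi_{x_j,a_j}(\alpha,s)^2\,dm(s)$, so that
\[
\frac{M_k(\alpha)}{M_{k-1}(\alpha)}=\exp\!\Big\{\inner{\alpha}{V_k}-\tfrac12\!\int_S\psi_{x_k,a_k}(\alpha,s)^2\,dm(s)\Big\},
\]
and then the Hoeffding estimate yields $\EE[M_k\mid\cF_{k-1}]\le M_{k-1}$ directly. At $k=n$ the partial normalizer equals $\tfrac12\|\alpha\|_{\cU_{\cD}}^2$, which is all that the downstream argument uses. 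So the fix is simply to work with the partial normalizer throughout; once you do, your Hoeffding computation goes through verbatim.
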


Define $\zeta=\cbr{\zeta_i}_{i=1}^{\infty}$ as an infinite sequence of i.i.d. standard Gaussian random variables. We use $\cF^{\zeta}$ to denote the $\sigma(\zeta_1,\zeta_2,\cdots)$. Define
$$\beta:=\sum_{i=1}^{N_{\varepsilon}}\zeta_ie_i,$$

Because there are only finite terms in the summation, it holds that
$$\EE[||\beta||_{L^2(\Omega,\nu)}^2]<\infty.$$
Thus, we have $||\beta||_{L^2(\Omega,\nu)}<\infty$ a.s. and $\beta\in L^2(\Omega,\nu)$.

Denoting $\xoverline[0.7]{M}_k$ as $\EE[M_k(\beta)|\cF_{\infty}]$, we now verify that this is also a non-negative supermartingale.
\begin{align*}
    \EE[\xoverline[0.7]{M}_k|\cF_{k-1}]&=\EE[{M}_k(\beta)|\cF_{k-1}]\\
    &=\EE[\EE[{M}_k(\beta)|\cF^{\zeta},\cF_{k-1}]|\cF_{k-1}]\\
    &\le\EE[\EE[{M}_{k-1}(\beta)|\cF^{\zeta},\cF_{k-1}]|\cF_{k-1}]\\
    &=\EE[\EE[{M}_{k-1}(\beta)|\cF^{\zeta},\cF_{\infty}]|\cF_{k-1}]\\
    &=\EE[\EE[{M}_{k-1}(\beta)|\cF_{\infty}]|\cF_{k-1}]\\
    &=\xoverline[0.7]{M}_{k-1}.
\end{align*}
Furthermore, the integral of $|\xoverline[0.7]{M}_n|$ could be upper bounded by,
$$\EE[|\xoverline[0.7]{M}_k|]=\EE[\xoverline[0.7]{M}_n]=\EE[M_k(\beta)]=\EE[\EE[M_k(\beta)|\cF^{\zeta}]]\le \EE[M_0(\beta)]=1.$$

Therefore, $\cbr{\xoverline[0.7]{M}_k}_{n\ge k\ge 0}$ is a non-negative supermartingale with a uniform expectation upper bound. 

On the other hand, we can directly calculate $\xoverline[0.7]{M}_n$. We use $w_i$ to denote $\inner{W_n}{e_i}$ for simplicity. Then, it holds that
$$M_n(\beta)=\exp\cbr{\inner{\beta}{W_n}-\frac{1}{2}||\beta||_{\cU_{\cD}}^2}=\exp\cbr{\sum_{i=1}^{N_{\varepsilon}}w_i\zeta_i-\frac{1}{2}\sum_{i=1}^{N_{\varepsilon}}\lambda_i\zeta_i^2},$$
and
\begin{align*}
    \xoverline[0.7]{M}_n=&\int_{\RR^{N_{\varepsilon}}}\exp\cbr{\sum_{i=1}^{N_{\varepsilon}}w_i\zeta_i-\frac{1}{2}\sum_{i=1}^{N_{\varepsilon}}\rbr{\lambda_i\zeta_i^2+\zeta_i^2}}\frac{1}{\sqrt{2\pi}}d\zeta_1\frac{1}{\sqrt{2\pi}}d\zeta_2\cdots\frac{1}{\sqrt{2\pi}}d\zeta_{N_{\varepsilon}}\\
    =&\prod_{i=1}^{N_{\varepsilon
    }}\rbr{\int_{\RR}\exp\cbr{w_i\zeta_i-\frac{\lambda_i+1}{2}\zeta_i^2}\frac{1}{\sqrt{2\pi}}d\zeta_i}.
\end{align*}

Then, by \Cref{Gaussian_integral}, we have
\begin{align*}
    \xoverline[0.7]{M}_n=\frac{1}{\sqrt{\prod_{i=1}^{N_\varepsilon}(1+\lambda_i)}}\exp\cbr{\sum_{i=1}^{N_{\varepsilon}}\frac{w_i^2}{2(1+\lambda_i)}}\ge \frac{1}{\sqrt{\prod_{i=1}^{N_\varepsilon}(1+\lambda_i)}}\exp\cbr{\frac{1}{2(1+1/(n\varepsilon))}||W_n||_{\cU_{\cD,\varepsilon}^{\dagger}}^2}.
\end{align*}
Applying \Cref{Doob_inequality}, we derive 
$$\PP\rbr{\frac{1}{\sqrt{\prod_{i=1}^{N_\varepsilon}(1+\lambda_i)}}\exp\cbr{\frac{1}{2(1+1/(n\varepsilon))}||W_n||_{\cU_{\cD,\varepsilon}^{\dagger}}^2}\ge\frac{1}{\delta}}\le\delta,$$
which implies that with probability $1-\delta$,
$$||W_n||_{\cU_{\cD,\varepsilon}^{\dagger}}^2< 2(1+1/(n\varepsilon))\sbr{\log(\frac
{1}{\delta})+\frac{1}{2}\sum_{i=1}^{N_{\varepsilon}}\log(1+\lambda_i)}.$$
Combined with \Cref{norm_difference between two theta}, it indicates that with probability $1-\delta$,
$$||\theta_{\cD,\varepsilon}-\theta^*||_{\cU_{\cD}}\le\sqrt{2(1+1/(n\varepsilon))\sbr{\log(\frac
{1}{\delta})+\frac{1}{2}\sum_{i=1}^{N_{\varepsilon}}\log(1+\lambda_i)}}+n\varepsilon M.$$
Noting that projection mapping is non-expansive by \Cref{lemma:proj_nonexpansive}, we have
$$||\cP_{\cD}(\theta_{\cD,\varepsilon})-\cP_{\cD}(\theta^*)||_{\cU_{\cD}}=||\hat{\theta}_{\cD,\varepsilon}-\theta^*||_{\cU_{\cD}}\le ||\theta_{\cD,\varepsilon}-\theta^*||_{\cU_{\cD}}.$$
Finally, we conclude that
$$||\hat{\theta}_{\cD,\varepsilon}-\theta^*||_{\cU_{\cD}}\le\sqrt{(1+1/n\varepsilon)\rbr{2\log(\frac
{1}{\delta})+\sum_{i=1}^{N_{\varepsilon}}\log(1+\lambda_i)}}+n\varepsilon M.$$

We define
$$\cE_{\cD,\varepsilon}^{\delta}(n):=\sqrt{2\rbr{2\log(\frac
{1}{\delta})+\sum_{i=1}^{N_{\varepsilon}}\log(1+\lambda_i)}}+n\varepsilon M.$$
Setting $\varepsilon=\varepsilon^*=n^{-\frac{2}{\gamma+2}}$ and notice that $\frac{1}{n\varepsilon}=n^{-\frac{\gamma}{\gamma+2}}\le 1$,
So we finish the proof.
\end{proof}
\begin{proof}[Proof of \Cref{M_k(alpha)martingale}]
    By definition, we know that $M_k(\alpha)$ is $\cF_{k}$-measurable. Consequently, it holds that
    \begin{align*}
        \EE\sbr{M_k(\alpha)|\cF_{k-1}}M_{k-1}(\alpha)=&\EE\sbr{\exp\cbr{\alpha,V_k}-\frac{1}{2}\int_{S}\psi_{x_k,a_k}(\alpha,t)^2dm(s)|\cF_{k-1}}\\
        =&M_{k-1}(\alpha)\frac{\EE\sbr{\exp\cbr{\alpha,V_k}|\cF_{k-1}}}{\exp\cbr{\frac{1}{2}\int_{S}\psi_{x_k,a_k}(\alpha,t)^2dm(s)}}\\
    \end{align*}
    Since $-\int_{S}|\psi_{x_k,a_k}(\alpha,t)|dm(s)\le\inner{\alpha}{V_k}\le\int_{S}|\psi_{x_k,a_k}(\alpha,t)|dm(s)$, according to Hoeffding's lemma~\citep{hoeffding1994probability}, we have
    \begin{align*}
        \EE\sbr{\exp\cbr{\alpha,V_k}|\cF_{k-1}}\le \exp\cbr{\frac{4}{8}\rbr{\int_{S}|\psi_{x_k,a_k}(\alpha,t)|dm(s)}^2}\le \exp\cbr{\frac{1}{2}\int_{S}\psi_{x_k,a_k}(\alpha,t)^2dm(s)}.
    \end{align*}
    Then we have
    $$\EE\sbr{M_k(\alpha)|\cF_{k-1}}\le M_{k-1}(\alpha)\frac{\exp\cbr{\frac{1}{2}\int_{S}\psi_{x_k,a_k}(\alpha,t)^2dm(s)}}{\exp\cbr{\frac{1}{2}\int_{S}\psi_{x_k,a_k}(\alpha,t)^2dm(s)}}=M_{k-1}(\alpha).$$
    We finish the proof.
\end{proof}
\begin{proof}[Proof of \Cref{lemma:bound_cE_D}]

For any $k$ fixed, by \Cref{Courant-Fischer}, we have,
\begin{align*}
    \lambda_{k}(\cU_{\cD})=&\max_{S_k,\dim(S_k)=k}\min_{\alpha\in S_k,||\alpha||=1}\inner{\alpha}{(\cL^{x_1,a_1}+\cdots+\cL^{x_n,a_n})(\alpha)}\\
    =&\max_{S_k,\dim(S_k)=k}\min_{\alpha\in S_k,||\alpha||=1}\sum_{j=1}^{n}\inner{\alpha}{\cL^{x_j,a_j}(\alpha)}\\
    \le&\max_{S_k,\dim(S_k)=k}\min_{\alpha\in S_k,||\alpha||=1}\max_{\cL\in\cbr{\cL^{x,a}:x\in\cX,a\in\cA}}n\inner{\alpha}{\cL(\alpha)}
\end{align*}
Notice that for any $S_k$ fixed, the set $\cbr{\alpha\in S_k,||\alpha||=1}$ is both compact and convex. By \Cref{ass:dominating_eigendecay}, we know that $\cbr{\cL^{x,a}:x\in\cX,a\in\cA}$ is convex. We regard $\inner{\alpha}{\cL(\alpha)}$ as a function of both $\alpha$ and $\cL$. Then, function $f(\alpha,\cL):=\inner{\alpha}{\cL(\alpha)}$ is linear in $\cL$, and so $f(\alpha,\cL)$ is continuous and concave in $\cL$. On the other hand, $f(\alpha,\cL)$ is the square of some norm $||\cdot||_{\cL}$ and thus it is convex and continuous in $\alpha$. We apply \Cref{sion_minimax} to get
\begin{align*}
    &\max_{S_k,\dim(S_k)=k}\min_{\alpha\in S_k,||\alpha||=1}\max_{\cL\in\cbr{\cL^{x,a}:x\in\cX,a\in\cA}}\inner{\alpha}{\cL(\alpha)}\\
    =&\max_{S_k,\dim(S_k)=k}\max_{\cL\in\cbr{\cL^{x,a}:x\in\cX,a\in\cA}}\min_{\alpha\in S_k,||\alpha||=1}\inner{\alpha}{\cL(\alpha)}\\
    =&\max_{\cL\in\cbr{\cL^{x,a}:x\in\cX,a\in\cA}}\max_{S_k,\dim(S_k)=k}\min_{\alpha\in S_k,||\alpha||=1}\inner{\alpha}{\cL(\alpha)}.
\end{align*}
The last inequality holds because we can swap the order of two supremums.
Using \Cref{Courant-Fischer} once again, we have
$$\max_{\cL\in\cbr{\cL^{x,a}:x\in\cX,a\in\cA}}\max_{S_k,\dim(S_k)=k}\min_{\alpha\in S_k,||\alpha||=1}\inner{\alpha}{\cL(\alpha)}=\max_{\cL\in\cbr{\cL^{x,a}:x\in\cX,a\in\cA}}\lambda_k(\cL).$$
Combining these parts together, it holds that 
$$\lambda_k(\cU_{\cD})\le n\max_{\cL\in\cbr{\cL^{x,a}:x\in\cX,a\in\cA}}\lambda_k(\cL)\le n\tau_i.$$

Thus, we have
\begin{align*}
    \sum_{i=1}^{N_{\varepsilon}}\log(1+\lambda_i(\cU_{\cD}))=&\sum_{i=1}^{N_{\varepsilon}}\log(1+\frac{\lambda_i}{n\varepsilon}n\varepsilon) \le\sum_{i=1}^{N_{\varepsilon}}\log(1+\frac{n\tau_i}{n\varepsilon}n\varepsilon)
\end{align*}
Because for $\forall 1\le i\le N_{\varepsilon}$, it holds that
$$n\varepsilon\le \lambda_i\le n\tau_i,$$
we have
$$\sum_{i=1}^{N\varepsilon}\log(1+\frac{n\tau_i}{n\varepsilon}n\varepsilon)=\sum_{i=1}^{N_{\varepsilon}}\log(1+n\tau_i)\le \sum_{i=1}^{N_{\varepsilon}}\log(1+n\tau_i^{1-\gamma}\varepsilon^\gamma\frac{\tau_i^{\gamma}}{\varepsilon^\gamma})\le \sum_{i=1}^{N_\varepsilon}\frac{\tau_i^{\gamma}}{\varepsilon^\gamma}\log(1+n\tau_i^{1-\gamma}\varepsilon^\gamma)\le\frac{\log(1+\varepsilon^{\gamma} n)}{\varepsilon^{\gamma}}\sum_{i=1}^{N_{\varepsilon}}\tau_i,$$
which implies
\begin{align}\label{bound_sum_log}
    \sum_{i=1}^{N_{\varepsilon}}\log(1+\lambda_i(\cU_{\cD}))\le \frac{\log(1+\varepsilon^\gamma n)}{\varepsilon^\gamma}\sum_{i=1}^{N_{\varepsilon}}\tau_i.
\end{align}
Plugging \Cref{bound_sum_log} back to the definition of $\cE_{\cD,\varepsilon}^{\delta}(n)$, we have,
\begin{align*}
    \cE_{\cD,\varepsilon}^{\delta}(n)=&\sqrt{2\rbr{2\log(\frac
{1}{\delta})+\sum_{i=1}^{N_{\varepsilon}}\log(1+\lambda_i)}}+n\varepsilon M\\
\le& \sqrt{2\rbr{2\log(\frac
{1}{\delta})+\frac{\log(1+\varepsilon^\gamma n)}{\varepsilon^\gamma}\sum_{i=1}^{N_{\varepsilon}}\tau_i}}+n\varepsilon M\\
\le& \sqrt{2\rbr{2\log(\frac
{1}{\delta})+\frac{\log(1+\varepsilon^\gamma n)}{\varepsilon^\gamma}s_0}}+n\varepsilon M.
\end{align*}
Setting $\varepsilon^*=n^{-\frac{2}{\gamma+2}}$, we get
$$\cE_{\cD,\varepsilon^*}^{\delta}(n)\le\sqrt{4\log(1/\delta)+2s_0n^{\frac{2\gamma}{\gamma+2}}\log(1+n)}+n^{\frac{\gamma}{\gamma+2}}M.$$
The RHS has nothing to do with $\cD$ but the information $|\cD|=n$. Therefore, we define $\cE_{\delta}(n):=2\log(1/\delta)^{1/2}+\rbr{2\sqrt{s_0\log(1+n)}+M}n^{\frac{\gamma}{\gamma+2}}$. By the fact that $\sqrt
{a+b}\le \sqrt{a}+\sqrt{b}$, we obtain,
$$\cE_{\cD,\varepsilon^*}^{\delta}(n)\le \cE_{\delta}(n)=2\log(1/\delta)^{1/2}+\rbr{2\sqrt{s_0\log(1+n)}+M}n^{\frac{\gamma}{\gamma+2}}.$$
Thus, we finish the proof.
\end{proof}
\begin{proof}[Proof of \Cref{thm:reg_random_design}]
Denoting $\cK^{x,a}(w,r)$ as $g_{w,r}(x,a)$, we get a function family indexed by $\Omega\times\Omega$, $\cG=\cbr{g_{w,r}(x,a):(w,r)\in\Omega\times\Omega}$.
We first consider all the rational points in $\Omega\times\Omega$ which forms a countable subset $\cbr{(w_q,r_q)}_{q=1}^{\infty}$ and induces a countable subset of $\cG_{\QQ}=\cbr{g_{w_q,r_q}(x,a)}_{q=1}^{\infty}$. By Theorem 3.4.5 in \citet{gine2021mathematical}, we have,
for any $\delta>0$,
\[
\PP\rbr{\sup_{q}|g_{w_q,r_q}(x,a)-\int_{\cX,\cA}g_{w_q,r_q}(x,a)dQ|\ge 2\EE\sbr{\sup_{q}\frac{1}{n}\sum_{j=1}^{n}\epsilon_jg_{w_q,r_q}(x_j,a_j)}+\sqrt{\frac{2\log(1/\delta)}{n}}}\le \delta.
\]
By the density of rational numbers in real numbers and $g$ is $2L_0$-Lipschitz continuous in $(w,r)$, we know that 
\[
\PP\rbr{\sup_{g\in\cG}\Big|g_{w,r}(x,a)-\int_{\cX,\cA}g_{w,r}(x,a)dQ|\ge 2\EE\sbr{\sup_{g\in\cG}\frac{1}{n}\sum_{j=1}^{n}\epsilon_jg_{w,r}(x_j,a_j)}+\sqrt{\frac{2\log(1/\delta)}{n}}}\le \delta.
\]

Now we try to bound the Rademacher complexity term $\EE\sbr{\sup_{g\in\cG}\frac{1}{n}\sum_{j=1}^{n}\epsilon_jg_{w,r}(x_j,a_j)}$.
Recall that we have $\cN(t,\Omega^2,||\cdot||_{\infty})\le (\frac{A}{t})^{2d}$, $||g_{w_1,r_1}(x,a)-g_{w_2,r_2}(x,a)||_{\infty}\le 2L_0||(w_1,r_1)-(w_2,r_2)||_{\infty}$. Then,
\[
\cN(t,\cG,||\cdot||_{\infty})\le (\frac{2L_0A}{t})^{2d}.
\]
Then, by Dudley's integral entropy bound and using $||g||_{\infty}\le 1$, we have
\[
\EE_{\varepsilon}\sup_{g\in\cG}\frac{1}{n}\sum_{j=1}^{n}\epsilon_jg(x_j,a_j)\le \frac{12}{\sqrt{n}}\int_{0}^{1}\sqrt{\log\cN(y,||\cdot||_{\infty},\cG)}dy\le \frac{24\sqrt{d}}{\sqrt{n}}\sqrt{\log(2L_0A)}.
\]
For the simplicity of notation, we use $Q(\cK^{x,a}(w,r))$ to denote $\int_{\cX,\cA}\int_{S}\phi(x,a,w,s)\phi(x,a,r,s)dm(s)dQ$ and $Q_n(\cK^{x,a}(w,r))$ to denote its empirical counterpart.
Combining all the parts above together, we have that with probability at least $1-\delta$,
\[
\sup_{(w,r)}\abr{Q(\cK^{x,a}(w,r))-Q_n(\cK^{x,a}(w,r))}\le \frac{48\sqrt{d\log(2L_0A)}}{\sqrt{n}}+\frac{\sqrt{2\log(1/\delta)}}{\sqrt{n}}.
\]
By \Cref{ass:nonsingular}, we get with probability at least $1-\delta$,
\[
\sup_{(w,r)}\abr{\frac{Q(\cK^{x,a}(w,r))}{Q_n(\cK^{x,a}(w,r))}-1}\le \frac{48\sqrt{d\log(2L_0A)}}{\eta\sqrt{n}}+\frac{\sqrt{2\log(1/\delta)}}{\eta\sqrt{n}}.
\]


Recalling that $\cU_{\cD}=\sum_{j=1}^{n}\int_{\Omega}\cK^{x_j,a_j}(w,r)dm(s)d\nu(r)$, our bound in \Cref{cor:reg_fixed_design} yields with probability at least $1-\delta$, the following bound holds that
$$||\hat{\theta}_{\cD}-\theta^*||_{\cU_{\cD}}\le\cE_{\delta}(n).$$
We use $\cU^*$ to denote the integral operator induced by kernel $n\int_{\cX,\cA}\int_{S}\phi(x,a,w,t)\phi(x,a,r,t)dm(s)dQ_{x,a}$ and $\cU_{\cD}$ is its empirical counterpart. Therefore, for any $\theta$, we have that
\begin{align*}
    \inner{\theta}{\cU^*(\theta)}&=\int_{\Omega}\theta(w)\int_{\Omega}n\int_{\cX,\cA}\int_{S}\theta(r)\phi(x,a,w,t)\phi(x,a,r,t)dm(s)dQ_{x,a}d\nu(r)d\nu(w)\\
    &=n\int_{\Omega}\int_{\Omega}\theta(w)\theta(r)Q(\cK(w,r))d\nu(w)d\nu(r),
\end{align*}
\begin{align*}
    \inner{\theta}{\cU_{\cD}(\theta)}&=\int_{\Omega}\theta(w)\int_{\Omega}\sum_{j=1}^{n}\int_{S}\theta(r)\phi(x_j,a_j,w,s)\phi(x_j,a_j,r,s)dm(s)d\nu(w)d\nu(r)\\
    &=n\int_{\Omega}\int_{\Omega}\theta(w)\theta(r)Q_n(\cK(w,r))d\nu(w)d\nu(r).
\end{align*}
Combining these two equations and the concentration analysis above, we conclude that with probability at least $1-\delta$,
\begin{align*}
    \inner{\theta}{\cU^*(\theta)}&=n\int_{\Omega}\int_{\Omega}\theta(w)\theta(r)Q(\cK(w,r))d\nu(w)d\nu(r)\\
    &=n\int_{\Omega}\int_{\Omega}\theta(w)\theta(r)\cbr{\frac{Q(\cK(w,r))}{Q_n(\cK(w,r))}}Q_n(\cK(w,r))d\nu(w)d\nu(r)\\
    &\le \rbr{1+\frac{48\sqrt{d\log(2L_0A)}+2\sqrt{\log(1/\delta)}}{\eta\sqrt{n}}} n\int_{\Omega}\int_{\Omega}\theta(w)\theta(r)Q_n(\cK(w,r))d\nu(w)d\nu(r)\\
    &= \rbr{1+\frac{48\sqrt{d\log(2L_0A)}+2\sqrt{\log(1/\delta)}}{\eta\sqrt{n}}}\inner{\theta}{\cU_{\cD}(\theta)}.
\end{align*}
Replacing the general $\theta$ by $\hat{\theta}_{\cD}-\theta^*$ an using \Cref{cor:reg_fixed_design}, it holds that, with probability at least $1-2\delta$,
\begin{align*}
    ||\hat{\theta}_{\cD}-\theta^*||_{\cU^*}\le\rbr{1+\frac{48\sqrt{d\log(2L_0A)}+2\sqrt{\log(1/\delta)}}{\eta\sqrt{n}}}^{1/2} \cE_{\delta}(n).
\end{align*}
On the other hand, by directly calculating $||\hat{\theta}_{\cD}-\theta^*||^2_{\cU^*}$, we find that
\begin{align*}
    ||\hat{\theta}_{\cD}-\theta^*||^2_{\cU^*}=&\inner{\hat{\theta}_{\cD,\varepsilon}-\theta^*}{\cU^*(\hat{\theta}_{\cD}-\theta^*)}\\
    =&n\int_{\cX,\cA}\int_{\Omega}\rbr{\hat{\theta}_{\cD}-\theta^*}(w)\int_{\Omega}\rbr{\hat{\theta}_{\cD}-\theta^*}(r)\phi(x,a,w,t)\phi(x,a,r,t)dm(s)d\nu(w)d\nu(r)dQ_{x,a}\\
    =&n\int_{\cX,\cA}\int_{S}\rbr{\int_{\Omega}\rbr{\hat{\theta}_{\cD}-\theta^*}(w)\phi(x,a,w,t)d\nu(w)}\rbr{\int_{\Omega}\rbr{\hat{\theta}_{\cD}-\theta^*}(r)\phi(x,a,r,t)d\nu(r)}dm(s)dQ_{x,a}\\
    =&n\EE_{x,a}\sbr{||\hat{F}_{\cD}(x,a,s)-F^*(x,a,s)||_{L^2(S,m)}^2}.
\end{align*}

Assembling all these parts, we finally get the following inequality,
$$\EE_{x,a}\sbr{||\hat{F}_{\cD}(x,a,s)-F^*(x,a,s)||_{L^2(S,m)}^2}\le \frac{\rbr{1+(48\sqrt{d\log(2L_0A)}+2\sqrt{\log(1/\delta)})/\eta}\cE_{\delta}(n)^2}{n}.$$
By \Cref{lemma:bound_cE_D}, we know that 
$$\cE_{\delta}(n):=2\log(1/\delta)^{1/2}+\rbr{2\sqrt{s_0\log(1+n)}+M}n^{\frac{\gamma}{\gamma+2}}.$$
By denoting $C(d,L_0,\delta,A,\eta)$ as $\rbr{1+(48\sqrt{d\log(2L_0A)}+2\sqrt{\log(1/\delta)})/\eta}$, we finish our proof.
\end{proof}

\section{Proofs in \Cref{sec:alg}}\label{sec:proofs_alg}
\begin{lemma}[\citep{simchi2020bypassing}]\label{lemma:general_guarantte_IGW}
Assume that we are given an offline regression oracle $\mathsf{RegOff}$ and i.i.d. data $\cD=\cbr{(x_i,a_i,r_i)}_{i=1}^{n}$ where $\EE[r_i|x_i,a_i]=f^*(x_i,a_i)$. With probability at least $1-\delta$, it returns
    $\hat{f}:\cX\times\cA\rightarrow \RR$ such that
    $$\EE_{x,a}\sbr{\rbr{\hat{f}(x,a)-f^*(x,a)}^2}\le\frac{\mathsf{Est}_{\delta}(n)}{n}$$
for some number $\mathsf{Est}_{\delta}(n)$. Then, define epoch schedule $\xi_m=2^m$ and exploration parameter $\varsigma_m=\frac{1}{2}\sqrt{K/\mathsf{Est}_{\delta/m^2}(\xi_{m-1})}$. For any $t$, let $m(t)$ be the number of epochs that round $t$ lies in. For any $T$ large enough, with probability at least $1-\delta$, the regret of \Cref{alg:IGW_pseudo_code} after $T$ rounds is at most 
$$\cO\rbr{\sqrt{K}\sum_{m=2}^{m(T)}\sqrt{\frac{\mathsf{Est}_{\delta/(2m^2)}(\xi_{m-1}-\xi_{m-2})}{\xi_{m-1}-\xi_{m-2}}}(\xi_m-\xi_{m-1})}.$$
\end{lemma}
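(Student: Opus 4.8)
The plan is to prove the bound \emph{epoch by epoch}, reducing the whole argument to two ingredients: a pointwise \emph{inverse-gap-weighting (IGW) regret decomposition}, which controls the true regret of $p_t$ against $f^*$ by an explicit exploration cost plus the mean-squared error of the deployed predictor, and an \emph{on-policy/off-policy transfer} step, which bounds that mean-squared error---measured under the policy actually run in epoch $m$---by the oracle guarantee, which is stated for the distribution that \emph{generated} the training data in epoch $m-1$. With these two facts, the exploration parameter $\varsigma_m$ is tuned by AM--GM and the resulting per-epoch regrets are summed. Throughout I fix a high-probability event and reference $\xi_m=2^m$, $n_{m-1}:=\xi_{m-1}-\xi_{m-2}$, and the estimator $\hat f_m$ returned by the oracle at the start of epoch $m$.

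First I would establish the pointwise IGW inequality. Fix epoch $m$ and a context $x$, write $\hat a=\argmax_a \hat f_m(x,a)$ and let $\hat\Delta(a)=\hat f_m(x,\hat a)-\hat f_m(x,a)\ge 0$, so that by construction $p_t(a)\rbr{K+\varsigma_m\hat\Delta(a)}=1$ for every $a\neq\hat a$. Writing the instantaneous regret as $\mathrm{reg}(x)=f^*(x,a^*)-\sum_a p_t(a)f^*(x,a)$, I would insert $\pm\hat f_m$ to split it into a gap term and an estimation term, then use the defining relation $\varsigma_m p_t(a)\hat\Delta(a)=1-Kp_t(a)$ together with $2\sqrt{uv}\le u+v$ to absorb the gaps; this is the standard Abe--Long/Foster--Rakhlin calculation and yields, pointwise,
\[
\mathrm{reg}(x)\;\le\;\frac{2K}{\varsigma_m}+\frac{\varsigma_m}{2}\,\EE_{a\sim p_t}\sbr{\rbr{\hat f_m(x,a)-f^*(x,a)}^2}.
\]
Taking expectation over $x\sim Q_{\cX}$, the expected per-round regret in epoch $m$ is at most $\frac{2K}{\varsigma_m}+\frac{\varsigma_m}{2}\,\mathsf{err}_m$, where $\mathsf{err}_m:=\EE_x\EE_{a\sim p_m}\sbr{(\hat f_m-f^*)^2}$ is the mean-squared error under the deployed policy.

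The main obstacle is bounding $\mathsf{err}_m$. The oracle controls $\EE_{(x,a)\sim\mu_{m-1}}[(\hat f_m-f^*)^2]\le \mathsf{Est}_{\delta/m^2}(n_{m-1})/n_{m-1}$, where $\mu_{m-1}(x,a)=Q_{\cX}(x)\,p_{m-1}(a\mid x)$ is the law of the \emph{training} data, whereas $\mathsf{err}_m$ is taken under $p_m\neq p_{m-1}$. A naive importance ratio $p_m/p_{m-1}$ is not uniformly bounded, so the transfer cannot be done by brute-force change of measure; instead I would exploit the exploration floor built into the previous epoch's IGW policy, namely $p_{m-1}(a\mid x)\ge \rbr{K+\varsigma_{m-1}\hat\Delta_{m-1}(a)}^{-1}$, and carry an inductive bound on the regret across epochs, following Simchi-Levi--Xu. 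The self-bounding structure of IGW then shows that the on-policy error $\mathsf{err}_m$ is, up to a universal constant, dominated by the off-policy error under $\mu_{m-1}$ plus a slack that is already accounted for by the exploration term $2K/\varsigma_m$. Making this decoupling rigorous---rather than divergent---is the technical heart of the proof and the reason the epoch structure and the specific IGW weights are needed.

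Finally I would optimize and aggregate. Choosing $\varsigma_m$ to balance the two terms, i.e.\ $\varsigma_m\asymp\sqrt{K/\mathsf{err}_m}$, makes the expected per-round regret in epoch $m$ equal to $\cO\rbr{\sqrt{K\,\mathsf{err}_m}}=\cO\Big(\sqrt{K\,\mathsf{Est}_{\delta/(2m^2)}(n_{m-1})/n_{m-1}}\Big)$ after substituting the transferred oracle bound. Because within epoch $m$ the policy $p_m$ is frozen (it depends only on epoch $m-1$), the realized per-round regrets are conditionally i.i.d., so a Freedman/Azuma concentration upgrades the expected per-round bound to the realized regret at the cost of lower-order terms. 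Multiplying the per-round regret by the epoch length $\xi_m-\xi_{m-1}$ and summing over $m=2,\dots,m(T)$ reproduces exactly
\[
\cO\!\rbr{\sqrt{K}\sum_{m=2}^{m(T)}\sqrt{\frac{\mathsf{Est}_{\delta/(2m^2)}(\xi_{m-1}-\xi_{m-2})}{\xi_{m-1}-\xi_{m-2}}}\,(\xi_m-\xi_{m-1})},
\]
and splitting the confidence as $\delta/(2m^2)$ per epoch makes a union bound over the $m(T)=\cO(\log T)$ epochs keep the total failure probability below $\delta$; the ``$T$ large enough'' hypothesis ensures the short initial epochs, where exploration dominates, contribute only lower-order terms.
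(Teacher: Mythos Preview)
The paper does not supply its own proof of this lemma: it is quoted directly from \citet{simchi2020bypassing} and invoked as a black box in the proof of \Cref{thm:theoretical_guarantee}. Your outline---the per-round IGW regret decomposition, the implicit-exploration transfer of the squared error from the deployed policy $p_m$ back to the training distribution $\mu_{m-1}$, the AM--GM tuning of $\varsigma_m$, and the $\delta/(2m^2)$ union bound over epochs---is exactly the argument in that reference, so there is nothing further to compare against within the present paper.
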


\begin{proof}[Proof of \Cref{thm:theoretical_guarantee}]
   
   From our regression oracle, we have that
   \begin{align*}
       \frac{\mathsf{Est}_{\delta}(n)}{n}&=\frac{L^2C(d,L_0,\delta/2,A,\eta)\cE_{\delta/2}(n)^2}{n}
   \end{align*}
   By applying \Cref{lemma:general_guarantte_IGW} and plugging in the choices that $\xi_m=2^m$, we get
   \begin{align}\label{equa:bound_regret_1}
       &\text{Reg}(T)\nonn\\
       \le& \cO\rbr{\sqrt{K}\sum_{m=2}^{m(T)}\sqrt{\frac{\mathsf{Est}_{\delta/(2m^2)}(\xi_{m-1}-\xi_{m-2})}{\xi_{m-1}-\xi_{m-2}}}(\xi_m-\xi_{m-1})}+\cO(1)\nonn\\
       \le&\cO\rbr{\sqrt{K}L\sum_{m=2}^{m(T)}\sqrt{C(d,L_0,\delta/4m^2,A,\eta)}\frac{\cE_{\delta/(4m^2)}(\xi_{m-1}-\xi_{m-2})}{\sqrt{\xi_{m-1}-\xi_{m-2}}}(\xi_{m}-\xi_{m-1})+1}\nonn\\
       \le&\cO\rbr{\sqrt{K}L\sum_{m=2}^{m(T)}\sqrt{C(d,L_0,\delta/4m^2,A,\eta)}\frac{2\log(4m^2/\delta)^{1/2}+\rbr{2\sqrt{s_0\log(1+2^{m-2})}+M}(2^{m-2})^{\frac{\gamma}{\gamma+2}}}{2^{m/2-1}}\cdot2^{m-1}+1}\nonn\\
   \end{align}
   By our epoch schedule, it holds that $m(T)\le \ceil{\log_2(T)}$. By \Cref{thm:reg_random_design}, we know 
   \[
   C(d,L_0,\delta/4m^2,A,\eta)=\rbr{1+(48\sqrt{d\log(2L_0A)}+2\sqrt{\log(4m^2/\delta)})/\eta}\le \frac{192\sqrt{d\log(2L_0A)}}{\eta}\sqrt{2\log(m/\delta)}.
   \]
   Thus,
   \[
   C(d,L_0,\delta/4m^2,A,\eta)\le\cO\rbr{\frac{\sqrt{d\log(2L_0A)}}{\eta}\sqrt{\log(m/\delta)}}.
   \]
Therefore, for \Cref{equa:bound_regret_1}, we further have
   \begin{align*}
       \text{Reg}(T)
       \le& \cO\rbr{\frac{L\sqrt{dK\log(2L_0A)}}{\eta}\sqrt{\log(m/\delta)}\sum_{m=2}^{m(T)}\rbr{\rbr{\log(\log T/\delta)}^{1/2}+\rbr{\sqrt{s_0\log(1+\log T)}+M}2^{\frac{\gamma(m-2)}{(\gamma+2)}}}2^{m/2}}.
   \end{align*}
   Denote the number $\frac{L\sqrt{dK\log(2L_0A)}}{\eta}$ as $\Tilde{C}(K,L,L_0,A,d,\eta)$. We have,
   \begin{align*}
       &\cO\rbr{\frac{L\sqrt{dK\log(2L_0A)}}{\eta}\sqrt{\log(m/\delta)}\sum_{m=2}^{m(T)}\rbr{\rbr{\log(\log T/\delta)}^{1/2}+\rbr{\sqrt{s_0\log(1+T)}+M}2^{\frac{\gamma(m-2)}{(\gamma+2)}}}2^{m/2}}\\
       \le&\Tilde{C}(K,L,L_0,A,d,\eta)\cO\rbr{\log(\log T/\delta)\sum_{m=2}^{m(T)}2^{m/2}+(\log\log T/\delta)^{1/2}\sum_{m=2}^{m(T)}\rbr{\sqrt{s_0\log(1+\log T)}+M}2^{\frac{(3\gamma+2)m}{2(\gamma+2)}}}\\
       \le&\Tilde{C}(K,L,L_0,A,d,\eta)\cO\rbr{\log(\log T/\delta)\sqrt{T}+\rbr{s_0^{1/2}+M}(\log\log T/\delta)T^{\frac{3\gamma+2}{2(\gamma+2)}}}\\
       \le& \Tilde{C}(K,L,L_0,A,d,\eta)\rbr{(s_0^{1/2}+M)\vee 1}\cO\rbr{\log(\log T/\delta)\rbr{\sqrt{T}+T^{\frac{3\gamma+2}{2(\gamma+2)}}}}.
   \end{align*}
   \end{proof}
\begin{proof}[Proof of \Cref{thm:utility_bound}]
       By the $L$-Lipschitz continuity of $\cT$, we have
\begin{align*}
\EE_{x,a}\sbr{\rbr{\cT(\hat{F}_{\cD}(x,a,s))-\cT(F^*(x,a,s))}^2}&\le\EE_{x,a}\sbr{L^2||\hat{F}_{\cD}(x,a,s)-F^*(x,a,s)||_{L^2(S,m)}^2}\\
\end{align*}
Then we apply \Cref{thm:reg_random_design} and replace $2\delta$ by $\delta
$ in the claim of \Cref{thm:reg_random_design}. Thus, with probability $1-\delta$, 
\[
\EE_{(x,a)\sim Q}\sbr{||\hat{F}_{\cD}(x,a,s)-F^*(x,a,s)||_{L^2(S,m)}^2}\le \frac{C(d,L_0,\delta/2,A,\eta)\cE_{\delta/2}(n)^2}{n}.
\]
Combining these two inequalities together, we shall get our result.
\end{proof}

\section{Computation}\label{sec:computation}
Given the theorems and properties we have established, a natural and fundamental question arises: how can we numerically compute the eigenvalues and eigenfunctions of an infinite-dimensional operator?  In fact, within the applied functional analysis community, there is a wide range of methods for numerically solving the eigenvalues and eigenfunctions of integral operators in infinite-dimensional spaces (for example, see \citet{chatelin2011spectral,ray2013numerical, chatelin1981spectral, kohn1972improvement, ahues2001spectral,panigrahi2017galerkin} for reference), including the Galerkin method, the Rayleigh-Ritz method, quadrature approximation methods and so on. In this section, we will briefly introduce one method based on degenerate kernels for calculating the eigenvalues and eigenfunctions as an illustration. Please see \citet{gnaneshwar2007degenerate} for full details.

We assume that $\Omega=[0,1]$ and an integral operator $C$ is defined as $C[x](s)=\int_{0}^{1}k(s,t)x(t)dt$. 
The basic idea of \Cref{alg:degenerate_kernel_pseudo_code} is to construct a finite-dimensional integral kernel $k_{N_h}$ to approximate $k$ holding the property that the differences between their eigenvalues and eigenfunctions are small enough to be ignored. Therefore, to solve the eigenvalues and eigenfunctions for the kernel 
$k$, we can instead solve them for 
$k_{N_h}$, which is finite-dimensional and can be readily transformed into a matrix eigenvalue problem.

We present the pseudo-code in \Cref{alg:degenerate_kernel_pseudo_code} and omit the theoretical approximation analysis here. Please refer to \citet{gnaneshwar2007degenerate} for concrete theoretical guarantees.

\begin{algorithm}[ht]
\caption{Degenerate Kernel Method}\label{alg:degenerate_kernel_pseudo_code}
    \begin{algorithmic}
       \Require partition number $n$, kernel $k(s,t)$, degree number $r$ and corresponding Legendre polynomials $L_r(t)=\frac{d^r}{dt^r}(t^2-1)^r$.
       \State Partition $[0,1]$ into $n$ intervals, $0=x_0<x_1<x_2<\cdots<x_{n-1}<x_n=1$, $I_k=[x_{k-1},x_k]$, $h=\max|I_k|$. 
       Set $f_k(t)=\frac{1-t}{2}x_{k-1}+\frac{1+k}{2}x_k,-1\le t\le 1$.
       \State Define $\cP_r$ as the space of polynomials of degree $\le r-1$. Denote $N_h=nr$.
       \State Define piecewise polynomial space associated with $\cP_r$ $$\cS_h^r=\cbr{u:[0,1]\rightarrow \RR:u\big|_{I_k}\in\cP_r,1\le k\le n}.$$
       \State Find the Gauss point set (zeros set) of $L_r(t)$ in $[0,1]$, $B_r=\cbr{y_1\cdots,y_r}$.
       \State Let $A=\cup_{k=1}^{n}f_k(B_r)=\cbr{\omega_{i,k}=f_k(y_i):1\in[r],k\in[n]}$ and $l_i(x)$ be the Lagrange polynomials of degree $r-1$ with respect to $y_1\cdots,y_r$ such that $l_i(y_j)=\delta_{ij}$.
       \State Define $\rho_{jp}(x)=\begin{cases}
      l_j(f_p^{-1}(x))& x\in[x_{p-1},x_p]\\
      0 & \text{otherwise}.
    \end{cases}$
    \State Notice $\rho_{jp}\in\cS_h^r$ and $\rho_{jp}(\omega_{ik})=\delta_{ji}\delta_{kp},\ \text{for }i,j\in[r],k,p\in[n]$.
    Set $t_{(k-1)r+j}=\omega_{jk}$, $z_{(k-1)r+j}=\rho_{jk},\ \text{for }k,p\in[n]$.
    \State Let $\cS_h^r\otimes\cS_h^r$ be the tensor product space of $\cS_h^r$ with dimension $N_h=nr$.
    \State Define degenerate kernel
    $$k_{N_h}(s,t)=\sum_{i=1}^{N_h}\sum_{j=1}^{N_h}k(\omega_i,\omega_j)z_i(s)z_j(t),$$ which induces a degenerate kernel operator 
    $$C_{N_h}(x)(s)=\int_{0}^{1}\sum_{i=1}^{N_h}\sum_{j=1}^{N_h}k(\omega_i,\omega_j)z_i(s)z_j(t)x(t)dt.$$
    \State Solve the eigenvalue problem for the $N_h$-dimensional integral kernel $k_{N_h}$.
    \end{algorithmic}
\end{algorithm}

We now provide a method to solve the eigenvalue problem for the finite-dimensional kernel $k_{N_h}$ for completeness. 

\paragraph{Solve eigenvalue problem for finite-dimensional $k_{N_h}$:}Consider the following formula that
\begin{equation}\label{equa:computation_1}
    \lambda\cdot g(s)=\int_{0}^{1}\sum_{i=1}^{N_h}\sum_{j=1}^{N_h}k(\omega_i,\omega_j)z_i(s)z_j(t)g(t)dt.
\end{equation}
Here, $\lambda$ stands for the unknown eigenvalue and $g$ is the corresponding eigenfunction.
Define $$c_i:=\frac{1}{\lambda}\int_{0}^{1}\sum_{j=1}^{N_h}k(\omega_i,\omega_j)z_j(t)g(t)dt,\ i\in[N_h],$$ and $\bc=\rbr{c_1,c_2,\cdots,c_{N_h}}$ is an unknown vector in $\RR^{N_h}$ which we will solve for. Then, we can rewrite \Cref{equa:computation_1} as
\begin{equation}\label{equa:computation_2}
    g(s)=\sum_{i=1}^{N_h}c_iz_j(s).
\end{equation}
Plugging \Cref{equa:computation_2} back to the definition of $c_i$, it holds that
\begin{equation}\label{equa:computation_3}
    c_i=\frac{1}{\lambda}\int_{0}^{1}\sum_{j=1}^{N_h}k(\omega_i,\omega_j)z_j(t)\sum_{k=1}^{N_h}c_kz_k(t)dt=\frac{1}{\lambda}\sum_{j=1}^{N_h}\sum_{k=1}^{N_h}c_kk(\omega_i,\omega_j)\int_{0}^{1}z_j(t)z_k(t)dt,\ i\in[N_h],
\end{equation}
which is equivalent to
\begin{equation}\label{equa:computation_4}
    \lambda \cdot c_i=\int_{0}^{1}\sum_{j=1}^{N_h}k(\omega_i,\omega_j)z_j(t)\sum_{k=1}^{N_h}c_kz_k(t)dt=\sum_{j=1}^{N_h}\sum_{k=1}^{N_h}c_kk(\omega_i,\omega_j)\int_{0}^{1}z_j(t)z_k(t)dt,\ i\in[N_h].
\end{equation}
Note that within the implementation of \Cref{alg:degenerate_kernel_pseudo_code}, both $k(\omega_i,\omega_j)$ and $z_j(t),z_k(t)$ are known to us. Besides, the right hand side of \Cref{equa:computation_4} is linear with respect to $\bc$.

Define $b_{ik}=\sum_{j=1}^{N_h}k(\omega_i,\omega_j)\int_{0}^{1}z_{j}(t)z_k(t)dt$. Then, \Cref{equa:computation_4} is equivalent to 
\[
\lambda\cdot\begin{pmatrix}c_1\\
c_2\\
\vdots\\
c_{N_h}    
\end{pmatrix}=\begin{pmatrix}
    b_{11}& b_{12}&\cdots&b_{1N_{h}}\\
    b_{21}& b_{22}&\cdots&b_{2N_{h}}\\
    \vdots&\vdots&\vdots&\vdots\\
    b_{N_{h}1}& b_{N_{h}2}&\cdots&b_{N_{h}N_{h}}\\
\end{pmatrix}\cdot\begin{pmatrix}c_1\\
c_2\\
\vdots\\
c_{N_h}    
\end{pmatrix}.
\]

\Cref{equa:computation_4} finally turns out to be a matrix eigenvalue problem with unknown eigenvector $\bc=(c_1,c_2\cdots,c_{N_h})^{T}$ and eigenvalue $\lambda$. We could solve this matrix eigenvalue problem to acquire $(c_1,c_2\cdots,c_{N_h})$ and eigenvalue $\lambda$. 
Eventually, we derive the eigenfunction associated with $\lambda$ for kernel $k_{N_h}$ using \Cref{equa:computation_2}, i.e., 
$g(s)=\sum_{i=1}^{N_h}c_iz_i(s).$

\section{Auxiliary Results}\label{sec:auxiliary}
\begin{definition}[\citet{gohberg2012traces}]
    Let $A$ be a compact operator in Hilbert space $H$ and let $\lambda_1(A^*A)\ge \lambda_2(A^*A)\ge\cdots$
    be the sequence of non-zero eigenvalues of $A^*A$. $A^*$ is the adjoint operator of $A$. Then the $j$ th singular value of $A$ $s_j(A)$ is defined as
    $$s_j(A):=(\lambda_j(A^*A))^{1/2}.$$
\end{definition}
\begin{property}
    For any self-adjoint positive Hilbert-Schmidt integral operator $\cU$, it holds that
    $$\lambda_j(\cU)=s_j(\cU)\ \text{for }\forall\ j.$$
\end{property}
\begin{proof}
    We denote the integral kernel of $\cU$ as $\cK$. Because $\cU$ is self-adjoint positive Hilbert-Schmidt integral operator, then for $\forall\ \theta$,
    $$\cU(\theta)=\sum_{i=1}^{\infty}\lambda_i(\cU)\inner{\theta}{e_i}e_i.$$
    Therefore, 
    $$\cU(\cU(\theta))=\sum_{i=1}^{\infty}\lambda_i^2\inner{\theta}{e_i}e_i.$$
    Because $\cU^*=\cU$, we finish the proof.
\end{proof}
\begin{theorem}[Corollary 3.6 in~\citet{gohberg2012traces}]
    If $A$ and $B$ are compact operators on the Hilbert space $H$, then for any $n$,
    $$\sum_{j=1}^{n}s_j(A+B)\le\sum_{j=1}^{n}s_j(A)+\sum_{j=1}^{n}s_j(B).$$
\end{theorem}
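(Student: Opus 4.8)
The plan is to reduce the inequality to the \emph{Ky Fan maximum principle}, which expresses each partial sum $\sum_{j=1}^{n}s_j(\cdot)$ as the supremum of a bilinear functional that is manifestly additive in the operator. First I would record that $A$, $B$ and $A+B$, being compact, each admit a Schmidt (singular value) decomposition: for a fixed compact $T$ write $Tx=\sum_k s_k(T)\inner{x}{\phi_k}\psi_k$ with $\{\phi_k\}$, $\{\psi_k\}$ orthonormal and $s_1(T)\ge s_2(T)\ge\cdots$. The target is the identity
\[
\sum_{j=1}^{n}s_j(T)=\sup\left\{\, \mathrm{Re}\sum_{i=1}^{n}\inner{Tu_i}{v_i} \;:\; \{u_i\}_{i=1}^{n},\, \{v_i\}_{i=1}^{n} \text{ orthonormal in } H \,\right\},
\]
valid for every $n$ and every compact $T$.

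The inequality $\sup\ge\sum_{j=1}^{n}s_j(T)$ is witnessed by the choice $u_i=\phi_i$, $v_i=\psi_i$, for which $\inner{Tu_i}{v_i}=s_i(T)$. The reverse inequality $\sup\le\sum_{j=1}^{n}s_j(T)$ is the substantive step: for arbitrary orthonormal tuples, expand $\inner{Tu_i}{v_i}=\sum_k s_k(T)\,a_{ik}\overline{b_{ik}}$ with $a_{ik}=\inner{u_i}{\phi_k}$ and $b_{ik}=\inner{v_i}{\psi_k}$, and set $\mu_k:=\bigl(\sum_i|a_{ik}|^2\bigr)^{1/2}\bigl(\sum_i|b_{ik}|^2\bigr)^{1/2}$. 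Bessel's inequality gives $\sum_i|a_{ik}|^2\le1$ and $\sum_k|a_{ik}|^2\le1$ (and likewise for $b$), hence $0\le\mu_k\le1$ and, after summing the arithmetic--geometric mean bound $\mu_k\le\frac{1}{2}\bigl(\sum_i|a_{ik}|^2+\sum_i|b_{ik}|^2\bigr)$, also $\sum_k\mu_k\le n$. Cauchy--Schwarz in the index $i$ then yields $\bigl|\mathrm{Re}\sum_i\inner{Tu_i}{v_i}\bigr|\le\sum_k s_k(T)\mu_k$, and maximizing this decreasing-weight linear functional subject to $0\le\mu_k\le1$ and $\sum_k\mu_k\le n$ puts the entire budget on the $n$ largest singular values, giving exactly $\sum_{j=1}^{n}s_j(T)$.

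Granting the principle, the conclusion is one line. For any orthonormal $n$-tuples $\{u_i\}$, $\{v_i\}$, linearity gives $\mathrm{Re}\sum_{i=1}^{n}\inner{(A+B)u_i}{v_i}=\mathrm{Re}\sum_{i=1}^{n}\inner{Au_i}{v_i}+\mathrm{Re}\sum_{i=1}^{n}\inner{Bu_i}{v_i}$, and by the upper-bound estimate $\mathrm{Re}\sum_i\inner{Tu_i}{v_i}\le\sum_{j=1}^{n}s_j(T)$ applied to $T=A$ and to $T=B$, the right-hand side is at most $\sum_{j=1}^{n}s_j(A)+\sum_{j=1}^{n}s_j(B)$. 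Taking the supremum over the tuples on the left and invoking the principle once more for $T=A+B$ gives $\sum_{j=1}^{n}s_j(A+B)\le\sum_{j=1}^{n}s_j(A)+\sum_{j=1}^{n}s_j(B)$.

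The main obstacle is the upper-bound half of the Ky Fan principle: one must recognize the array $(\mu_k)$ as doubly substochastic (entries in $[0,1]$, total mass at most $n$) and then observe that a linear functional with a monotonically decreasing weight sequence is maximized by saturating its first $n$ coordinates, which is the only place a genuine (if short) optimization/rearrangement argument is needed. An alternative that avoids the variational principle is the $K$-functional identity $\sum_{j=1}^{n}s_j(T)=\inf\{\|T_1\|_{1}+n\|T_2\|:T=T_1+T_2\}$ together with the triangle inequalities for the trace norm $\|\cdot\|_1$ and the operator norm $\|\cdot\|$; but proving that identity is itself of comparable difficulty, so I would prefer the route above.
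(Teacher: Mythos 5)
The paper does not prove this statement at all: it is imported verbatim as Corollary 3.6 of the cited reference \citep{gohberg2012traces} and used as a black box in the auxiliary-results section. Your proof is correct and is essentially the standard derivation found in that reference: the Ky Fan maximum principle $\sum_{j=1}^{n}s_j(T)=\sup\,\mathrm{Re}\sum_{i=1}^{n}\langle Tu_i,v_i\rangle$ over orthonormal $n$-tuples, whose upper-bound half you establish correctly via the doubly substochastic weights $\mu_k$ (Bessel giving $0\le\mu_k\le1$ and $\sum_k\mu_k\le n$, then the rearrangement step concentrating the budget on the $n$ largest singular values), after which subadditivity of the partial sums follows from additivity of the bilinear functional and taking suprema.
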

\begin{theorem}[Corollary 4.2 in~\citet{gohberg2012traces}]
    Let $A$ and $B$ be compact operators on a Hilbert space $H$, then for any $p>0$ and any $k$,
    $$\sum_{j=1}^{k}s_j^p(AB)\le \sum_{j=1}^{k}s_j^p(A)s_j^p(B),$$
    and
    $$\sum_{j=1}^{k}s_j(AB)\le \sum_{j=1}^{k}s_j(A)s_j(B).$$
\end{theorem}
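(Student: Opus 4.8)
The plan is to reduce both inequalities to the multiplicative Horn--Weyl (log-majorization) inequality for the singular values of a product, and then to pass from that product inequality to the additive inequality for the $p$-th powers; since the second display is exactly the $p=1$ case of the first, it suffices to establish $\sum_{j=1}^{k} s_j^{p}(AB) \le \sum_{j=1}^{k} s_j^{p}(A)\,s_j^{p}(B)$ for every $p>0$ and every $k$.

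\emph{Step 1: the product inequality $\prod_{j=1}^{k} s_j(AB) \le \prod_{j=1}^{k} s_j(A)\,s_j(B)$.} I would prove this through the $k$-fold antisymmetric tensor power. For a compact operator $T$ on $H$, let $\wedge^{k}T$ be the operator on $\wedge^{k}H$ acting by $x_1\wedge\cdots\wedge x_k\mapsto Tx_1\wedge\cdots\wedge Tx_k$. Using the Schmidt decomposition $T=\sum_i s_i(T)\langle\,\cdot\,,e_i\rangle f_i$ with $s_1(T)\ge s_2(T)\ge\cdots$, one checks that $\wedge^{k}T$ has singular values $\{\,s_{i_1}(T)\cdots s_{i_k}(T):i_1<\cdots<i_k\,\}$ (carried by the orthonormal families $\{e_{i_1}\wedge\cdots\wedge e_{i_k}\}$ and $\{f_{i_1}\wedge\cdots\wedge f_{i_k}\}$); in particular $\wedge^{k}T$ is compact and $\|\wedge^{k}T\|=s_1(T)\cdots s_k(T)$. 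Since the functor $\wedge^{k}$ is multiplicative, $\wedge^{k}(AB)=(\wedge^{k}A)(\wedge^{k}B)$, and submultiplicativity of the operator norm gives
\[
\prod_{j=1}^{k}s_j(AB)=\|\wedge^{k}(AB)\|\le\|\wedge^{k}A\|\,\|\wedge^{k}B\|=\left(\prod_{j=1}^{k}s_j(A)\right)\!\left(\prod_{j=1}^{k}s_j(B)\right)=\prod_{j=1}^{k}s_j(A)\,s_j(B).
\]
(An elementary alternative starts from the pointwise bound $s_{m+n-1}(AB)\le s_m(A)\,s_n(B)$, which follows from $s_m(T)=\inf\{\|T-F\|:\mathrm{rank}\,F\le m-1\}$ by writing $AB=(A-F)(B-G)+(AG+FB-FG)$ with $\mathrm{rank}(AG+FB-FG)\le m+n-2$; turning this into the product inequality for every $k$ then needs a further combinatorial step, so the exterior-power route is the cleaner one.)

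\emph{Step 2: from the product inequality to the $p$-th powers.} Set $c_j:=s_j(AB)$ and $d_j:=s_j(A)\,s_j(B)$; both are nonnegative and nonincreasing, and Step 1 states $\prod_{j\le m}c_j\le\prod_{j\le m}d_j$ for all $m$, i.e.\ $(c_j)$ is weakly log-majorized by $(d_j)$. Passing to logarithms, $(\log c_j)$ is weakly majorized by $(\log d_j)$ (both sequences already decreasing), and since $\psi(t):=e^{pt}$ is increasing and convex for $p>0$, the classical fact that weak majorization is preserved, for partial sums, under a coordinatewise increasing convex function yields
\[
\sum_{j\le k}c_j^{p}=\sum_{j\le k}\psi(\log c_j)\le\sum_{j\le k}\psi(\log d_j)=\sum_{j\le k}d_j^{p}=\sum_{j\le k}s_j^{p}(A)\,s_j^{p}(B),
\]
which is the first displayed inequality; $p=1$ gives the second.

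The main obstacle is Step 1, specifically the careful verification in the infinite-dimensional compact setting that $\|\wedge^{k}T\|=\prod_{j\le k}s_j(T)$ --- i.e.\ constructing $\wedge^{k}H$, checking that $\wedge^{k}T$ is well defined, bounded and compact, and that $\wedge^{k}$ is multiplicative. Once that functorial input is available, Step 2 is a routine majorization computation; the elementary route instead avoids the tensor-power construction at the price of a more delicate inequality-chaining argument to reach the full product bound.
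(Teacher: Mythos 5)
The paper does not prove this statement at all: it is imported verbatim as Corollary 4.2 of the cited reference \citet{gohberg2012traces} in the auxiliary-results appendix, so there is no in-paper argument to compare against. Your proposal is a correct reconstruction of the classical Weyl--Horn argument, which is essentially the proof in the cited source: Horn's product inequality $\prod_{j\le k}s_j(AB)\le\prod_{j\le k}s_j(A)s_j(B)$ via the antisymmetric tensor power identity $\|\wedge^k T\|=\prod_{j\le k}s_j(T)$ and multiplicativity of $\wedge^k$, followed by the passage from weak log-majorization to weak majorization of the $p$-th powers using the fact that $t\mapsto e^{pt}$ is increasing and convex (Weyl's lemma / Tomi\'c's theorem). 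Both steps are sound, and the reduction of the second display to the $p=1$ case of the first is correct.

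Two minor points a full write-up should address. First, the majorization step as written takes logarithms of singular values that may vanish: if $s_j(AB)=0$ or $s_j(A)s_j(B)=0$ for some $j\le k$ the sequences $\log c_j$, $\log d_j$ leave $\mathbb{R}$, so you should either truncate at the largest index where all quantities are positive (noting that if $d_j=0$ then the product inequality forces the corresponding tail of the $c$'s to vanish as well) or run the argument with $c_j+\epsilon$, $d_j+\epsilon$ and let $\epsilon\downarrow 0$. Second, in verifying $\|\wedge^k T\|=\prod_{j\le k}s_j(T)$ you should note that the Schmidt vectors $\{e_i\}$ need not span $H$; extending them by an orthonormal basis of $\ker T$ shows $\wedge^k T$ kills every basis wedge containing a kernel vector, so the supremum of the coefficients $s_{i_1}\cdots s_{i_k}$ is indeed attained at $i_1=1,\dots,i_k=k$. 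Neither issue is a genuine gap; both are routine.
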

Now we introduce the functional determinant of a trace class operator. We notice here that the formal definition of functional determinant requires exterior product and tensorization of the Hilbert space $H$. Nonetheless, by Lidskii's Theorem~\citep{simon2005trace}, we are able to achieve an equivalent characterization of it, so we directly use this characterization as a definition for simplicity.
\begin{definition}[Theorem 3.4.7 in~\citet{kostenkotrace}]
For any trace class operator $A$, the functional determinant $\det(I+zA)$ is defined as
$$\det(I+zA):=\prod_{i}(1+z\lambda_i(A)),\ \forall z\in\CC.$$
\end{definition}
\begin{theorem}[Theorem 3.10 in ~\citet{simon2005trace}]\label{det_formula}
    Suppose the integral operator $A$ is defined as $Af(x)=\int_{\Omega}K(x,y)f(y)dy$, where $\Omega$ is compact and $K(x,y)$ is continuous on $\Omega\times\Omega$. Then, it holds that
    $$tr(A)=\int_{\Omega}K(x,x)dx.$$
    In the meanwhile,
    $$\det(I+A)=\sum_{i=0}^{\infty}\frac{\alpha_m}{m!},$$
    where $$\alpha_m=\int_{\Omega}\int_{\Omega}\cdots\int_{\Omega} \det[(K(x_i,x_j))_{1\le i,j\le m}]dx_1dx_2\cdots dx_m,\ \forall\ m\ge 1.$$
    We conventionally set $\alpha_0=1$.
\end{theorem}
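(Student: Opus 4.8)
The plan is to prove the two assertions in turn: first the trace identity $\mathrm{tr}(A)=\int_\Omega K(x,x)\,dx$, and then the Fredholm expansion $\det(I+A)=\sum_{m\ge 0}\alpha_m/m!$, bootstrapping the second from the first by passing to exterior (antisymmetric tensor) powers. Throughout I would work with the definition of the functional determinant already recorded in the excerpt, namely $\det(I+A)=\prod_i(1+\lambda_i(A))$, which is valid by Lidskii's theorem \citep{simon2005trace}, together with the fact that a continuous kernel on the compact set $\Omega$ yields a trace-class operator in our setting; indeed the operators of interest are positive, self-adjoint and Hilbert--Schmidt, so $\sum_i\lambda_i(A)<\infty$.

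For the trace identity I would first treat the positive self-adjoint case, which is the one relevant to $\cL^{x,a}$ and $\cU_{\cD}$. By Mercer's theorem the continuity and positivity of $K$ give the uniformly and absolutely convergent diagonal expansion $K(x,y)=\sum_i\lambda_i(A)e_i(x)\overline{e_i(y)}$ on $\Omega\times\Omega$. Setting $y=x$, integrating over $\Omega$, and exchanging sum and integral (legitimate by uniform convergence) gives $\int_\Omega K(x,x)\,dx=\sum_i\lambda_i(A)\|e_i\|_{L^2}^2=\sum_i\lambda_i(A)=\mathrm{tr}(A)$, the last equality being the definition of the trace together with Lidskii. For a general, not necessarily self-adjoint, continuous kernel one approximates $K$ uniformly by degenerate kernels $K_N(x,y)=\sum_{j\le N}a_j(x)b_j(y)$ via Stone--Weierstrass, for which both sides are elementary finite sums, and then passes to the limit using trace-norm continuity of the trace.

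For the determinant expansion I would expand the absolutely convergent product from the definition into elementary symmetric functions of the eigenvalues, $\det(I+A)=\prod_i(1+\lambda_i(A))=\sum_{m\ge0}e_m(\lambda(A))$, where $e_m(\lambda(A))=\sum_{i_1<\cdots<i_m}\lambda_{i_1}(A)\cdots\lambda_{i_m}(A)$. The key observation is that each $e_m(\lambda(A))$ is a trace: the $m$-th exterior power $\Lambda^m A$, acting on the antisymmetric subspace of $L^2(\Omega^m)$, has eigenvalues exactly $\{\lambda_{i_1}(A)\cdots\lambda_{i_m}(A):i_1<\cdots<i_m\}$, so Lidskii gives $\mathrm{tr}(\Lambda^m A)=e_m(\lambda(A))$. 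It remains to compute this trace as an integral. Realizing $\Lambda^m A$ through $A^{\otimes m}$ composed with the antisymmetrizing projection $P_-$ (whose kernel contributes the signed sum over permutations and the normalizing factor $1/m!$), the operator $A^{\otimes m}P_-$ is an integral operator on the compact set $\Omega^m$ with continuous kernel whose diagonal equals $\tfrac{1}{m!}\det[(K(x_i,x_j))_{1\le i,j\le m}]$. Applying the trace identity of the first part to this operator yields $\mathrm{tr}(\Lambda^m A)=\tfrac{1}{m!}\int_{\Omega^m}\det[(K(x_i,x_j))]\,dx_1\cdots dx_m=\alpha_m/m!$. Summing over $m$ gives the claim, with absolute convergence of $\sum_m\alpha_m/m!$ guaranteed by Hadamard's inequality $|\det[(K(x_i,x_j))]|\le m^{m/2}\|K\|_\infty^m$, since $m^{m/2}/m!\to0$.

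The main obstacle is the second, determinant, part, specifically the clean identification $\mathrm{tr}(\Lambda^m A)=\alpha_m/m!$, where both the determinant structure of the integrand and the combinatorial factor $1/m!$ must emerge correctly from the antisymmetrizer $P_-$; this is the one place the argument is more than bookkeeping, and it is also where the interchange of the infinite sum over $m$ with the integrals has to be justified, via the Hadamard bound and dominated convergence. The trace identity itself is comparatively routine in the positive self-adjoint regime we need, resting on Mercer's theorem and Lidskii, with the general continuous-kernel case reduced to finite-rank kernels by uniform approximation.
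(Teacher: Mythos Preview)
The paper does not prove this statement at all: it is quoted verbatim as Theorem~3.10 of \citet{simon2005trace} in the auxiliary results appendix and is used as a black box in the proof of \Cref{bound_single_L}. So there is no ``paper's own proof'' to compare against.

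Your sketch is essentially the standard argument one finds in Simon's book: Mercer's theorem (or a finite-rank approximation) for the trace formula, followed by the identification $\det(I+A)=\sum_{m\ge0}\mathrm{tr}(\Lambda^m A)$ and the computation of $\mathrm{tr}(\Lambda^m A)$ via the continuous kernel of $A^{\otimes m}P_-$ on $\Omega^m$, with Hadamard's inequality supplying absolute convergence. One small caution: in the general (non-self-adjoint) case, uniform approximation of the kernel by degenerate kernels gives convergence in Hilbert--Schmidt norm but not automatically in trace norm, so the limiting step there needs an extra ingredient (e.g.\ writing $A$ as a product of two Hilbert--Schmidt operators, or invoking the additional regularity that makes $A$ trace class in the first place). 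For the positive self-adjoint operators $\cL^{x,a}$ and $\cU_{\cD}$ actually used in the paper, your Mercer argument is entirely sufficient.
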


Now we are ready to analyze the integral operator $\cL^{x,a}$ and $\cU_{\cD}$ in our paper.
\begin{lemma}\label{bound_single_L}
    For any $x,a$, $\det(1+\cL^{x,a})\le e$.
\end{lemma}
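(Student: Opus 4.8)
The plan is to reduce the statement to the elementary inequality $1+x\le e^{x}$ together with the trace bound already recorded in \Cref{prop:eigendecay_cL}. First I would invoke \Cref{thm:math_property_ourintegral_operator}, which tells us that $\cL^{x,a}$ is a positive, self-adjoint Hilbert-Schmidt integral operator; combined with the first bullet of \Cref{prop:eigendecay_cL} (namely $\sum_{i=1}^{\infty}\lambda_i(\cL^{x,a})=\int_{\Omega}\int_{S}\phi(x,a,w,s)^2\,dm(s)\,d\nu(w)\le 1$), this shows $\cL^{x,a}$ is trace class, so the functional determinant $\det(I+\cL^{x,a})$ is well defined and, by the definition of the functional determinant (\Cref{det_formula} and the preceding definition via Lidskii's theorem), admits the product representation
\[
\det(I+\cL^{x,a})=\prod_{i=1}^{\infty}\bigl(1+\lambda_i(\cL^{x,a})\bigr).
\]

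Next, since $\cL^{x,a}$ is a positive operator, all its eigenvalues satisfy $\lambda_i(\cL^{x,a})\ge 0$, so the elementary bound $1+t\le e^{t}$ valid for $t\ge 0$ applies termwise. Multiplying over $i$ gives
\[
\prod_{i=1}^{\infty}\bigl(1+\lambda_i(\cL^{x,a})\bigr)\le \prod_{i=1}^{\infty}\exp\bigl(\lambda_i(\cL^{x,a})\bigr)=\exp\!\left(\sum_{i=1}^{\infty}\lambda_i(\cL^{x,a})\right).
\]
Finally I would apply the trace bound $\sum_{i=1}^{\infty}\lambda_i(\cL^{x,a})\le 1$ from \Cref{prop:eigendecay_cL} to conclude $\det(I+\cL^{x,a})\le e^{1}=e$, which is the claim.

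There is no real obstacle here; the only point requiring a moment's care is justifying that the infinite product is well defined and equals the functional determinant, which is exactly the trace-class property supplied by \Cref{thm:math_property_ourintegral_operator} plus \Cref{prop:eigendecay_cL}. Everything else is the termwise inequality $1+t\le e^{t}$ and the additivity of the exponent. (The same argument, with the bound $\sum_i\lambda_i(\cU_{\cD})\le n$, would later give $\det(I+\cU_{\cD})\le e^{n}$, so it is worth stating the step in a form that generalizes.)
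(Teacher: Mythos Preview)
Your argument is correct, and it is genuinely different from the route taken in the paper. The paper does not use the product representation at all; instead it invokes the Fredholm expansion in \Cref{det_formula},
\[
\det(I+\cL^{x,a})=\sum_{m=0}^{\infty}\frac{\alpha_m}{m!},\qquad
\alpha_m=\int_{\Omega^m}\det\bigl[(\cK^{x,a}(w_i,w_j))_{i,j=1}^{m}\bigr]\,d\nu^{\otimes m},
\]
and then bounds each $\alpha_m\le 1$ by observing that the $m\times m$ Gram-type matrix has trace at most $m$ (since $0\le\phi\le1$ and $m(S)=\nu(\Omega)=1$), so by AM--GM its determinant is at most $1$; summing gives $\sum_{m\ge 0}1/m!=e$. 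Your approach is shorter and more transparent: it uses only the spectral product formula, the elementary inequality $1+t\le e^{t}$, and the trace bound from \Cref{prop:eigendecay_cL}, and it immediately generalizes (as you note) to $\det(I+\cU_{\cD})\le e^{n}$. The paper's Fredholm-series argument, by contrast, ties the bound to the kernel's pointwise structure and would not extend as cleanly to $\cU_{\cD}$ without redoing the matrix estimate.
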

\begin{proof}[Proof of \Cref{bound_single_L}]

    We apply \Cref{det_formula} to the operator $\cL^{x,a}$. The associated integral kernel is $\cK^{x,a}(w,r)=\int_{S}\phi(x,a,w,t)\phi(x,a,r,t)dm(t)$.
    Therefore, by \Cref{det_formula}, we have
    \begin{align*}
        &(K(w_i,w_j))_{i,j=1}^{m}
        =\begin{pmatrix}
        \int\phi(x,a,w_1,t)^2dt&\int\phi(x,a,w_1,t)\phi(x,a,w_2,t)dt&\cdots&\int\phi(x,a,w_1,t)\phi(x,a,w_m,t)dt\\
        \int\phi(x,a,w_2,t)\phi(x,a,w_1,t)dt&\int\phi(x,a,w_2,t)^2dt&\cdots&\int\phi(x,a,w_2,t)\phi(x,a,w_m,t)dt\\
        \vdots&\vdots&\vdots&\vdots\\
        \int\phi(x,a,w_m,t)\phi(x,a,w_1,t)dt&\int\phi(x,a,w_m,t)\phi(x,a,w_2,t)dt&\cdots&\int\phi(x,a,w_m,t)^2dt\\
    \end{pmatrix}
    \end{align*}
    This is a symmetric matrix and 
    $$\sum_{i=1}^{m}\lambda_i((K(w_i,w_j))_{i,j=1}^{m})=\sum_{i=1}^{m}\int_{S}\phi(x,a,w_i,t)^2dm(t)\le m,$$
    due to $0\le\phi\le1$ and $m(S)=1$.
    Therefore, by the AM-GM inequality,
    $$\rbr{\prod_{i=1}^{m}\lambda_i((K(w_i,w_j))_{i,j=1}^{m})}^{1/m}\le \frac{\sum_{i=1}^{m}\lambda_i((K(w_i,w_j))_{i,j=1}^{m})}{m}\le 1.$$
    We find that
    $$\det[(K(w_i,w_j))_{i,j=1}^{m}]=\prod_{i=1}^{m}\lambda_i((K(w_i,w_j))_{i,j=1}^{m})\le1.$$
    Recalling $\nu(\Omega)=1$, we have that for 
    $\forall\ m\ge1,\ \alpha_m\le \int_{\Omega}\int_{\Omega}\cdots\int_{\Omega}1d\nu(w_1)\cdots d\nu(m)\le 1.$ Thus,
    $$\det(1+\cL^{x,a})=\sum_{i=0}^{\infty}\frac{\alpha_m}{m!}\le \sum_{i=0}^{\infty}\frac{1}{m!}\le e.$$
\end{proof}

\end{document}